\documentclass{article}

\usepackage{microtype}
\usepackage{graphicx}
\usepackage{subcaption}
\usepackage{booktabs} 
\usepackage{multirow}

\usepackage[pagebackref]{hyperref}

\usepackage[accepted]{icml2026}

\usepackage{amsmath}
\usepackage{amssymb}
\usepackage{mathtools}
\usepackage{amsthm}
\usepackage{stmaryrd}

\usepackage{etoolbox}
\usepackage{xurl}
\usepackage{bm}
\usepackage{physics}
\usepackage[ruled,linesnumbered,vlined,norelsize]{algorithm2e}
\usepackage[shortlabels]{enumitem}
\usepackage{accents}
\usepackage{tikz}
\usepackage{xcolor}
\usepackage{xkcdcolors}
\usepackage{tablefootnote}
\usepackage[mathcal]{euscript}
\usepackage[most]{tcolorbox}
\usepackage{float}
\usepackage[capitalize]{cleveref}

\usepackage{titletoc}

\usepackage[textsize=tiny, tickmarkheight=2pt]{todonotes}

\usetikzlibrary{positioning}
\usetikzlibrary{calc}
\usetikzlibrary{arrows.meta}
\usetikzlibrary{external}
\tikzexternaldisable

\theoremstyle{plain}
\newtheorem{theorem}{Theorem}[section]
\newtheorem{proposition}[theorem]{Proposition}
\newtheorem{lemma}[theorem]{Lemma}

\theoremstyle{definition}
\newtheorem{definition}[theorem]{Definition}
\newtheorem{assumption}[theorem]{Assumption}

\theoremstyle{remark}

\AtBeginEnvironment{definition}{\pushQED{\qed}}
\AtEndEnvironment{definition}{\popQED}
\AtBeginEnvironment{assumption}{\pushQED{\qed}}
\AtEndEnvironment{assumption}{\popQED}
\AtBeginEnvironment{example}{\pushQED{\qed}}
\AtEndEnvironment{example}{\popQED}

\DeclareMathSymbol{\shortminus}{\mathbin}{AMSa}{"39}
\DeclareMathOperator*{\argmax}{arg\,max}

\DeclareMathOperator*{\Ex}{\mathbb{E}}
\allowdisplaybreaks

\SetCommentSty{commentfont}
\DontPrintSemicolon

\let\norm\undefined
\DeclarePairedDelimiter\norm{\lVert}{\rVert}

\makeatletter 
\renewcommand{\nllabel}[1]{{\let\@currentlabel\algocf@currentlabel\let\@currentcounter\algocf@currentcounter\label{#1}}}
\renewcommand{\algocf@nl@sethref}[1]{\renewcommand{\theHAlgoLine}{\thealgocfproc.#1}\hyper@refstepcounter{AlgoLine}\gdef\algocf@currentlabel{#1}\gdef\algocf@currentcounter{AlgoLine}}
\makeatother

\crefname{equation}{}{}
\crefname{assumption}{Assumption}{Assumptions}
\makeatletter
\AddToHook{cmd/appendix/before}{\def\cref@section@alias{appendix}}
\makeatother
\AddToHook{env/definition/begin}{\crefalias{theorem}{definition}}
\AddToHook{env/lemma/begin}{\crefalias{theorem}{lemma}}
\AddToHook{env/example/begin}{\crefalias{theorem}{example}}
\AddToHook{env/assumption/begin}{\crefalias{theorem}{assumption}}
\AddToHook{env/corollary/begin}{\crefalias{theorem}{corollary}}
\AddToHook{env/proposition/begin}{\crefalias{theorem}{proposition}}

\definecolor{ocol}{HTML}{0ab246}
\hypersetup{
    colorlinks=true,
    linkcolor=ocol!60!black,
    citecolor=ocol!60!black,    
    urlcolor=ocol!60!black,
}

\newlength{\W}
\newlength{\dw}
\newtcolorbox{note}[1][]{
    colback=ocol!5!white,
    colframe=ocol!75!black,
    fonttitle=\small\scshape,
    colbacktitle=ocol!85!black,
    enhanced,
    attach boxed title to top center={yshift=-3mm},
    fontupper=\small\vspace{1mm},
    breakable,
    #1
}

\begin{document}

\twocolumn[
\icmltitle{Commit to the Bit: Reactive Reinforcement Learning Done Right}



\icmlsetsymbol{equal}{*}

\begin{icmlauthorlist}
\icmlauthor{Onno Eberhard}{mpi,ut}
\icmlauthor{Claire Vernade}{utn}
\icmlauthor{Michael Muehlebach}{mpi}
\end{icmlauthorlist}

\icmlaffiliation{mpi}{Max Planck Institute for Intelligent Systems, Tübingen, Germany}
\icmlaffiliation{ut}{University of Tübingen}
\icmlaffiliation{utn}{University of Technology Nuremberg}

\icmlcorrespondingauthor{Onno Eberhard}{oeberhard@tue.mpg.de}

\icmlkeywords{Machine Learning, ICML}

\vskip 0.3in%
]



\printAffiliationsAndNotice{}  

\begin{abstract}
    Reinforcement learning algorithms are commonly analyzed (and designed) under the Markov assumption.
    This is unrealistic, as most environments encountered in practice are either partially observable, or require function approximation that restricts the agent to access non-Markovian state features.
    We consider the problem of learning an optimal reactive policy in a finite environment with deterministic observations (or equivalently, hard state aggregation).
    We introduce a new algorithm, \emph{Committed Q-learning}, and prove almost-sure convergence to the optimal reactive policy under an intuitive assumption we call \emph{rewire-robustness}.
    This assumption is strictly weaker than the $q_\star$-realizability condition used in prior work.
    Our algorithm is a variant of classical Q-learning in which the behavior policy commits to a single action upon entering a feature, and only resamples actions when the observed feature changes.
    A crucial part of our analysis is the introduction of \emph{quasi-Markov} environments.
\end{abstract}
\section{Introduction}\label{sec:intro}
The goal of reinforcement learning (RL; \citealp{sutton2018reinforcement}) is to enable an agent to autonomously learn to act optimally in an unknown environment.
Most RL algorithms are based on principles from dynamic programming, and often involve estimating a \emph{value function}.
The optimal value function $v_\star$ represents the remaining sum of rewards in each state under the optimal policy.
Given $v_\star$, the optimal action in any state can be found by a single-step look-ahead.
This fundamental principle enables RL algorithms to solve challenging control problems, from playing games \citep{mnih2015human,silver2016mastering,vinyals2019grandmaster} to robotics \citep{openai2018learning, Duerr2026}.
All these environments, like most that are relevant in practice, are either very large, such that the value function can only be approximated, or are partially observable.
In both cases, the agent has to act with incomplete information about the environment's state, encoded in a \emph{state feature}.
Despite considerable progress, existing theory for this problem is still very limited \citep{bertsekas1996neuro,agarwal2022reinforcement}.
In this work, we focus on reinforcement learning in partially observable environments with deterministic observations.
This setting is also known as \emph{hard state aggregation}.
While an optimal policy generally requires a memory of past observations \citep{kaelbling-1998-planning}, we restrict ourselves to the problem of learning a \emph{reactive} policy, i.e., one that does not depend on past features.
This policy maps features to distributions over actions that we call \emph{options} \citep{sutton1999between}, as deterministic policies may perform arbitrarily worse than stochastic policies \citep{singh1994learning}.

\begin{figure}
    \begin{subfigure}{\linewidth}
    \centering
    \tikzexternalenable
    \begin{tikzpicture}[line width=0.3mm]
    \draw[color=xkcdLightishBlue, rounded corners, fill=xkcdLightishBlue!10]  (6.3\dw, 4\dw) -| (10.8\dw, -2.3\dw) -| (0.9\dw, 4\dw) -- (5.4\dw, 4\dw);
    \node[color=xkcdLightishBlue] at (5.85\dw, 4\dw) {\small\sffamily\bfseries a};
    \draw[color=xkcdJade, rounded corners, fill=xkcdJade!10] (5.65\dw, 3.3\dw) -| (9.2\dw, -2\dw) -| (1.2\dw, 3.3\dw) -- (4.75\dw, 3.3\dw);
    \node[color=xkcdJade] at (5.2\dw, 3.3\dw) {\small\sffamily\bfseries b};

    \node[draw, fill=white, inner sep=4pt] (-1) at (-2\dw, 0) {};
    \node[draw, fill=white, circle] (0) at (0\dw, 0) {};
    \node[draw, fill=white, circle] (1) at (2\dw, 0) {};
    \node[draw, fill=white, circle] (2) at (4\dw, 0) {};
    \node[inner sep=1pt] (3) at (6\dw, 0) {$\ \cdots$};
    \node[draw, fill=white, circle] (4) at (8\dw, 0) {};
    \node[draw, fill=white, circle] (5) at (10\dw, 0) {};
    \node[draw, fill=white, inner sep=4pt] (6) at (12\dw, 0) {};

    \draw[<-, line width=0.2mm] (-1) -- (0);
    \draw[<->, line width=0.2mm] (0) -- (1);
    \draw[<->, line width=0.2mm] (1) -- (2);
    \draw[<->, line width=0.2mm] (2) -- (3);
    \draw[<->, line width=0.2mm] (3) -- (4);
    \draw[<->, line width=0.2mm] (4) -- (5);
    \draw[->, line width=0.2mm] (5) -- (6);

    \node[anchor=base] at (-3.5\dw, -1.5\dw) {$x$};
    \node[anchor=base] at (-3.5\dw, 1\dw) {$r$};
    \node[anchor=base] at (-3.5\dw, 2.2\dw) {$v_\star$};

    \node[anchor=base] at (-2\dw, -1.5\dw) {$-1$};
    \node[anchor=base] at (0\dw, -1.5\dw) {$0$};
    \node[anchor=base] at (2\dw, -1.5\dw) {$1$};
    \node[anchor=base] at (4\dw, -1.5\dw) {$2$};
    \node[anchor=base] at (6\dw, -1.5\dw) {$\ \cdots$};
    \node[anchor=base] at (8\dw, -1.5\dw) {$k - 1$};
    \node[anchor=base] at (10\dw, -1.5\dw) {$k$};
    \node[anchor=base] at (12\dw, -1.5\dw) {$k + 1$};

    \node[anchor=base] at (-2\dw, 1\dw) {$-1$};
    \node[anchor=base] at (0\dw, 1\dw) {$0$};
    \node[anchor=base] at (2\dw, 1\dw) {$-1$};
    \node[anchor=base] at (4\dw, 1\dw) {$-1$};
    \node[anchor=base] at (6\dw, 1\dw) {$\ \cdots$};
    \node[anchor=base] at (8\dw, 1\dw) {$-1$};
    \node[anchor=base] at (10\dw, 1\dw) {$-1$};
    \node[anchor=base] at (12\dw, 1\dw) {$k$};

    \node[anchor=base] at (-2\dw, 2.2\dw) {$0$};
    \node[anchor=base] at (0\dw, 2.2\dw) {$0$};
    \node[anchor=base] at (2\dw, 2.2\dw) {$1$};
    \node[anchor=base] at (4\dw, 2.2\dw) {$2$};
    \node[anchor=base] at (6\dw, 2.2\dw) {$\ \cdots$};
    \node[anchor=base] at (8\dw, 2.2\dw) {$k - 1$};
    \node[anchor=base] at (10\dw, 2.2\dw) {$k$};
    \node[anchor=base] at (12\dw, 2.2\dw) {$0$};
\end{tikzpicture}
    \tikzexternaldisable
    \phantomsubcaption\label{fig:corridor-a}
    \phantomsubcaption\label{fig:corridor-b}
    \end{subfigure}
    \caption{The corridor environment with states $x$, rewards $r$, and optimal value function $v_\star$. An episode begins in state $x = 0$ and ends after the agent enters either of the square states. \textbf{(a)} The blue bubble defines a state aggregation that turns the MDP into a \emph{quasi-Markov} environment (\cref{sec:quasi}). Here, the feature-value function can be clearly defined (\cref{lem:value}). \textbf{(b)} The green bubble defines a state aggregation that is not quasi-Markov, but \emph{rewire-robust} (\cref{sec:main}). This property is sufficient for recovering the optimal reactive policy with Committed Q-learning (\cref{thm:main}).}
    \label{fig:corridor}
    \vspace{-2mm}
\end{figure}

Consider the corridor environment, shown in \cref{fig:corridor}.
The optimal policy is to go right at every state, yielding a total sum of rewards of $0$, which is preferable to the immediate termination with $-1$ on the left.
Suppose that the agent cannot distinguish between the states in the blue bubble (\cref{fig:corridor-a}), making this environment partially observable.
While this seems a trivial environment, classical value-based RL algorithms such as Q-learning \citep{watkins1992q} will fail if applied directly.
The reason for this is that these methods try to approximate the optimal value function.
However, the states underlying the blue feature have completely different values, hence the value of this feature is ambiguous.
Existing theoretical results for this problem \citep{tsitsiklis1996feature, majeed2018q} thus require \emph{$q_\star$-realizability}, which means that the optimal value function should be constant inside each feature.
In this paper, we show that this assumption is stronger than necessary in many cases, and that a slightly augmented version of Q-learning converges to the optimal reactive policy under a strictly weaker assumption: \emph{rewire-robustness}.

Our key insight is that, despite state aggregation, there exists a value for the blue feature in \cref{fig:corridor-a} that enables dynamic programming to work: the value of the \emph{entrance state}, in this case $v_\star(1)$.
With this value, a look-ahead from $x = 0$ will prefer to enter the corridor, and a look-ahead from a corridor state will similarly prefer to go to the right.
We call environments in which each feature has a unique entrance state (or a unique distribution over entrance states) \emph{quasi-Markov}.
Similar to the value function in a Markov decision process (MDP), the entrance values in quasi-Markov environments can be efficiently computed by dynamic programming.
In particular, Q-learning converges to these values if the behavior policy only samples a new option when the observed feature changes.
We call this augmented algorithm \emph{Committed Q-learning} (\cref{alg:q-commit}).
The ``commitment'' to an option when entering a feature is not restrictive, since the goal of Q-learning is to find a reactive policy that deterministically maps features to options, and all such policies are naturally ``committed.''

Many environments are not quasi-Markov.
For example, consider the green bubble in \cref{fig:corridor-b}.
Here, there are two different entrance states ($x = 1$ and $x = k - 1$) with two different values.
However, this environment is \emph{rewire\nobreakdash-robust}: if we change the connections into the corridor (for example, by switching the two entrance states), the optimal reactive policy is unaffected.
We prove that any rewire-robust environment can be well approximated by a quasi-Markov environment and show that Committed Q-learning converges almost surely to the optimal reactive policy in all rewire-robust environments.
We further show that this condition is strictly weaker than $q_\star$-realizability, which, to the best of our knowledge, is assumed in all prior work on this topic.
Our convergence analysis is based on the recent extension of the Borkar-Meyn theorem \citep{borkar2000ode} to Markovian noise by \citet{liu2025ode}.

\section{Preliminaries}\label{sec:prelim}
We consider the problem of reinforcement learning in a finite episodic partially observable Markov decision process (POMDP) with deterministic observations.
The state space is of the form $\mathcal X' = \mathcal X \cup \{x^\bot\}$, where $\mathcal X$ is a finite set and $x^\bot$ is an absorbing terminal state.
An episode begins at time $t = 0$ in a state $x_0 \in \mathcal X$ selected randomly according to the initial state distribution $p_0 \in \Delta_{\mathcal X}$, such that $\mathbb P\{x_0 = x\} = p_0(x)$.
At each time $t$ before termination, the environment is in a state $x_t \in \mathcal X$, and the agent selects an action $u_t$ from the finite action space $\mathcal U$.
The environment then transitions to the next state $x_{t+1} \in \mathcal X'$ with probability
\begin{equation*}
    \mathbb P\{x_{t + 1} = x' \mid x_t = x, u_t = u\} = (T'_u)_{x', x}\text,
\end{equation*}
where $T_u' \in \Delta_{\mathcal X'}^{\mathcal X'}$ is the extended transition kernel under action $u$.\footnote{This denotes that $(T_u')_{:, x} \in \Delta_{\mathcal X'}$ for each $x \in \mathcal X'$ and $u \in \mathcal U$.}
For nonterminal states $x$ and $x'$, this probability is given by $(T'_u)_{x', x} \doteq (T_u)_{x', x}$, where $T_u \in [0, 1]^{\mathcal X \times \mathcal X}$ is the transition kernel of the environment.
The termination probabilities are $(T'_u)_{x^\bot, x} \doteq 1 - \gamma_{x, u}$, where $\gamma_{x, u} \doteq \sum_{x' \in \mathcal X}(T_u)_{x', x}$.
Finally, the terminal state $x^\bot$ is absorbing, and thus $(T'_u)_{x', x^\bot} = [x' = x^\bot]$, where $[\cdot]$ denotes the Iverson bracket.
We assume, without loss of generality, that each state $x \in \mathcal X$ is reachable with positive probability under some action sequence.

A feature mapping (or observation function) $\varphi: \mathcal X' \to \mathcal Z'$ maps states to a finite feature space $\mathcal Z' = \mathcal Z \cup \{z^\bot\}$, where $\varphi(x) = z^\bot$ if and only if $x = x^\bot$.
This mapping defines a partition of the state space as $\mathcal X = \dot\bigcup_z \mathcal X_z$, where $\mathcal X_z \doteq \{x \in \mathcal X \mid \varphi(x) = z\}$ for $z \in \mathcal Z$.
It is often convenient to write $\varphi$ as a matrix $\Phi \in \{0, 1\}^{\mathcal Z \times \mathcal X}$, where $\Phi_{z, x} \doteq \varphi_z(x) \doteq [\varphi(x) = z]$.
At time $t$, the agent only observes the feature $z_t \doteq \varphi(x_t)$, which makes this environment partially observable.
Given a reward function $r: \mathcal Z' \to \mathbb R$, with $r(z^\bot) \doteq 0$ and a finite set of options $\Omega \subset \Delta_{\mathcal U}$, we can define the value of a reactive policy $\pi: \mathcal Z \to \Omega$ in a state $x \in \mathcal X$ as the expected sum of rewards in an episode starting in $x$:
\begin{equation*}
    v_\pi(x) \doteq \mathbb E_\pi\biggl[\sum_{t=1}^H r(z_{t}) \bigm\vert x_0 = x\biggr]\text,
\end{equation*}
where the notation $\mathbb E_\pi$ (or $\mathbb P_\pi$) indicates that $u_t \sim \pi(z_t)$, and where $H$ is the (random) termination time.
Our goal is to learn a policy $\pi$ that maximizes $J(\pi) \doteq p_0^\top v_\pi$.

\paragraph{Notation.}
The following notation is used frequently throughout the article.
The transition kernel under an option $\omega \in \Omega$ is denoted as $T_\omega \doteq \sum_{u \in \mathcal U} \omega(u) T_u$.
Similarly, for $x \in \mathcal X$ and $\omega \in \Omega$, we write $\gamma_{x, \omega} \doteq \sum_{x' \in \mathcal X}(T_\omega)_{x', x}$.
Given a feature $z \in \mathcal Z$, we define $\Pi_z \doteq \operatorname{diag}{\varphi_z}$ and $\Pi_z^\bot \doteq I - \Pi_z$.
These are the projection matrices in $\mathbb R^{\mathcal X}$ onto the subspace corresponding to states with feature $z$ and onto the corresponding orthogonal subspace.
Finally, an environment is denoted as $\mathcal E = (\mathcal M, \varphi)$, where $\mathcal M = (\{T_u\}, p_0, r)$ is the MDP underlying $\mathcal E$.

\section{Main results}\label{sec:main}
\begin{algorithm}[t]
    \caption{Committed Q-learning}\label{alg:q-commit}
    \KwIn{Environment $\mathcal E$, behavior policy $\pi: \mathcal Z \to \Delta_\Omega$, step sizes $(\alpha_t) \subset \mathbb R$}
    $Q(z, \omega) \gets 0$ for all $z \in \mathcal Z'$ and $\omega \in \Omega$\; 
    $x \sim p_0$,\hspace{0.5em} $z \gets \varphi(x)$,\hspace{0.5em} $\omega \sim \pi(z)$\tcp*{Start episode}
    \For{$t \in \mathbb N_0$}{\nllabel{ln:loop}
        \tcp{Sample transition}
        $u \sim \omega$,\hspace{0.5em} $x \sim \mathcal (T_u')_{:, x}$,\hspace{0.5em} $z' \gets \varphi(x)$\;\nllabel{ln:step}
        \tcp{Update Q-values}
        $\Delta \gets r(z') + \max_{\omega' \in \Omega} Q(z', \omega') - Q(z, \omega)$\;\nllabel{ln:delta}
        $Q(z, \omega) \gets Q(z, \omega) + \alpha_t\Delta$\;\nllabel{ln:q}
        \BlankLine
        \uIf(\tcp*[f]{Start new episode}){$z' = z^\bot$}{
            $x \sim p_0$,\hspace{0.5em} $z \gets \varphi(x)$,\hspace{0.5em} $\omega \sim \pi(z)$\;\nllabel{ln:restart}
        }
        \ElseIf{$z' \neq z$ (or if non-committed)}{\nllabel{ln:if}
            $z \gets z'$,\hspace{0.5em} $\omega \sim \pi(z)$\tcp*{Sample new option}
        }
    }
\end{algorithm}

The Committed Q-learning algorithm is shown in \cref{alg:q-commit}.
Apart from the use of more general options instead of actions, the only change from the classical non-committed Q-learning algorithm is the if-statement in \cref{ln:if}.
This change makes sure that a new option is only sampled if the observed feature changes.
The algorithm iteratively updates a variable $Q \in \mathbb R^{\mathcal Z' \times \Omega}$.
We denote the state of this variable at the beginning of the $t$\textsuperscript{th} iteration (\cref{ln:loop}) as $Q_t$.
Our convergence result relies on the following assumptions.

\begin{assumption}\label{ass:proper}
    All policies are \emph{proper}, meaning that all episodes will eventually terminate.
    Formally, given any policy $\pi: \mathcal Z \to \Omega$ and any initial state $x \in \mathcal X$,
    \begin{equation*}
        \mathbb P_\pi\{\exists t: x_t = x^\bot \mid x_0 = x\} = 1.\qedhere
    \end{equation*}
    \vspace{-7mm}
\end{assumption}

This is a very common assumption in the analysis of episodic reinforcement learning and dynamic programming algorithms \citep{bertsekas2012dynamic,tsitsiklis1994asynchronous,jaakkola1993convergence}.
It ensures that the Bellman operator of the episodic MDP $\mathcal M$ underlying the environment $\mathcal E$ is a contraction, which is essential for our convergence result.

\begin{assumption}\label{ass:explore}
    The behavior policy $\pi: \mathcal Z \to \Delta_\Omega$ satisfies $\pi(\omega \mid z) > 0$ for all $z \in \mathcal Z$ and $\omega \in \Omega$.\vspace{-2mm}
\end{assumption}

This assumption is clearly necessary for exploration.

\begin{assumption}\label{ass:alpha}
    The step sizes $(\alpha_t) \subset \mathbb R$ are of the form $\alpha_t = \frac{\tau_1}{t + \tau_2}$ for some constants $\tau_1, \tau_2 > 0$.\vspace{-2mm}
\end{assumption}

This is a common assumption in the analysis of stochastic approximation algorithms of the form we study in this paper \citep{yu2012least,liu2025ode}.
In fact, the result of \citet{liu2025ode} upon which our convergence analysis is based allows us to use a slightly more general form of step size, $\alpha_t = \frac{\tau_1}{(t + \tau_2)^\beta}$ with $\beta \in (0.5, 1]$, if we make the additional assumption that the Markov chain $(\xi_t)$ described in \cref{sec:proof} is aperiodic.

\Cref{ass:proper,ass:explore,ass:alpha} suffice to establish almost-sure convergence of the sequence of iterates $(Q_t)$ in \cref{alg:q-commit} to a point $Q_\star$.
However, additional assumptions are needed to guarantee that the corresponding greedy policy is an optimal reactive policy.
To see why, consider the environment shown in \cref{fig:nrr-a} (p.~\pageref{fig:nrr-a}).
The optimal policy moves right in state $\mathtt{a}$ and up in state $\mathtt{b}$.
A greedy policy selects the action based on the value $v_\star$ of the next feature.
Writing $z \doteq \varphi(\mathtt{c}) = \varphi(\mathtt{d})$, we thus need that $v_\star(z) < 0$ (to go up in $\mathtt{b}$) as well as that $v_\star(z) > v_\star(\mathtt{b})$ (to go right in $\mathtt{a}$).
However, since $v_\star(\mathtt{b}) = 0$, this is a contradiction.
Thus, no single value $v_\star(z)$ is satisfactory.
(The same argument works for Q-values as well.)
To avoid situations like this, it is typically assumed that the optimal value function is \emph{realizable} by the feature mapping $\varphi$ (e.g., \citealp{tsitsiklis1996feature,majeed2018q}).
The formal definition of $q_\star$-realizability is the following.\footnote{
\Cref{app:qstar} compares \cref{def:qstar} to \citep{pmlr-v132-weisz21a}.}\looseness=-1

\begin{definition}\label{def:qstar}
    An environment $\mathcal E = (\mathcal M, \varphi)$ is called \emph{$q_\star$-realizable} if there exists a function $q: \mathcal Z \times \mathcal U \to \mathbb R$ such that $q_\star(x, u) = q\bigl(\varphi(x), u\bigr)$ for all $x \in \mathcal X$ and $u \in \mathcal U$, where $q_\star$ is the optimal action-value function in $\mathcal M$.\vspace{-2mm}
\end{definition}

\begin{figure*}
    \centering
    \begin{subfigure}[b]{0.24\textwidth}
        \centering
        \tikzexternalenable
        \begin{tikzpicture}[line width=0.3mm, x=2.5\dw, y=3\dw]
    \coordinate (3') at (0, 1);
    \coordinate (5') at (2, 1);
    \node[anchor=north east, inner ysep=0pt] at ($(3') + (-10pt, 17pt)$) {\bfseries (a)};
    \draw[color=ocol!75!black, rounded corners, fill=ocol!5]
        ($(3') - (10pt, 10pt)$) rectangle ($(5') + (10pt, 17pt)$);

    \node[draw, fill=white, circle, inner sep=1pt, minimum width=10pt] (0) at (0, 0) {\tiny $\mathtt{a}$};
    \node[draw, fill=white, circle, inner sep=1pt, minimum width=10pt] (1) at (1, 0) {\tiny $\mathtt{b}$};
    \node[draw, fill=white, inner sep=4pt] (2) at (2, 0) {};
    \node[draw, fill=white, circle, inner sep=1pt, minimum width=10pt] (3) at (3') {\tiny $\mathtt{c}$};
    \node[draw, fill=white, circle, inner sep=1pt, minimum width=10pt] (4) at (1, 1) {\tiny $\mathtt{d}$};
    \node[draw, fill=white, circle, inner sep=1pt, minimum width=10pt] (5) at (5') {\tiny $\mathtt{e}$};

    \draw[->, line width=0.2mm] (-0.5, 0) -- (0);
    \draw[<->, line width=0.2mm] (0) -- (1);
    \draw[<->, line width=0.2mm] (1) -- (2);
    \draw[<->, line width=0.2mm] (3) -- (4);
    \draw[<->, line width=0.2mm] (4) -- (5);

    \node at ($(0) - (0, 10pt)$) {\tiny $0$};
    \node at ($(1) - (0, 10pt)$) {\tiny $-3$};
    \node at ($(2) - (0, 10pt)$) {\tiny $+3$};
    \node at ($(3) + (0, 10pt)$) {\tiny $-2$};
    \node at ($(4) + (0, 10pt)$) {\tiny $-2$};
    \node at ($(5) + (0, 10pt)$) {\tiny $-2$};

    \draw[->, line width=0.2mm, out=-120, in=120] (3) to (0);
    \draw[->, line width=0.2mm, out=60, in=-60] (0) to
        node[midway, circle, fill=black, inner sep=1.5pt, draw=white] {} (3);
    \draw[->, line width=0.2mm, out=-120, in=120] (4) to (1);
    \draw[->, line width=0.2mm, out=60, in=-60] (1) to
        node[midway, circle, fill=black, inner sep=1.5pt, draw=white] {} (4);
    \draw[->, line width=0.2mm] (5) -- (2);
\end{tikzpicture}
        \tikzexternaldisable
        \phantomsubcaption\label{fig:rr-a}
    \end{subfigure}
    \hfill
    \begin{subfigure}[b]{0.24\textwidth}
        \centering
        \tikzexternalenable
        \begin{tikzpicture}[line width=0.3mm, x=2.5\dw, y=3\dw]
    \coordinate (3') at (0, 1);
    \coordinate (5') at (2, 1);
    \node[anchor=north east, inner ysep=0pt] at ($(3') + (-10pt, 17pt)$) {\bfseries (b)};
    \draw[color=ocol!75!black, rounded corners, fill=ocol!5]
        ($(3') - (10pt, 10pt)$) rectangle ($(5') + (10pt, 17pt)$);

    \node[draw, fill=white, circle, inner sep=1pt, minimum width=10pt] (0) at (0, 0) {\tiny $\mathtt{a}$};
    \node[draw, fill=white, circle, inner sep=1pt, minimum width=10pt] (1) at (1, 0) {\tiny $\mathtt{b}$};
    \node[draw, fill=white, inner sep=4pt] (2) at (2, 0) {};
    \node[draw, fill=white, circle, inner sep=1pt, minimum width=10pt] (3) at (3') {\tiny $\mathtt{c}$};
    \node[draw, fill=white, circle, inner sep=1pt, minimum width=10pt] (4) at (1, 1) {\tiny $\mathtt{d}$};
    \node[draw, fill=white, circle, inner sep=1pt, minimum width=10pt] (5) at (5') {\tiny $\mathtt{e}$};

    \draw[->, line width=0.2mm] (-0.5, 0) -- (0);
    \draw[<->, line width=0.2mm] (0) -- (1);
    \draw[<->, line width=0.2mm] (1) -- (2);
    \draw[<->, line width=0.2mm] (3) -- (4);
    \draw[<->, line width=0.2mm] (4) -- (5);

    \node at ($(0) - (0, 10pt)$) {\tiny $0$};
    \node at ($(1) - (0, 10pt)$) {\tiny $-3$};
    \node at ($(2) - (0, 10pt)$) {\tiny $+3$};
    \node at ($(3) + (0, 10pt)$) {\tiny $-2$};
    \node at ($(4) + (0, 10pt)$) {\tiny $-2$};
    \node at ($(5) + (0, 10pt)$) {\tiny $-2$};

    \draw[->, line width=0.2mm, out=-120, in=120] (3) to (0);
    \path[->, line width=0.2mm, out=60, in=-60] (0) to
        node[midway, circle, fill=black, inner sep=1.5pt, draw=white] (u1) {} (3);
    \draw[line width=0.2mm, out=60, in=-95] (0) to (u1);
    \draw[->, line width=0.2mm, out=-120, in=120] (4) to (1);
    \path[->, line width=0.2mm, out=60, in=-60] (1) to
        node[midway, circle, fill=black, inner sep=1.5pt, draw=white] (u2) {} (4);
    \draw[line width=0.2mm, out=60, in=-95] (1) to (u2);
    \draw[->, line width=0.2mm] (5) -- (2);
    \draw[->, line width=0.2mm] (u1) -- (4);
    \draw[->, line width=0.2mm] (u2) -- (3);
\end{tikzpicture}
        \tikzexternaldisable
        \phantomsubcaption\label{fig:rr-b}
    \end{subfigure}
    \hfill
    \begin{subfigure}[b]{0.24\textwidth}
        \centering
        \tikzexternalenable
        \begin{tikzpicture}[line width=0.3mm, x=2.5\dw, y=3\dw]
    \coordinate (3') at (0, 1);
    \coordinate (5') at (2, 1);
    \node[anchor=north east, inner ysep=0pt] at ($(3') + (-10pt, 17pt)$) {\bfseries (c)};
    \draw[color=ocol!75!black, rounded corners, fill=ocol!5]
        ($(3') - (10pt, 10pt)$) rectangle ($(5') + (10pt, 17pt)$);

    \node[draw, fill=white, circle, inner sep=1pt, minimum width=10pt] (0) at (0, 0) {\tiny $\mathtt{a}$};
    \node[draw, fill=white, circle, inner sep=1pt, minimum width=10pt] (1) at (1, 0) {\tiny $\mathtt{b}$};
    \node[draw, fill=white, inner sep=4pt] (2) at (2, 0) {};
    \node[draw, fill=white, circle, inner sep=1pt, minimum width=10pt] (3) at (3') {\tiny $\mathtt{c}$};
    \node[draw, fill=white, circle, inner sep=1pt, minimum width=10pt] (4) at (1, 1) {\tiny $\mathtt{d}$};
    \node[draw, fill=white, circle, inner sep=1pt, minimum width=10pt] (5) at (5') {\tiny $\mathtt{e}$};

    \draw[->, line width=0.2mm] (-0.5, 0) -- (0);
    \draw[<->, line width=0.2mm] (0) -- (1);
    \draw[<->, line width=0.2mm] (1) -- (2);
    \draw[<->, line width=0.2mm] (3) -- (4);
    \draw[<->, line width=0.2mm] (4) -- (5);

    \node at ($(0) - (0, 10pt)$) {\tiny $0$};
    \node at ($(1) - (0, 10pt)$) {\tiny $-3$};
    \node at ($(2) - (0, 10pt)$) {\tiny $+3$};
    \node at ($(3) + (0, 10pt)$) {\tiny $-2$};
    \node at ($(4) + (0, 10pt)$) {\tiny $-2$};
    \node at ($(5) + (0, 10pt)$) {\tiny $-2$};

    \draw[->, line width=0.2mm, out=-120, in=120] (3) to (0);
    \draw[->, line width=0.2mm, out=60, in=-60] (0) to
        node[midway, circle, fill=black, inner sep=1.5pt, draw=white] (u1) {}
        node[pos=0.7, inner sep=1.5pt, fill=white, sloped, rotate=90, scale=0.5] {\tiny $0.1$}
        (3);
    \draw[->, line width=0.2mm, out=-120, in=120] (4) to (1);
    \draw[->, line width=0.2mm, out=60, in=-60] (1) to
        node[midway, circle, fill=black, inner sep=1.5pt, draw=white] (u2) {}
        node[pos=0.7, inner sep=1.5pt, fill=white, sloped, rotate=90, scale=0.5] {\tiny $0.9$}
        (4);
    \draw[->, line width=0.2mm] (5) -- (2);
    \draw[->, line width=0.2mm] (u1) to 
        node[pos=0.2, inner sep=1.5pt, fill=white, sloped, scale=0.5] {\tiny $0.9$}
        (4);
    \draw[->, line width=0.2mm] (u2) to
        node[pos=0.15, inner sep=1.5pt, fill=white, sloped, scale=0.5] {\tiny $0.1$}
        (3);
\end{tikzpicture}
        \tikzexternaldisable
        \phantomsubcaption\label{fig:rr-c}
    \end{subfigure}
    \hfill
    \begin{subfigure}[b]{0.24\textwidth}
        \centering
        \tikzexternalenable
        \begin{tikzpicture}[line width=0.3mm, x=2.5\dw, y=3\dw]
    \coordinate (3') at (0, 1);
    \coordinate (5') at (2, 1);
    \node[anchor=north east, inner ysep=0pt] at ($(3') + (-10pt, 17pt)$) {\bfseries (d)};
    \draw[color=ocol!75!black, rounded corners, fill=ocol!5]
        ($(3') - (10pt, 10pt)$) rectangle ($(5') + (10pt, 17pt)$);

    \node[draw, fill=white, circle, inner sep=1pt, minimum width=10pt] (0) at (0, 0) {\tiny $\mathtt{a}$};
    \node[draw, fill=white, circle, inner sep=1pt, minimum width=10pt] (1) at (1, 0) {\tiny $\mathtt{b}$};
    \node[draw, fill=white, inner sep=4pt] (2) at (2, 0) {};
    \node[draw, fill=white, circle, inner sep=1pt, minimum width=10pt] (3) at (3') {\tiny $\mathtt{c}$};
    \node[draw, fill=white, circle, inner sep=1pt, minimum width=10pt] (4) at (1, 1) {\tiny $\mathtt{d}$};
    \node[draw, fill=white, circle, inner sep=1pt, minimum width=10pt] (5) at (5') {\tiny $\mathtt{e}$};

    \draw[->, line width=0.2mm] (-0.5, 0) -- (0);
    \draw[<->, line width=0.2mm] (0) -- (1);
    \draw[<->, line width=0.2mm] (1) -- (2);
    \draw[<->, line width=0.2mm] (3) -- (4);
    \draw[<->, line width=0.2mm] (4) -- (5);

    \node at ($(0) - (0, 10pt)$) {\tiny $0$};
    \node at ($(1) - (0, 10pt)$) {\tiny $-3$};
    \node at ($(2) - (0, 10pt)$) {\tiny $+3$};
    \node at ($(3) + (0, 10pt)$) {\tiny $-2$};
    \node at ($(4) + (0, 10pt)$) {\tiny $-2$};
    \node at ($(5) + (0, 10pt)$) {\tiny $-2$};

    \draw[->, line width=0.2mm, out=-120, in=120] (3) to (0);
    \path[->, line width=0.2mm, out=60, in=-60] (0) to
        node[midway, circle, fill=black, inner sep=1.5pt, draw=white] (u1) {} (3);
    \draw[line width=0.2mm, out=60, in=-95] (0) to (u1);
    \draw[->, line width=0.2mm, out=-120, in=120] (4) to (1);
    \draw[->, line width=0.2mm, out=60, in=-60] (1) to
        node[midway, circle, fill=black, inner sep=1.5pt, draw=white] (u2) {} (4);
    \draw[->, line width=0.2mm] (5) -- (2);
    \draw[->, line width=0.2mm, dashed] (u1) -- (4);
    \draw[->, line width=0.2mm] (u1) -- (5);
    \draw[->, line width=0.2mm, dashed] (u2) -- (5);
\end{tikzpicture}
        \tikzexternaldisable
        \phantomsubcaption\label{fig:rr-d}
    \end{subfigure}
    \caption{\textbf{(a)} A simple environment in which the states $\{\mathtt{c}, \mathtt{d}, \mathtt{e}\}$ are aggregated into a single feature. For each state, the immediate reward is shown. The optimal policy is reactive and selects the direct path from $\mathtt{a}$ via $\mathtt{b}$ to the goal. The total reward of this path is $0$. This environment is not $q_\star$-realizable, since the aggregated states have different optimal values. \textbf{(b)} A rewiring of the environment (a) changes the way a feature is entered, while keeping the set $\{\mathtt{c}, \mathtt{d}\}$ of possible entrance states fixed. Here, the environment is \emph{rewire-robust}: the optimal policy in this rewiring is the same as in the original environment. \textbf{(c)} This rewiring of (a) is stochastic and \emph{quasi-Markov} (see \cref{sec:quasi}), since there is a unique entrance distribution. The \emph{$\pi$-rewiring} introduced in \cref{sec:proof} is of this type, where the probabilities are derived from the on-policy distribution of the behavior policy $\pi$. \textbf{(d)} The solid lines represent a \emph{generalized} rewiring of the environment (a): here, the set of entrance states $\{\mathtt{d}, \mathtt{e}\}$ is not a subset of $\{\mathtt{c}, \mathtt{d}\}$. In this environment, the optimal policy differs (from $\mathtt{a}$ via $\mathtt{e}$ to the goal) and achieves a higher total reward. Thus, the environment (a) is not generalized rewire-robust. Furthermore, the environment (d) is not rewire-robust: while the best policy achieves a return of $+1$, the rewiring indicated by the dashed lines admits a maximum return of $0$.}
    \label{fig:rr}
    \vspace{-2mm}
\end{figure*}

If $\mathcal E$ is $q_\star$-realizable, then (non-committed) Q-learning converges to the optimal policy \citep{majeed2018q}.
However, $q_\star$-realizability is a very strong assumption that excludes simple environments like the corridor (\cref{fig:corridor-a}), for which we have already shown that an optimal greedy policy exists (see \cref{sec:intro}).
In this work, we show that realizability is stronger than necessary, and that Committed Q-learning converges to the optimal reactive policy under the strictly weaker condition of \emph{rewire-robustness}.
We first state our main result, and then define rewire-robustness.

\begin{theorem}\label{thm:main}
    Let $\mathcal E$ be an environment and $\pi$ a behavior policy such that \cref{ass:proper,ass:explore,ass:alpha} hold.
    Then, the iterates $(Q_t)$ of \cref{alg:q-commit} converge almost surely to a solution $Q_\star$, with corresponding greedy policy $\hat\pi_\star(z) \doteq \argmax_{\omega \in \Omega}Q_\star(z, \omega)$, such that
    \begin{center}
    \begin{tikzpicture}[y=-1cm, x=1cm]
        \node[draw, anchor=west] (GRR) at (-4, 1) {$\mathcal E$ is generalized rewire-robust};
        \node[anchor=south east, inner xsep=0] at (GRR.north east) {\upshape\tiny Def.~\ref{def:robust}};
        \node[draw, anchor=east] (QM) at (4, 1) {$\mathcal E$ is quasi-Markov};
        \node[anchor=south east, inner xsep=0] at (QM.north east) {\upshape\tiny Def.~\ref{def:qm}};
        \node[draw] (QR) at ($(GRR) - (0, 1)$) {$\mathcal E$ is $q_\star$-realizable};
        \node[anchor=south east, inner xsep=0] at (QR.north east) {\upshape\tiny Def.~\ref{def:qstar}};
        \node[draw] (RR) at (0, 2) {$\mathcal E$ is rewire-robust};
        \node[anchor=south east, inner xsep=0] at (RR.north east) {\upshape\tiny Def.~\ref{def:robust}};
        \node[draw] (piRR) at (0, 3) {$\mathcal E$ is $\pi$-rewire-robust};
        \node[anchor=south east, inner xsep=0] at (piRR.north east) {\upshape\tiny Def.~\ref{def:prr}};
        \node[draw] (main) at (0, 4) {$\hat\pi_\star$ is an optimal reactive policy in $\mathcal E$.};
        \draw[-{Implies}, double, double distance=2pt, line width=0.2mm] (QR) to node[midway, xshift=-1pt, anchor=east] {\scriptsize\upshape (a)} (GRR);
        \draw[-{Implies}, double, double distance=2pt, line width=0.2mm] (QM) |- (RR);
        \node[anchor=west] at ($(QM.south)!0.5!(QM |- RR)$) {\scriptsize\upshape (c)};
        \draw[-{Implies}, double, double distance=2pt, line width=0.2mm] (GRR) |- (RR);
        \node[anchor=east] at ($(GRR.south)!0.5!(GRR |- RR)$) {\scriptsize\upshape (b)};
        \draw[-{Implies}, double, double distance=2pt, line width=0.2mm] (RR) to node[midway, xshift=1pt, anchor=west] {\scriptsize\upshape (d)} (piRR);
        \draw[-{Implies}, double, double distance=2pt, line width=0.2mm] (RR) to (piRR);
        \draw[-{Implies}, double, double distance=2pt, line width=0.2mm] (piRR) to node[midway, xshift=1pt, anchor=west] {\scriptsize\upshape (e)} (main);
    \end{tikzpicture}
    \end{center}
    Furthermore, if the greedy policy $\hat \pi_\star$ is unique, then \textnormal{(e)} becomes an equivalence.
    All other implications are strict. 
\end{theorem}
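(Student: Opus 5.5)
The proof splits into two parts: almost-sure convergence of $(Q_t)$, and the chain of implications (a)–(e) together with the strictness claims.

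\textbf{Convergence.} The first step is to recast \cref{alg:q-commit} as a stochastic approximation driven by Markovian noise. The natural state is $\xi_t \doteq (x_t, z_t, \omega_t, x_{t+1})$, where $\omega_t$ is the committed option. Under the behavior policy $\pi$ this evolves as a time-homogeneous finite Markov chain, because the option is resampled only on a feature change or at a restart. By \cref{ass:proper,ass:explore} this chain has a unique stationary distribution $\mu_\pi$, with episodes restarting from $p_0$.

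The iteration then takes the form $Q_{t+1} = Q_t + \alpha_t\bigl(F(Q_t,\xi_t)\bigr)$, and its mean field is $\bar F(Q) = D_\pi\bigl(\mathcal T_\pi Q - Q\bigr)$. Here $D_\pi$ is the diagonal matrix of visitation frequencies of pairs $(z,\omega)$, all of which are positive by \cref{ass:explore}. The operator $\mathcal T_\pi$ is the Bellman optimality operator of the \emph{$\pi$-rewiring} $\mathcal E_\pi$. This is the quasi-Markov environment (as in \cref{fig:rr-c}) in which entering feature $z$ lands in state $x$ with the on-policy entrance probability of $x$ among the entrances to $z$ under $\pi$.

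To check this, I would show that the expected target, conditioned on $(z,\omega)$ and weighted by $\mu_\pi$, equals one Bellman step in $\mathcal E_\pi$. The point is that committing to $\omega$ inside $z$ makes the within-feature trajectory identical to executing $\omega$ from the entrance distribution.

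Properness, which carries over to $\mathcal E_\pi$, makes $\mathcal T_\pi$ a weighted max-norm contraction. This gives a unique fixed point $Q_\star$ that is globally asymptotically stable for the ODE $\dot Q = \bar F(Q)$, and the scaled limit ODE $\dot Q = D_\pi(\mathcal T_\pi^0 Q - Q)$ has the origin as its unique equilibrium. These are exactly the hypotheses of the Borkar–Meyn extension of \citet{liu2025ode}, together with Lipschitz continuity of $F$ in $Q$ and \cref{ass:alpha}. Applying it yields $Q_t \to Q_\star$ almost surely.

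\textbf{Optimality.} Since $Q_\star$ is the optimal Q-function of the quasi-Markov environment $\mathcal E_\pi$, its greedy policy $\hat\pi_\star$ is an optimal reactive policy of $\mathcal E_\pi$; this should follow from \cref{lem:value}. Defining $\pi$-rewire-robustness (\cref{def:prr}) as the requirement that the optimal reactive policies of $\mathcal E_\pi$ are optimal in $\mathcal E$ gives (e) immediately. When $\hat\pi_\star$ is unique, the converse also holds: optimality of $\hat\pi_\star$ in $\mathcal E$ matches the set of optimal policies, which gives the equivalence.

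The remaining implications follow from the definitions. (d) holds because $\mathcal E_\pi$ is a particular (stochastic) rewiring. (c) holds because a quasi-Markov environment is invariant under rewiring: its unique entrance distribution is fixed. (b) holds because generalized rewirings contain ordinary rewirings. For (a), $q_\star$-realizability makes $q_\star$ constant on features, so any rewiring of entrance states leaves Bellman look-aheads unchanged, and the optimal policy is preserved.

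Strictness is shown by counterexamples. \Cref{fig:rr-a} and the corridor in \cref{fig:corridor-b} are rewire-robust but neither quasi-Markov nor $q_\star$-realizable. \Cref{fig:rr-a} also fails to be generalized rewire-robust (via \cref{fig:rr-d}). A quasi-Markov environment with features of unequal values, such as \cref{fig:corridor-a}, separates (c) and (a). For (d) and (e), I would build small examples in which only the $\pi$-specific entrance mix is benign.

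\textbf{Main obstacle.} I expect the hardest step to be proving that the mean field is exactly a Bellman operator of a well-defined quasi-Markov rewiring. The entrance distribution must be chosen so that the stationary averaging of the committed, within-feature dynamics collapses correctly. This includes handling self-transitions within a feature, where the option persists, and ensuring that the resulting $\mathcal E_\pi$ remains proper.
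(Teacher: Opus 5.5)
Your overall route is the paper's: the chain $(x_t,\omega_t,x_t')$, \cref{thm:liu} with mean field $M\odot(FQ-Q)$, identification of $F$ with the Bellman operator of the $\pi$-MDP, then \cref{lem:value,lem:optimal} to pass to $\bar{\mathcal E}_\pi$, and \cref{def:prr} for (e). Getting global asymptotic stability directly from a weighted max-norm contraction, with $\lVert Q-Q_\star\rVert_w$ as Lyapunov function, is a valid substitute for the paper's use of \cref{thm:borkar}.

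\textbf{The identification step.} The step you call the main obstacle is the actual content of the proof, and you leave it undone. Also, ``$\mathcal T_\pi$ is the Bellman operator of $\mathcal E_\pi$'' names the wrong object, since $\mathcal T_\pi$ acts on $\mathbb R^{\mathcal Z\times\Omega}$. It must be the Bellman operator of the aggregate MDP $\hat{\mathcal M}_\pi$, whose kernel is $\varphi_{z'}^\top\bar T_\omega\psi_z^\omega$ with $\psi_z^\omega\propto(I-\Pi_z\bar T_\omega)^{-1}\sigma_z$. The missing ingredient is the balance equation of \cref{lem:mu}. Restrict stationarity of $\mu$ to $\mathcal X_z$, and use that the option is drawn from $\pi(\cdot\mid z)$ independently of the entrance state and then kept fixed. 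This gives $(I-\Pi_zT_\omega)\Pi_z\mu_\omega=\pi(\omega\mid z)\tilde\sigma_z$, where $\tilde\sigma_z=\sum_{\omega'}\Pi_zT_{\omega'}\Pi_z^\bot\mu_{\omega'}+(1-\gamma_\pi)\Pi_zp_0$ (note the restart term). Hence $\Pi_z\mu_\omega/\varphi_z^\top\mu_\omega=\psi_z^\omega$. Combined with $\Phi T_\omega=\Phi\bar T_\omega$, this yields $\mu(z'\mid z,\omega)=(\hat T_\omega)_{z',z}$ (\cref{lem:phat}). You also still need \cref{lem:epi}: this $\bar{\mathcal E}_\pi$ satisfies (iv) and is proper, which follows from positivity of $\tilde\sigma_z$ on every possible entrance state.

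\textbf{Strictness.} This part has real holes.
\begin{itemize}
\item For (a) you need an environment that is \emph{generalized} rewire-robust but not $q_\star$-realizable. Being quasi-Markov with unequal feature values does not provide this, because quasi-Markov does not imply generalized rewire-robust. A generalized rewiring may redirect the unique entrance to a higher-value state outside $\varsigma_z$ and flip an upstream decision. For example, take an entrance at $\mathtt c$ with $\mathtt c\to\mathtt d\to$ exit inside one feature; entering directly at $\mathtt d$ skips one feature penalty. The paper uses the corridor of \cref{fig:corridor-a} because it \emph{is} generalized rewire-robust, and that property must be checked for the corridor itself.
\item For (d) you give no example. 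The paper's \cref{lem:prr} changes the rewards of \cref{fig:nrr-a} as in \cref{fig:prr-a}. Swapping the entrances ($\mathtt a\to\mathtt d$, $\mathtt b\to\mathtt c$) makes ``up in $\mathtt a$'' optimal, so rewire-robustness fails. In the $\pi$-rewiring, both entrances land in $\mathtt c$ with probability $1/\eta$, so ``right'' (return $2/\eta$) stays optimal for every $\delta$.
\item No strictness example is needed for (e). The theorem asserts strictness only of (a)--(d), and (e) is an equivalence when $\hat\pi_\star$ is unique.
\end{itemize}
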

\begin{proof}\vspace{-3mm}
    See \cref{sec:proof}.\vspace{-3mm}
\end{proof}

\paragraph{Rewire-robustness.}
What is rewire-robustness?
Consider the environment shown in \cref{fig:rr-a}.
There are two ways to enter into the aggregate feature: from state $\mathtt{a}$ (leading to $\mathtt{c}$), and from state $\mathtt{b}$ (leading to $\mathtt{d}$).
A \emph{rewiring} of an environment $\mathcal E$ is a modified version of $\mathcal E$ in which these `entrance dynamics' are changed, but the transition kernel remains otherwise unperturbed.
For example, the environment shown in \cref{fig:rr-b} is a rewiring of \cref{fig:rr-a}.
A stochastic behavior policy $\pi$ implicitly defines an `average entrance' behavior for each feature.
The \emph{$\pi$-rewiring} of an environment replaces all entrances of a feature by this `average entrance.'
For example, if $\pi$ has low probability of moving up in state $\mathtt{a}$ but high probability of moving up in state $\mathtt{b}$, then the $\pi$-rewiring of \cref{fig:rr-a} might look like \cref{fig:rr-c}.
While the entrance distributions in a rewiring are restricted to be mixtures of the entrance distributions of the original environment, in a \emph{generalized} rewiring, a feature may be entered in an arbitrary way.
\Cref{fig:rr-d} shows a generalized rewiring of \cref{fig:rr-a}.
The formal definition of rewirings and rewire-robustness are as follows.
(Quasi-Markov and $\pi$-rewire-robust environments will be defined in the following two sections.)

\begin{definition}[Rewiring]\label{def:rewiring}
    An environment $\bar{\mathcal E}$ with initial state distribution $\bar p_0$ and transition kernel $\{\bar T_u\}$ is a \emph{rewiring} of an environment $\mathcal E$ with initial state distribution $p_0$ and transition kernel $\{T_u\}$ if, for all $u \in \mathcal U$ and $z \in \mathcal Z$, (i) $\Phi\bar p_0 = \Phi p_0$, (ii) $\Phi \bar T_u = \Phi T_u$, (iii) $\Pi_z \bar T_u \Pi_z = \Pi_z T_u \Pi_z$, and (iv) $\bar\varsigma_z \subset \varsigma_z$, where the \emph{entrance space} $\varsigma_z$ is defined as
    \begin{equation*} 
        \varsigma_z \doteq \sum_{u \in \mathcal U} \operatorname{range}(\Pi_z T_u \Pi_z^\bot) + \operatorname{span}\{\Pi_z p_0\}\text,
    \end{equation*}
    and $\bar\varsigma_z$ is defined analogously for $\bar{\mathcal E}$. 
    If conditions (i) -- (iii) hold, then $\bar{\mathcal E}$ is a \emph{generalized rewiring}.
\end{definition}

\begin{definition}[Rewire-robustness]\label{def:robust}
    An environment $\mathcal E$ is called (generalized) \emph{rewire-robust} if any optimal policy of any (generalized) rewiring of $\mathcal E$ is optimal in $\mathcal E$.\vspace{-2mm}
\end{definition}

The conditions (i) -- (iv) of \cref{def:rewiring} constrain how much the environment $\bar{\mathcal E}$ can differ from $\mathcal E$.
Condition (i) ensures that, while the initial state distributions may be different, the ``initial feature distributions'' must match:
\begin{equation*}
    \bar p_0(z) = \sum_{x \in \mathcal X_{z}} \bar p_0(x) = (\Phi \bar p_0)_z \stackrel{\tikz\node[inner ysep=0pt] {\tiny(i)};}{=} (\Phi p_0)_z = p_0(z).
\end{equation*}
Similarly, condition (ii) ensures that $\bar p(z' \mid x, u) = p(z' \mid x, u)$.
Condition (iii) demands that the intra-feature dynamics are unchanged: if $\varphi(x') = \varphi(x) \doteq z$, then
\begin{equation*}
    (\bar T_u)_{x', x} = (\Pi_z \bar T_u \Pi_z)_{x', x} \stackrel{\tikz\node[inner ysep=0pt] {\tiny(iii)};}{=} (\Pi_z T_u \Pi_z)_{x', x} = (T_u)_{x', x}.
\end{equation*}
Finally, condition (iv) requires the feature entrance distributions of $\bar{\mathcal E}$ to be derived from $\mathcal E$.
A feature $z$ can be entered either at the beginning of an episode, in which case the unnormalized entrance distribution over $\mathcal X_z$ is $\Pi_z p_0$, or it can be entered from a state $x$ outside of feature $z$, in which case the unnormalized entrance distribution is $(\Pi_z T_u \Pi_z^\bot)_{:, x}$.
Thus, the entrance space $\varsigma_z$ is the linear span of all possible entrance distributions into feature $z$.

\paragraph{Numerical example.}
To motivate the relevance of \cref{thm:main}, we illustrate the performance of Committed Q-learning in the corridor environment of \cref{fig:corridor-a}.
The results are shown in \cref{fig:qc}.
While non-committed Q-learning fails to make progress even with very short corridors, Committed Q-learning quickly converges to the optimal policy.
In this experiment, the behavior policy at time $t$ is $\epsilon_t$-greedy with respect to the Q-table $Q_t$ , where $\epsilon_t = \frac{a_{\epsilon}}{t + b_{\epsilon}}$ such that $\epsilon_0 = 0.1$ and $\epsilon_{1000} = 0.01$.
Similarly, the step size is $\alpha_t = \frac{a_{\alpha}}{t + b_{\alpha}}$ such that $\alpha_0 = 0.1$ and $\alpha_{1000} = 0.01$. For the code, see \url{https://github.com/onnoeberhard/q-commit}.

\begin{figure}[t]
    \includegraphics{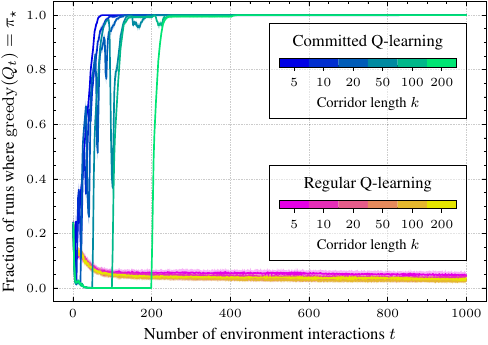}
    \caption{Learning curves of Committed Q-learning and regular Q-learning in corridor environments (\cref{fig:corridor-a}) of different lengths. The experiment is repeated with $1000$ different random seeds, and we plot 95\% bootstrap confidence intervals for the average optimality of the Q-table $Q_t$.}
    \label{fig:qc}
    \vspace{-2mm}
\end{figure}

\section{Quasi-Markov environments}\label{sec:quasi}
We now formalize the intuition developed in \cref{sec:intro} that the correct value to assign to a feature is the average value of the feature's \emph{entrance states}, and that this choice guarantees that the value is meaningful in a dynamic programming context.
This observation is far from obvious.
Indeed, in their original investigation of reactive policies under hard state aggregation, \citet{singh1994learning} propose that, given a policy $\pi: \mathcal Z \to \Omega$, the feature-value function $v: \mathcal Z \to \mathbb R$ should minimize the \emph{mean squared value error}
\begin{equation*}
    \overline{\operatorname{VE}}(v) \doteq \mathbb E_\mu\bigl[\bigl\{v(z_t) - \mathbb E_\pi[R_t \mid x_t]\bigr\}^2\bigr]\text,
\end{equation*}
where $\mu$ is the on-policy distribution of the policy $\pi$ in the environment $\mathcal E$, and $R_t$ is the sum of future rewards.
The function $v$ that minimizes the value error is the $\mu$-weighted average of the values $v_\pi(x)$ of all states $x$ in a feature, where $v_\pi$ is the value function of the MDP $\mathcal M$ underlying $\mathcal E$.
To see why this is not adequate, consider again the corridor environment from \cref{fig:corridor-a}, but suppose the reward of state $x = -1$ is $r(-1) = 1$, such that it is optimal to terminate immediately on the left.
Under the always-right policy $\pi$, the on-policy distribution is uniform in the corridor, and the values of the corridor states are $v_\pi(x) = x$.
Thus, the corridor value $v_\mathrm{c}$ that minimizes the value error is $\frac{1}{k}\sum_{x = 1}^k v_\pi(x) = \frac{k + 1}{2}$, which is greater than $2$ if the corridor length $k$ is greater than $3$, making the greedy policy suboptimal.
Another common objective is the \emph{mean squared Bellman error}\looseness=-1
\begin{equation*}
    \overline{\operatorname{BE}}(v) \doteq \mathbb E_\mu\bigl[\bigl\{v(z_t) - \mathbb E_\pi[r(z_{t + 1}) + v(z_{t + 1}) \mid x_t]\bigr\}^2\bigr].
\end{equation*}
Under the always-right policy in the corridor, the Bellman error is minimized by $v_\mathrm{c} = k$ (\cref{prop:bec}), which makes the greedy policy suboptimal for $k > 2$.
Both the value error and the Bellman error include a conditioning on the states of $\mathcal M$ in their definitions, which leads to problems with \emph{learnability} in the partially observable setting.
This was first shown by \citet[Section 11.6]{sutton2018reinforcement}, who demonstrate that it is not possible to estimate $\overline{\operatorname{VE}}$ or $\overline{\operatorname{BE}}$ from interaction with an environment.
We thus propose the alternative definitions of \emph{value risk}
\begin{equation*}
    \mathcal R_{\mathrm V}(v) \doteq \mathbb E_\mu\bigl[\bigl\{v(z_t) - \mathbb E^\mu_\pi[R_t \mid z_t]\bigr\}^2\bigr]
\end{equation*}
and \emph{Bellman risk}
\begin{equation*}
    \mathcal R_{\mathrm B}(v) \doteq \mathbb E_\mu\bigl[\bigl\{v(z_t) - \mathbb E^\mu_\pi[r(z_{t + 1}) + v(z_{t + 1}) \mid z_t]\bigr\}^2\bigr].
\end{equation*}
These objectives do not have the same issues as $\overline{\operatorname{VE}}$ or $\overline{\operatorname{BE}}$ with learnability, as we show in \cref{sec:learnability}.
Furthermore, returning to the corridor example, while minimizing the value risk is equivalent to minimizing the value error (\cref{prop:vr}), minimizing the Bellman risk yields the desired solution.
To see this, note that we can minimize the Bellman risk $\mathcal R_{\mathrm B}$ by solving the Bellman equation
\begin{equation*}
    v(z) = \mathbb E_\pi^\mu[r(z_{t + 1}) + v(z_{t + 1}) \mid z_t = z]
\end{equation*}
for $v \in \mathbb R^{\mathcal Z}$.
Solving for the value $v_\mathrm{c}$ of the corridor state under the always-right policy, we get
\begin{equation*}
    v_\mathrm{c} = \frac{k - 1}{k} (-1 + v_\mathrm{c}) + \frac{1}{k} (k + 0) \implies v_\mathrm{c} = 1\text,
\end{equation*}
which is exactly the value of the corridor entrance state.
In the rest of this section we prove that minimizing the Bellman risk will always result in a value function that satisfies this property, as long as the environment is \emph{quasi-Markov}.

\begin{definition}[Quasi-Markov]\label{def:qm}
    An environment $\bar{\mathcal E}$ with initial state distribution $\bar p_0$ and transition kernel $\{\bar T_u\}$ is called \emph{quasi-Markov} if there exists an \emph{entrance matrix} $\Sigma \in \Delta_{\mathcal X}^{\mathcal Z}$ with columns $\sigma_z$ satisfying $\sigma_z = \Pi_z \sigma_z$ and
    \begin{equation*}
        \bar p_0 = \Sigma \Phi \bar p_0\quad\text{and}\quad \Pi_z\bar T_u\Pi_z^\bot = \Pi_z\Sigma \Phi \bar T_u\Pi_z^\bot\text,
    \end{equation*}
    for all $z \in \mathcal Z$ and $u \in \mathcal U$.
    Equivalently, $\bar{\mathcal E}$ is quasi-Markov, if the entrance space satisfies $\dim \bar\varsigma_z \leq 1$ for all $z \in \mathcal Z$.\vspace{-2mm}
\end{definition}

Thus, if an environment is quasi-Markov, then the state $x \in \mathcal X_{z}$ will be distributed according to $\sigma_{z}$ whenever the feature $z$ is entered from the outside, independent of what the previous state was: if $z \doteq \varphi(x_{t}) \neq \varphi(x_{t-1})$, then\vspace{-1ex}
\begin{multline*}
    \mathbb P\{x_{t} = x \mid x_{t-1} = x_-, u_{t-1}=u\}\\
    = (\Pi_{z}\Sigma \Phi \bar T_{u}\Pi_{z}^\bot)_{x,x_-} = (\Pi_{z} \Sigma \zeta)_{x} = \zeta_{z} \sigma_{z}(x)\text,\vspace{1ex}
\end{multline*}
with $\zeta \doteq (\Phi \bar T_{u}\Pi_{z}^\bot)_{:,x_-}$.
This means that the Markov property holds whenever the feature changes: if $z_{t - 1} \neq z_t$, then $x_t \perp x_{t - 1} \mid z_t$.
This property is also related to \emph{semi-Markov} environments \citep{puterman1994markov,sutton1999between}, where the state $x$ is similarly not revealed at every step.
The equivalence to an entrance space dimension $\leq 1$ is proved formally in \cref{lem:dim}; the intuition is given in the discussion from \cref{sec:main}.
If an environment is quasi-Markov, we can define an associated \emph{aggregate MDP} as follows.

\begin{definition}[Aggregate MDP]
    Given a quasi-Markov environment $\bar{\mathcal E}$ satisfying \cref{ass:proper} with initial state distribution $\bar p_0$, transition kernel $\{\bar T_u\}$, and entrance matrix $\Sigma$, the corresponding \emph{aggregate MDP} $\hat{\mathcal M}$ is an episodic MDP with state space $\mathcal Z$, action space $\Omega$, initial state distribution $\hat p_0 \doteq \Phi \bar p_0$ and transition kernel $\hat T_\omega \doteq \Phi \bar T_\omega \Psi_\omega$ for all $\omega \in \Omega$.
    The \emph{disaggregation matrix} $\Psi_\omega \in \Delta_{\mathcal X}^{\mathcal Z}$ has the columns
    \begin{equation*}
        \psi_z^\omega \doteq \tilde\psi_z^\omega / \bm 1^\top \tilde\psi_z^\omega\text{,}\quad\text{where}\quad
        \tilde\psi_z^\omega \doteq (I - \Pi_z \bar T_\omega)^{-1}\sigma_z.\qedhere\vspace{-2mm}
    \end{equation*}
\end{definition}

\Cref{lem:inverse} proves that the matrix inverse in the definition above is well-defined, as the spectral radius satisfies $\rho(\Pi_z\bar T_\omega) < 1$.
Note that the term `aggregate MDP' is commonly used to describe systems of this type with general disaggregation matrices (e.g., \citealp[Section 6.5]{bertsekas2012dynamic}).
The `Bayesian' disaggregation probabilities $\{\psi_z^\omega\}$ that we define above correspond to the posterior distributions over states in a feature $z$ under an option $\omega$.
To see this, note that when the feature $z$ is entered, the initial distribution over states is $\sigma_z$.
After one transition under option $\omega$, the probability mass on states in $z$ is $\Pi_z\bar T_\omega \sigma_z$.
Continuing recursively like this, we see that the unnormalized on-policy distribution comes from the Neumann series
\begin{equation*}
    \sum_{\tau=0}^\infty (\Pi_z\bar T_\omega)^\tau \sigma_z = (I - \Pi_z \bar T_\omega)^{-1}\sigma_z = \tilde\psi_z^\omega\text,
\end{equation*}
which converges as $\rho(\Pi_z\bar T_\omega) < 1$ (e.g., \citealp[Theorem 4.11.2]{meyer2023matrix}).
In \cref{lem:phat}, we will show that the transition kernel of $\hat{\mathcal M}$ is closely related to our definition of the Bellman risk, as $(\hat T_\omega)_{z', z} = \mu(z' \mid z, \omega)$, where $\mu$ is the on-policy distribution of the behavior policy $\pi$ in the quasi-Markov environment $\bar{\mathcal E}$.
Thus, the value function $\hat v_\pi$ of the aggregate MDP is the minimum Bellman risk solution in $\bar{\mathcal E}$.
The following lemma shows that this solution will always satisfy the entrance value property described above.

\begin{lemma}[Entrance Value]\label{lem:value}
    Let $\bar{\mathcal E} = (\bar{\mathcal{M}}, \varphi)$ be a quasi-Markov environment satisfying \cref{ass:proper} with entrance matrix $\Sigma$ and let $\hat{\mathcal M}$ be the corresponding aggregate MDP.
    If $\pi: \mathcal Z \to \Omega$ is any policy, and $\bar v_\pi$ and $\hat v_\pi$ represent the value function of $\pi$ in $\bar{\mathcal M}$ and $\hat{\mathcal M}$, respectively, then
    \begin{equation*}
        \hat v_\pi = \Sigma^\top \bar v_\pi.
    \end{equation*}
\end{lemma}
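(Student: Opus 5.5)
The plan is to show that $w \doteq \Sigma^\top \bar v_\pi$, i.e.\ $w(z) = \sigma_z^\top \bar v_\pi$, solves the Bellman equation of $\pi$ in the aggregate MDP $\hat{\mathcal M}$, and then to conclude $w = \hat v_\pi$ by uniqueness of that solution.

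\textbf{Step 1: Bellman equation in $\bar{\mathcal M}$.} Since rewards depend only on the next feature, for $x \in \mathcal X_z$ with $\omega \doteq \pi(z)$ we have
\begin{equation*}
    \bar v_\pi(x) = \sum_{x'} (\bar T_\omega)_{x',x}\bigl[r(\varphi(x')) + \bar v_\pi(x')\bigr].
\end{equation*}
Split the sum into $x' \in \mathcal X_z$ and $x' \in \mathcal X_{z'}$ with $z' \neq z$. For the outside part, use the quasi-Markov identity $\Pi_{z'}\bar T_\omega\Pi_{z'}^\bot = \Pi_{z'}\Sigma\Phi\bar T_\omega\Pi_{z'}^\bot$. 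It shows that the mass entering $\mathcal X_{z'}$ from $x$ is $(\Phi\bar T_\omega)_{z',x}\,\sigma_{z'}$. Hence the outside contribution equals
\begin{equation*}
    \sum_{z'\neq z}(\Phi\bar T_\omega)_{z',x}\bigl(r(z') + w(z')\bigr).
\end{equation*}
Moving the inside value terms to the left gives, on $\mathcal X_z$,
\begin{equation*}
    (I - \bar T_\omega^\top \Pi_z)\,\Pi_z \bar v_\pi = \Pi_z h, \qquad h(x) \doteq \sum_{z'}(\Phi\bar T_\omega)_{z',x}\, r(z') + \sum_{z'\neq z}(\Phi\bar T_\omega)_{z',x}\, w(z').
\end{equation*}
This restricted system is solvable, since $\rho(\Pi_z\bar T_\omega)<1$ by \cref{lem:inverse}, and we have $(I - \Pi_z\bar T_\omega)^{-\top} = (I - \bar T_\omega^\top\Pi_z)^{-1}$. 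Taking the inner product with $\sigma_z = \Pi_z\sigma_z$ yields
\begin{equation*}
    w(z) = \sigma_z^\top\bar v_\pi = \tilde\psi_z^{\omega\top} h.
\end{equation*}
Here I use that $\tilde\psi_z^\omega$ is supported on $\mathcal X_z$, because it is a Neumann series of $\Pi_z\bar T_\omega$ applied to $\sigma_z$.

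\textbf{Step 2: matching the aggregate equation.} The aggregate Bellman equation reads
\begin{equation*}
    w(z) = \psi_z^{\omega\top}\bar T_\omega^\top\Phi^\top(r + w).
\end{equation*}
Multiplying by $\bm 1^\top\tilde\psi_z^\omega$ and separating the $z'=z$ term, it becomes
\begin{equation*}
    w(z)\bigl(\bm 1^\top\tilde\psi_z^\omega - \varphi_z^\top\bar T_\omega\tilde\psi_z^\omega\bigr) = \tilde\psi_z^{\omega\top}h.
\end{equation*}
The bracket equals $\bm 1^\top(I - \Pi_z\bar T_\omega)\tilde\psi_z^\omega = \bm 1^\top\sigma_z = 1$. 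This uses $\bm 1^\top\Pi_z = \varphi_z^\top$, which holds because $\tilde\psi_z^\omega$ lives on $\mathcal X_z$. So the aggregate equation is exactly the identity obtained in Step 1, and $w$ satisfies it.

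\textbf{Step 3: uniqueness (main obstacle).} It remains to show that $I - \hat T_\pi^\top$ is invertible, i.e.\ that $\pi$ is proper in $\hat{\mathcal M}$. I would argue this by relating the feature process of $\bar{\mathcal M}$ under $\pi$ to the chain of $\hat{\mathcal M}$. Between feature changes, the quasi-Markov property resets the state distribution to $\sigma_{z'}$. Hence the probability of reaching a given feature sequence, and in particular of never terminating, can be computed equally in both models. Properness of $\bar{\mathcal M}$ (\cref{ass:proper}) then transfers to $\hat{\mathcal M}$, which gives $\rho(\hat T_\pi)<1$.

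A cleaner alternative for Step 3 is to show directly that $\hat T_\pi^k\Phi\sigma$-type quantities are bounded by the termination-free mass of $\bar T_\pi^k$, which tends to $0$. I expect this step to require the most care, since self-loops within a feature are folded into $\psi_z^\omega$ and time is rescaled. With uniqueness in hand, $\hat v_\pi = w = \Sigma^\top\bar v_\pi$.
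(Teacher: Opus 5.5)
Your proposal is correct and is essentially the paper's argument. The paper also shows that $\Sigma^\top\bar v_\pi$ solves the aggregate Bellman equation. It differs only cosmetically: it carries $\Pi_z^\bot\bar v_\pi$ through the computation and uses the quasi-Markov identity at the end to remove a residual term containing $(I-\Sigma\Phi)$, whereas you substitute the entrance values $w(z')$ up front. For uniqueness the paper simply cites Bertsekas. Your Step~3 therefore supplies the properness of $\pi$ in $\hat{\mathcal M}$ that the paper takes for granted. Your jump-chain sketch does go through, because the feature-exit probabilities $\varphi_{z'}^\top\bar T_\omega\tilde\psi_z^\omega$ agree in both models by your Step~2 normalization.

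Two small slips, neither affecting $\sigma_z^\top\bar v_\pi=\tilde\psi_z^{\omega\top}h$. First, the restricted system in Step~1 should read $(I-\Pi_z\bar T_\omega^\top)\Pi_z\bar v_\pi=\Pi_z h$; your left factor leaks mass outside $\mathcal X_z$. Second, $\bm 1^\top\Pi_z=\varphi_z^\top$ holds unconditionally, not because of the support of $\tilde\psi_z^\omega$.
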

\begin{proof}\vspace{-2mm}
    The aggregate value function $\hat v_\pi$ is the unique solution of the aggregate Bellman equation
    \begin{equation}\label{eq:bellman}
        \hat v_\pi(z) = \bigl\{\hat T_{\pi(z)}^\top (r + \hat v_\pi)\bigr\}(z)
    \end{equation}
    for all $z \in \mathcal Z$ (e.g., \citealp[Proposition 3.2.1]{bertsekas2012dynamic}).
    Our goal is thus to show that $\Sigma^\top \bar v_\pi$ solves this equation.
    The function $\bar v_\pi$ in turn is the unique solution of the Bellman equation in the MDP $\bar{\mathcal M}$,
    \begin{equation*}
        \bar v_\pi(x) = \bigl\{\bar T_{(\pi \circ \varphi)(x)}^\top(\Phi^\top r + \bar v_\pi)\bigr\}(x)
    \end{equation*}
    for all $x \in \mathcal X$.
    From now on, let $z \in \mathcal Z$ be fixed, and let $\omega \doteq \pi(z)$.
    From the definition of $\psi_z^\omega$, we have that
    \begin{equation*}
        \sigma_z \propto (I - \Pi_z \bar T_\omega)\psi_z^\omega.
    \end{equation*}
    Since $\sigma_z$ is normalized, it follows that
    \begin{equation*}
        \sigma_z = \frac{(I - \Pi_z \bar T_\omega)\psi_z^\omega}{\bm 1^\top (I - \Pi_z \bar T_\omega)\psi_z^\omega} = \frac{(I - \Pi_z \bar T_\omega)\psi_z^\omega}{1 - \bar\gamma_{z, \omega}}\text,
    \end{equation*}
    where $\bar\gamma_{z, \omega} \doteq \varphi_z^\top \bar T_\omega \psi_z^\omega$.
    Projecting $\bar v_\pi$ onto the subspace of $\mathbb R^{\mathcal X}$ corresponding to the feature $z$, we have
    \begin{equation*}
        \Pi_z \bar v_\pi = \Pi_z \bar T_\omega^\top \Phi^\top r + \Pi_z \bar T_\omega^\top (\Pi_z + \Pi_z^\bot) \bar v_\pi.
    \end{equation*}
    Rearranging this equation, we get
    \begin{equation*}
        \Pi_z \bar v_\pi = (I - \Pi_z \bar T_\omega)^{-\top} \Pi_z \bar T_\omega^\top (\Phi^\top r +  \Pi_z^\bot \bar v_\pi).
    \end{equation*}
    This expression decomposes the value $\bar v_\pi(x)$ of states $x \in \mathcal X_z$ into a part that depends on the reward function, and a part that depends on the value of states \emph{outside} of $\mathcal X_z$.
    Combining this with the above expression for $\sigma_z$, we have
    \begin{equation*}
        \sigma_z^\top \bar v_\pi = \sigma_z^\top \Pi_z \bar v_\pi = \frac{(\Phi^\top r +  \Pi_z^\bot \bar v_\pi)^\top \bar T_\omega \psi_z^\omega}{1 - \bar\gamma_{z, \omega}}.
    \end{equation*}
    We want to show that this expression solves \cref{eq:bellman}.
    Plugging $\Sigma^\top \bar v_\pi$ into \cref{eq:bellman} and expanding, we get
    \begin{multline*}
        \!\!\!\!\bigl\{\hat T_\omega^\top (r + \Sigma^\top \bar v_\pi)\bigr\}(z)\\
        \quad\begin{aligned}
            &= (\Phi \bar T_\omega \psi_z^\omega)^\top (r + \Sigma^\top \bar v_\pi)\\
            &= (\bar T_\omega \psi_z^\omega)^\top \Phi^\top r + (\bar T_\omega \psi_z^\omega)^\top (\Pi_z + \Pi_z^\bot) \Phi^\top \Sigma^\top \bar v_\pi\\
            &= (\Phi^\top r + \Pi_z^\bot \Phi^\top \Sigma^\top \bar v_\pi)^\top \bar T_\omega \psi_z^\omega + \bar\gamma_{z, \omega} \sigma_z^\top \bar v_\pi\\
            &= (\Sigma^\top \bar v_\pi)(z) + \frac{\bar\gamma_{z, \omega}}{1 - \bar\gamma_{z, \omega}} (\Pi_z^\bot \bar T_\omega \psi_z^\omega)^\top(I - \Sigma\Phi)^\top \bar v_\pi.
        \end{aligned}
    \end{multline*}
    It remains to show that the second term vanishes.
    From the quasi-Markov property, we get, for any $z' \in \mathcal Z$ and $\psi \in \mathbb R^{\mathcal X}$,
    \begin{align*}
        \Pi_{z'} \bar T_\omega \Pi_{z'}^\bot \psi = \Pi_{z'} \Sigma \Phi \bar T_\omega \Pi_{z'}^\bot \psi = (\varphi_{z'}^\top \bar T_\omega \Pi_{z'}^\bot \psi) \sigma_{z'}.
    \end{align*}
    This implies
    \begin{equation*}
        \Pi_z^\bot \bar T_\omega \psi_z^\omega = \sum_{z' \neq z} \Pi_{z'} \bar T_\omega \Pi_{z'}^\bot \psi_z^\omega = \sum_{z' \neq z} \underbrace{(\varphi_{z'}^\top \bar T_\omega \psi_z^\omega)}_{(\hat T_\omega)_{z', z}} \sigma_{z'}.
    \end{equation*}
    Using the fact that $\varphi_z^\top \sigma_{z'} = [z = z']$ for all $z, z' \in \mathcal Z$, and thus $\Sigma\Phi\sigma_{z'} = \sigma_{z'}$, we get
    \begin{equation*}
        (I - \Sigma\Phi)(\Pi_z^\bot \bar T_\omega \psi_z^\omega) = \sum_{z' \neq z} (\hat T_\omega)_{z', z} (I - \Sigma\Phi) \sigma_{z'} = 0.
    \end{equation*}
    Thus, we have shown that $\Sigma^\top \bar v_\pi$ is the unique solution of \cref{eq:bellman}, and hence that $\hat v_\pi = \Sigma^\top \bar v_\pi$.
\end{proof}

The following result shows why the entrance value property is important: it implies that an optimal policy in $\hat{\mathcal M}$ is an optimal reactive policy in $\bar{\mathcal E}$.

\begin{lemma}\label{lem:optimal}
    Let $\bar{\mathcal E}$ be a quasi-Markov environment satisfying \cref{ass:proper} with corresponding aggregate MDP $\hat{\mathcal M}$, and let $\pi: \mathcal Z \to \Omega$ be any policy.
    Then, $\pi$ is reactive-optimal in $\bar{\mathcal E}$ if and only if it is optimal in $\hat{\mathcal M}$.
\end{lemma}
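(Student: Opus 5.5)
The plan is to reduce reactive optimality in $\bar{\mathcal E}$ to optimality in $\hat{\mathcal M}$ through the objective $J$. The key identity is $\bar J(\pi) = \hat J(\pi)$ for every policy $\pi:\mathcal Z\to\Omega$, where $\bar J(\pi) \doteq \bar p_0^\top \bar v_\pi$ is the return in $\bar{\mathcal E}$ and $\hat J(\pi) \doteq \hat p_0^\top \hat v_\pi$ is the return in $\hat{\mathcal M}$. The quasi-Markov property gives $\bar p_0 = \Sigma\Phi\bar p_0$, so
\begin{equation*}
    \bar J(\pi) = \bar p_0^\top \bar v_\pi = (\Sigma\Phi\bar p_0)^\top \bar v_\pi = (\Phi \bar p_0)^\top \Sigma^\top \bar v_\pi = \hat p_0^\top \hat v_\pi = \hat J(\pi),
\end{equation*}
using \cref{lem:value} in the fourth equality. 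One detail to check is the reward convention. The reward $r(z_t)$ is collected on arrival at $z_t$, so neither $\bar v_\pi$ nor $\hat v_\pi$ includes the reward of the initial feature, and $J$ is defined consistently in both models. Hence, for reactive policies, the objective in $\bar{\mathcal E}$ coincides with the objective in $\hat{\mathcal M}$ along the initial distribution $\hat p_0$.

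It remains to connect ``optimal in $\hat{\mathcal M}$'' to maximizing $\hat J$. The policy class of $\hat{\mathcal M}$ is exactly the set of maps $\mathcal Z\to\Omega$, since its state space is $\mathcal Z$ and its action space is $\Omega$. The first step is to check that $\hat{\mathcal M}$ is a proper finite episodic MDP. Properness of $\pi$ in $\hat{\mathcal M}$ should follow from \cref{ass:proper} in $\bar{\mathcal E}$, because $\hat v_\pi$ is finite by \cref{lem:value}. Alternatively, $\hat T_\omega = \Phi\bar T_\omega\Psi_\omega$ describes the feature-level chain of $\bar{\mathcal E}$, so termination transfers directly. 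With properness in hand, stationary deterministic policies suffice, and a policy is optimal in $\hat{\mathcal M}$ exactly when it maximizes $\hat J$ (\citealp[Prop.~3.2.1]{bertsekas2012dynamic}). Reactive optimality in $\bar{\mathcal E}$ is, by definition, maximization of $\bar J$ over $\pi:\mathcal Z\to\Omega$. Since $\bar J = \hat J$ on this common policy class, the two sets of maximizers are equal, which gives both directions.

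The main obstacle is the meaning of ``optimal in $\hat{\mathcal M}$''. It may mean maximizing $\hat v_\pi(z)$ for every $z$, not just maximizing $\hat p_0^\top\hat v_\pi$. In that case, features not reachable under $\hat p_0$ create a gap: a $J$-maximizer need not be optimal at every feature. I would handle this first by fixing the convention that optimality is measured by $J$, consistent with the paper's goal of maximizing $J(\pi)=p_0^\top v_\pi$. If pointwise optimality is required instead, I would strengthen the argument by applying \cref{lem:value} with initial distribution $\sigma_z$ for each $z$; the entrance matrix is unchanged because entrance dynamics do not depend on $p_0$. This yields $\bar v_\pi$-optimality from every entrance distribution, and hence a pointwise correspondence between the two notions.
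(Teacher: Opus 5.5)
Your proof is correct and is the paper's argument: the paper also defines optimality in both models as maximizing $J$ over maps $\mathcal Z\to\Omega$, and derives $\hat J(\pi)=\bar J(\pi)$ from \cref{lem:value} together with $\bar p_0=\Sigma\Phi\bar p_0$, so your $J$-convention is the intended one and the pointwise detour is unnecessary. Of your two justifications for properness of $\hat{\mathcal M}$ (which the paper takes for granted), drop the first: \cref{lem:value} already presupposes that $\hat v_\pi$ is the unique solution of the aggregate Bellman equation, and a finite value does not imply properness in any case, so only the feature-level argument (made precise via the embedded chain of feature changes) works.
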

\begin{proof}\vspace{-2mm}
    This is a straightforward corollary of the previous lemma.
    The policy $\pi$ is reactive-optimal in $\bar{\mathcal E}$ if $\bar J(\pi) \geq \bar J(\pi')$ and optimal in $\hat{\mathcal M}$ if $\hat J(\pi) \geq \hat J(\pi')$ for all policies $\pi': \mathcal Z \to \Omega$.
    From the Entrance Value Lemma (\ref{lem:value}) and the quasi-Markov property of $\bar{\mathcal E}$, we get, for any policy $\pi$
    \begin{equation*}
        \hat J(\pi) \doteq \hat p_0^\top \hat v_\pi = (\Sigma \Phi \bar p_0)^\top \bar v_\pi = \bar p_0^\top \bar v_\pi \doteq \bar J(\pi)\text,
    \end{equation*}
    which immediately yields the desired result.
\end{proof}

\section{Proof of \cref{thm:main}}\label{sec:proof}
In this section, we use the theory of quasi-Markov environments developed above to prove \cref{thm:main}.
We focus on the convergence of \cref{alg:q-commit} and on the implication (e), which characterizes the optimality of the policy $\hat\pi_\star$.
The remainder of \cref{thm:main} is straightforward and is discussed at the end of this section.
The idea behind the proof is that, on average, the updates in \cref{alg:q-commit} are sampled from $\mu(z' \mid z, \omega)$, where $\mu$ is the stationary distribution induced by the behavior policy $\pi$.
This distribution can be interpreted as the transition kernel of an aggregate MDP $\hat{\mathcal M}_\pi$ corresponding to a specific quasi-Markov environment $\bar{\mathcal E}_\pi$.
We first prove that \cref{alg:q-commit} converges to the optimal action-value function in $\hat{\mathcal M}_\pi$.
Then, \cref{lem:optimal} allows us to establish that $\hat\pi_\star$ is optimal in $\bar{\mathcal E}_\pi$.
Finally, we use rewire-robustness to conclude that $\hat\pi_\star$ is optimal in $\mathcal E$.

Given an environment $\mathcal E$, a behavior policy $\pi$, and a sequence of step sizes $(\alpha_t)$, we define the random variables $x_t$, $\omega_t$, and $Q_t$, for all $t \in \mathbb N_0$, as the states of the variables $x$, $\omega$, and $Q$ at the beginning of the $t$\textsuperscript{th} loop of \cref{alg:q-commit} (\cref{ln:loop}).
We also define $x_t'$ as the state of $x$ immediately after \cref{ln:step}, and we write $z_t \doteq \varphi(x_t)$ and $z_t' \doteq \varphi(x_t')$.
It can be verified by looking at \cref{alg:q-commit} that the tuple $\xi_t \doteq (x_t, \omega_t, x_t')$ is a Markov chain with transition kernel
\begin{multline*}
    \mathbb P'\{\xi_t = (x, \omega, x') \mid \xi_{t - 1} = (x_-, \omega_-, x'_-)\}\\
    \begin{aligned}
        &= p(x \mid x'_-)\, p(\omega \mid x_-, \omega_-, x'_-, x)\, (T_\omega')_{x',x}\text,
    \end{aligned}
\end{multline*}
where, writing $z \doteq \varphi(x)$ (and similarly for $z_-$ and $z_-'$),
\begin{align*}
    p(x \mid x'_-) &= \begin{cases}
        p_0(x) &\text{if } z'_- = z^\bot\\
        [x = x'_-] &\text{otherwise, and}
    \end{cases}\\
    p(\omega \mid x_-, \omega_-, x'_-, x) &= \begin{cases}
        \pi(\omega \mid z) &\text{if } z_-' \neq z_-\\
        [\omega = \omega_-] &\text{otherwise.}
    \end{cases}
\end{align*}

Note that $\mathbb P'$ is not the same as the transition kernel $\mathbb P$ described in \cref{sec:prelim}, in which a transition to $x^\bot$ is absorbing, as \cref{ln:restart} of \cref{alg:q-commit} starts a new episode whenever the terminal state is reached.
The Markov chain $(\xi_t)$ evolves in the set $\Xi \subset \mathcal X \times \Omega \times \mathcal X'$ of \emph{reachable} tuples $\xi$.
We first show that $(\xi_t)$ converges to a stationary distribution $\mu \in \Delta_{\Xi}$.

\begin{lemma}\label{lem:mu}
    Let $\mathcal E$ be an environment and $\pi$ a behavior policy such that \cref{ass:proper,ass:explore} hold.
    Then, the Markov chain $(\xi_t) \subset \Xi$ is irreducible and has a unique stationary distribution $\mu \in \Delta_\Xi$ satisfying 
    \begin{equation*}
        \mu(x, \omega, x') = \mu(x, \omega)(T_\omega')_{x',x}
    \end{equation*}
    for all $(x, \omega, x') \in \Xi$. Furthermore, this distribution satisfies, for all $z \in \mathcal Z$ and $\omega \in \Omega$,
    \begin{align*}
        \Pi_z\mu_\omega &= (1 - \gamma_\pi)\pi(\omega \mid z)\Pi_z p_0 + \Pi_z T_\omega \Pi_z \mu_\omega\\
        &+ \pi(\omega \mid z)\sum_{\omega' \in \Omega}\Pi_z T_{\omega'}\Pi_z^\bot\mu_{\omega'}\text,
    \end{align*}
    where $\mu_\omega \in \mathbb R^{\mathcal X}$ is defined as $\mu_\omega(x) \doteq \mu(x, \omega)$, and where $\gamma_\pi \doteq \sum_{x \in \mathcal X, \omega \in \Omega} \mu(x, \omega) \gamma_{x, \omega}$.
\end{lemma}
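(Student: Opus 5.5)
The argument has three parts: irreducibility of $(\xi_t)$ on $\Xi$, existence and uniqueness of $\mu$ on a finite state space, and the two identities, obtained by writing out the stationarity equations of $\mathbb P'$.

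\textbf{Irreducibility.} $\Xi$ is by definition the set of tuples reachable from the initial distribution of the chain. So it suffices to show that every $\xi \in \Xi$ eventually leads back to a restart event $z'_t = z^\bot$, with positive probability. From a restart, the initial law is regenerated, so every reachable tuple is reached again with positive probability.

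To see that termination is reached from any $\xi$, I would use \cref{ass:proper}. Under the committed behavior, the option is resampled from $\pi(\cdot\mid z)$ at each feature change. This process can be realized as a mixture over deterministic reactive policies: one chooses, at each feature entry, an option with positive probability by \cref{ass:explore}. Fix any deterministic reactive policy $\pi_d$ that agrees with the currently committed option on the current feature. The event that the chain follows $\pi_d$ until termination has positive probability. Along this event, termination happens almost surely by \cref{ass:proper}, hence with positive probability within finitely many steps. This is the step I expect to require the most care, because the current option $\omega_t$ is fixed until the feature changes and must be treated consistently with $\pi_d$.

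Finiteness of $\Xi$ plus irreducibility gives a unique stationary distribution $\mu$, by standard Markov chain theory, without needing aperiodicity.

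\textbf{Factorization.} The kernel factorizes as (law of $(x,\omega)$ given the past)$\cdot(T'_\omega)_{x',x}$. The next pair $(x,\omega)$ depends only on $\xi_{t-1}$, and $x'$ depends only on $(x,\omega)$. Marginalizing stationarity therefore gives $\mu(x,\omega,x') = \mu(x,\omega)(T'_\omega)_{x',x}$, where $\mu(x,\omega)$ is the $(x,\omega)$-marginal.

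\textbf{The balance identity.} Fix $x\in\mathcal X_z$ and $\omega$, and sum the stationarity equation over predecessors $(x_-,\omega_-,x'_-)$. There are three cases, according to the cases defining $p(x\mid x'_-)$ and $p(\omega\mid\cdot)$.

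\begin{itemize}
\item[(a)] \emph{Restart:} $x'_-=x^\bot$. The contribution is $p_0(x)\pi(\omega\mid z)$ times $\sum_{x_-,\omega_-}\mu(x_-,\omega_-)(1-\gamma_{x_-,\omega_-})$. Here the new option is drawn because $z'_- = z^\bot \neq z_-$. The sum equals $1-\gamma_\pi$.
\item[(b)] \emph{Feature change:} $x'_-=x$ with $z_-\ne z$. The option is resampled, giving $\pi(\omega\mid z)\sum_{\omega'}\sum_{x_-\notin\mathcal X_z}\mu(x_-,\omega')(T_{\omega'})_{x,x_-}$. This is $\pi(\omega\mid z)\sum_{\omega'}(\Pi_zT_{\omega'}\Pi_z^\bot\mu_{\omega'})(x)$.
\item[(c)] \emph{Same feature:} $x'_-=x$ with $z_-=z$. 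The option is kept, $\omega_-=\omega$, giving $(\Pi_zT_\omega\Pi_z\mu_\omega)(x)$.
\end{itemize}

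Summing the three cases gives exactly the claimed equation for $\Pi_z\mu_\omega$.

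The only subtlety is bookkeeping. Because $\mu$ is supported on $\Xi$, I extend $\mu$ by zero outside $\Xi$, so that the sums run over all of $\mathcal X\times\Omega$. Unreachable tuples contribute nothing, so the identities are unaffected.
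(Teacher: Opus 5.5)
Your proposal is correct and follows the same route as the paper: regeneration at restarts for irreducibility, uniqueness from the finiteness of $\Xi$, the factorization obtained by summing the stationarity equation over $x'$, and the same three-way split (restart, feature change, same feature) of the balance equation for $\Pi_z\mu_\omega$. Your termination step is in fact more careful than the paper's, which asserts probability one directly from \cref{ass:proper} even though the resampling behavior is not itself a reactive policy: coupling with a deterministic reactive $\pi_d$ that agrees with the committed option closes that gap, and only needs positive probability. Likewise, defining $\Xi$ as the set of reachable tuples avoids including pairs $(x,\omega)$ that commitment can never produce.
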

\begin{proof}\vspace{-2mm}
    See \cref{proof:lem:mu}.\vspace{-2mm}
\end{proof}

Note that $\mu(\xi) > 0$ for all $\xi \in \Xi$.
For convenience, we extend $\mu$ to the space $\Delta_{\mathcal X \times \Omega \times \mathcal X'}$ and define $\mu(x, \omega, x') = 0$ for all $(x, \omega, x') \not\in \Xi$.
We can now prove the convergence of \cref{alg:q-commit} and characterize the solution $Q_\star$.

The partial observability in $\mathcal E$ leads to correlated noise in the Q-learning updates of \cref{alg:q-commit}, which makes it difficult to apply standard stochastic approximation results that are typically used to analyze Q-learning \citep{tsitsiklis1994asynchronous,borkar2000ode}.
Our convergence result is instead based on the recent development by \citet{liu2025ode} which extends the Borkar-Meyn theorem to Markovian noise.

\begin{lemma}\label{lem:ode}
    Let $\mathcal E$ be an environment and $\pi$ a behavior policy such that \cref{ass:proper,ass:explore,ass:alpha} hold.
    Then, the iterates $Q_t$ of \cref{alg:q-commit} converge almost surely to a solution $Q_\star$ satisfying
    \begin{equation}\label{eq:q-commit}
        Q_\star(z, \omega) = \mathbb E_\mu\bigl[r(z') + \max_{\omega' \in \Omega}Q_\star(z', \omega')\mid z, \omega\bigr]\text.
    \end{equation}
\end{lemma}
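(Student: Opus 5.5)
We cast \cref{alg:q-commit} as a stochastic approximation driven by the Markov chain $(\xi_t)$ of \cref{lem:mu}. Write the update as
\begin{equation*}
    Q_{t+1} = Q_t + \alpha_t\, H(Q_t, \xi_t),\qquad
    H(Q, (x, \omega, x'))_{(\tilde z, \tilde\omega)} \doteq [\tilde z = \varphi(x), \tilde\omega = \omega]\bigl(r(\varphi(x')) + \max_{\omega'} Q(\varphi(x'), \omega') - Q(\tilde z, \tilde\omega)\bigr),
\end{equation*}
with the convention $Q(z^\bot, \cdot) \equiv 0$. The entries $Q(z^\bot, \cdot)$ are never updated, so they stay at zero. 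The map $H$ is Lipschitz in $Q$, uniformly over the finite set $\Xi$. By \cref{lem:mu}, $(\xi_t)$ is an irreducible finite chain with stationary distribution $\mu$. We then invoke the Borkar--Meyn extension of \citet{liu2025ode}. Its hypotheses are: a finite irreducible Markov noise chain; Lipschitz $H$; step sizes as in \cref{ass:alpha}; existence of the scaled limit $h_\infty(Q) = \lim_{c\to\infty} h(cQ)/c$; global asymptotic stability of the origin for $\dot Q = h_\infty(Q)$; and a globally asymptotically stable equilibrium of the mean ODE $\dot Q = h(Q)$, where $h(Q) \doteq \mathbb E_{\xi\sim\mu}[H(Q,\xi)]$. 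Under these hypotheses we get boundedness and almost-sure convergence of $Q_t$ to that equilibrium.

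Next we compute the mean field. Let $d(z, \omega) \doteq \mu(z, \omega) > 0$; positivity follows from \cref{ass:explore} and reachability. Then
\begin{equation*}
    h(Q)(z, \omega) = d(z, \omega)\bigl(\mathcal T Q(z, \omega) - Q(z, \omega)\bigr),\qquad
    \mathcal T Q(z, \omega) \doteq \mathbb E_\mu\bigl[r(z') + \max_{\omega'}Q(z', \omega') \bigm| z, \omega\bigr].
\end{equation*}
The kernel $\mu(z' \mid z, \omega)$ is exactly the transition kernel of the aggregate MDP $\hat{\mathcal M}_\pi$ of the $\pi$-rewiring, which is the role of \cref{lem:phat}. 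So $\mathcal T$ is the optimal Bellman operator of an episodic MDP on $\mathcal Z$ with absorbing state $z^\bot$. The fixed points of $\mathcal T$ are exactly the solutions of \cref{eq:q-commit}.

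The key step, and the main obstacle, is showing that $\mathcal T$ is a contraction in a weighted sup-norm. This requires every policy of $\hat{\mathcal M}_\pi$ to be proper. The plan is to derive this from \cref{ass:proper}, either through the quasi-Markov environment $\bar{\mathcal E}_\pi$ or directly. Fix a reactive $\pi'$. Termination under $\pi'$ in $\hat{\mathcal M}_\pi$ corresponds to termination of a chain in which each feature is entered according to the $\mu$-induced entrance distribution, which is a mixture of genuine entrance distributions. A non-terminating closed class in $\hat{\mathcal M}_\pi$ would therefore yield a set of states of $\mathcal E$ that a reactive policy can enter and never leave toward $x^\bot$, contradicting \cref{ass:proper}. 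If this is awkward to write directly, an alternative is to use that $\gamma_\pi<1$ and argue via the substochastic matrices $\Phi T_\omega \Psi_\omega$ with $\rho<1$, as in \cref{lem:inverse}.

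Once properness holds, \citet[Prop.~2.2.1]{bertsekas1996neuro}-type arguments give a weighted max-norm $\|\cdot\|_\nu$ in which $\mathcal T$ is a $\beta$-contraction. Hence $\mathcal T$ has a unique fixed point $Q_\star$. The ODE $\dot Q = D(\mathcal TQ - Q)$ with positive diagonal $D$ is globally asymptotically stable at $Q_\star$, by the standard Lyapunov function $\|Q - Q_\star\|_\nu$. The scaled limit is $h_\infty(Q) = D(\mathcal T_0 Q - Q)$, where $\mathcal T_0$ is $\mathcal T$ with the reward set to zero. $\mathcal T_0$ is again a $\beta$-contraction with fixed point $0$, so the origin is globally asymptotically stable. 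The theorem of \citet{liu2025ode} then yields $Q_t \to Q_\star$ almost surely.
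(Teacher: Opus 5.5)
Your proposal is correct and follows the paper's proof essentially step for step: the same stochastic-approximation casting driven by $(\xi_t)$, the same appeal to the Markov-noise Borkar--Meyn theorem of \citet{liu2025ode}, the same mean field $h(Q) = M \odot (FQ - Q)$, and the same reduction to the Bellman operator of the episodic MDP with kernel $\mu(z' \mid z, \omega)$, with the zero-reward operator handling $h_\infty$. There are two minor differences. First, you obtain global asymptotic stability from the weighted max-norm Lyapunov function $\|Q - Q_\star\|_\nu$, whereas the paper applies \cref{thm:borkar} to the unweighted-max-norm nonexpansive map $F_\mu Q = Q + M \odot (FQ - Q)$ and uses the weighted contraction only for uniqueness of the fixed point. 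Second, you explicitly flag properness of $\hat{\mathcal M}_\pi$, which the paper asserts only by citing Bertsekas; your closed-class argument goes through, since the balance equation of \cref{lem:mu} together with \cref{ass:explore} shows that the states $x$ with $\varphi(x)$ in such a class and $\mu\bigl(x, \pi'(\varphi(x))\bigr) > 0$ form a set that $\pi'$ never leaves in $\mathcal E$.
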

\begin{proof}\vspace{-2mm}
    See \cref{proof:lem:ode}.\vspace{-2mm}
\end{proof}

Thus, $Q_\star$ is the optimal action-value function in an MDP on $\mathcal Z$ with transition dynamics $\mu(z' \mid z, \omega)$.
We now show that this MDP can be interpreted as an aggregate MDP.

\begin{definition}[$\pi$-rewiring $\bar{\mathcal E}_\pi$, $\pi$-MDP $\hat{\mathcal M}_\pi$]\label{def:epi}
    Let $\mathcal E$ be an environment with initial state distribution $p_0$ and transition kernel $\{T_u\}$ and $\pi$ a behavior policy such that \cref{ass:proper,ass:explore} hold.
    Then, the \emph{$\pi$-rewiring} of $\mathcal E$ is the environment $\bar{\mathcal E}_\pi$ with initial state distribution $\bar p_0 \doteq \Sigma\Phi p_0$ and transition kernel
    \begin{equation*}
        (\bar T_u)_{x', x} \doteq \begin{cases}
            (T_u)_{x',x} &\text{if }\varphi(x') = \varphi(x)\\
            (\Sigma\Phi T_u)_{x',x} &\text{otherwise}\\
        \end{cases}
    \end{equation*}
    for all $u \in \mathcal U$ and $x, x' \in \mathcal X$.
    The entrance matrix $\Sigma \in \Delta_{\mathcal X}^{\mathcal Z}$ is defined by the columns $\sigma_z \doteq \tilde\sigma_z / \bm 1^\top \tilde \sigma_z$ with
    \begin{equation*}
        \tilde\sigma_z \doteq \sum_{\omega \in \Omega}\Pi_z T_\omega \Pi_z^\bot \mu_\omega + (1 - \gamma_{\pi})\Pi_z p_0.
    \end{equation*}
    Furthermore, the \emph{$\pi$-MDP} $\hat{\mathcal M}_\pi$ associated with $\mathcal E$ is defined as the aggregate MDP corresponding to $\bar{\mathcal E}_\pi$.\vspace{-2mm}
\end{definition}

\begin{figure}
    \begin{subfigure}[b]{0.45\linewidth}
        \centering
        \tikzexternalenable
        \begin{tikzpicture}[line width=0.3mm, x=1.7\dw, y=2\dw]
    \coordinate (c') at (2, 0);
    \coordinate (d') at (2, 1);
    \node[draw=ocol!75!black, rounded corners, fill=ocol!5,
        minimum width=0.8cm, minimum height=1.6cm] (z) at ($(c')!0.5!(d')$) {};
    \node at (4, 2) {\bfseries (a)};
    \node[draw, line width=0.2mm] at (2, 2) {$\mathcal E$};

    \node[draw, fill=white, circle, inner sep=1pt, minimum width=10pt] (a) at (0, 0) {\tiny $\mathtt{a}$};
    \node[draw, fill=white, circle, inner sep=1pt, minimum width=10pt] (b) at (0, 1) {\tiny $\mathtt{b}$};
    \node[draw, fill=white, circle, inner sep=1pt, minimum width=10pt] (c) at (c') {\tiny $\mathtt{c}$};
    \node[draw, fill=white, circle, inner sep=1pt, minimum width=10pt] (d) at (d') {\tiny $\mathtt{d}$};
    \node[draw, fill=white, inner sep=4pt] (e) at (0, 2) {};
    \node[draw, fill=white, inner sep=4pt] (f) at (4, 0) {};
    \node[draw, fill=white, inner sep=4pt] (g) at (4, 1) {};

    %
    \node at ($(e) + (10pt, 0)$) {\tiny $0$};
    \node at ($(f) + (0, 10pt)$) {\tiny $+1$};
    \node at ($(g) + (0, 10pt)$) {\tiny $-1$};

    \draw[->, line width=0.2mm] (a) to (b);
    \draw[->, line width=0.2mm] (b) to (e);
    \draw[->, line width=0.2mm] (-0.5, 0) to (a);
    \draw[->, line width=0.2mm] (c) to (f);
    \draw[->, line width=0.2mm] (d) to (g);

    \draw[->, line width=0.2mm] (a) to (c);
    \draw[->, line width=0.2mm] (b) to (d);

\end{tikzpicture}
        \tikzexternaldisable
        \phantomsubcaption\label{fig:nrr-a}
    \end{subfigure}
    \hfill
    \begin{subfigure}[b]{0.45\linewidth}
        \centering
        \tikzexternalenable
        \begin{tikzpicture}[line width=0.3mm, x=1.7\dw, y=2\dw]
    \coordinate (c') at (2.7, 0);
    \coordinate (d') at (2.7, 1);
    \node[draw=ocol!75!black, rounded corners, fill=ocol!5,
        minimum width=0.8cm, minimum height=1.6cm] (z) at ($(c')!0.5!(d')$) {};
    \node at (4, 2) {\bfseries (b)};
    \node[draw, line width=0.2mm] at (2, 2) {$\mathcal E, \pi$};

    \node[draw, fill=white, circle, inner sep=1pt, minimum width=10pt] (a) at (0, 0) {\tiny $\mathtt{a}$};
    \node[draw, fill=white, circle, inner sep=1pt, minimum width=10pt] (b) at (0, 1) {\tiny $\mathtt{b}$};
    \node[draw, fill=white, circle, inner sep=1pt, minimum width=10pt] (c) at (c') {\tiny $\mathtt{c}$};
    \node[draw, fill=white, circle, inner sep=1pt, minimum width=10pt] (d) at (d') {\tiny $\mathtt{d}$};
    \node[draw, fill=white, inner sep=4pt] (e) at (0, 2) {};
    \node[draw, fill=white, inner sep=4pt] (f) at (4, 0) {};
    \node[draw, fill=white, inner sep=4pt] (g) at (4, 1) {};

    %
    \node at ($(e) + (10pt, 0)$) {\tiny $0$};
    \node at ($(f) + (0, 10pt)$) {\tiny $+1$};
    \node at ($(g) + (0, 10pt)$) {\tiny $-1$};

    \draw[->, line width=0.2mm] (a) to node[pos=0.4, fill=white, inner sep=1.5pt] {\tiny $\delta$} (b);
    \draw[->, line width=0.2mm] (b) to node[pos=0.4, fill=white, inner sep=1.5pt] {\tiny $\delta$} (e);
    \draw[->, line width=0.2mm] (-0.5, 0) to (a);
    \draw[->, line width=0.2mm] (c) to (f);
    \draw[->, line width=0.2mm] (d) to (g);
    \draw[->, line width=0.2mm] (a) to node[pos=0.45, fill=white, inner sep=1.5pt] {\tiny $1 - \delta$} (c);
    \draw[->, line width=0.2mm] (b) to node[pos=0.45, fill=white, inner sep=1.5pt] {\tiny $1 - \delta$} (d);

\end{tikzpicture}
        \tikzexternaldisable
        \phantomsubcaption\label{fig:nrr-b}
    \end{subfigure}\\
    \begin{subfigure}[b]{0.45\linewidth}
        \centering
        \tikzexternalenable
        \begin{tikzpicture}[line width=0.3mm, x=1.7\dw, y=2\dw]
    \coordinate (c') at (3, 0);
    \coordinate (d') at (3, 1);
    \node[draw=ocol!75!black, rounded corners, fill=ocol!5,
        minimum width=0.8cm, minimum height=1.6cm] (z) at ($(c')!0.5!(d')$) {};
    \node at (4, 2) {\bfseries (c)};
    \node[draw, line width=0.2mm] at (2, 2) {$\bar{\mathcal E}_\pi$};

    \node[draw, fill=white, circle, inner sep=1pt, minimum width=10pt] (a) at (0, 0) {\tiny $\mathtt{a}$};
    \node[draw, fill=white, circle, inner sep=1pt, minimum width=10pt] (b) at (0, 1) {\tiny $\mathtt{b}$};
    \node[draw, fill=white, circle, inner sep=1pt, minimum width=10pt] (c) at (3, 0) {\tiny $\mathtt{c}$};
    \node[draw, fill=white, circle, inner sep=1pt, minimum width=10pt] (d) at (3, 1) {\tiny $\mathtt{d}$};
    \node[fill=black, inner sep=1.5pt, circle] (u1) at (0.7, 0) {};
    \node[fill=black, inner sep=1.5pt, circle] (u2) at (0.7, 1) {};
    \node[draw, fill=white, inner sep=4pt] (e) at (0, 2) {};
    \node[draw, fill=white, inner sep=4pt] (f) at (4, 0) {};
    \node[draw, fill=white, inner sep=4pt] (g) at (4, 1) {};

    \node at ($(e) + (10pt, 0)$) {\tiny $0$};
    \node at ($(f) + (0, 10pt)$) {\tiny $+1$};
    \node at ($(g) + (0, 10pt)$) {\tiny $-1$};

    \draw[->, line width=0.2mm] (a) to (b);
    \draw[->, line width=0.2mm] (b) to (e);
    \draw[->, line width=0.2mm] (-0.5, 0) to (a);
    \draw[->, line width=0.2mm] (c) to (f);
    \draw[->, line width=0.2mm] (d) to (g);
    \draw[line width=0.2mm, scale=0.5] (a) to (u1);
    \draw[line width=0.2mm, scale=0.5] (b) to (u2);
    \draw[->, line width=0.2mm] (u1) to node[sloped, pos=0.5, fill=white, inner sep=1.5pt, inner ysep=0pt] {\tiny $1 / \eta$} (c);
    \draw[->, line width=0.2mm] (u2) to node[sloped, pos=0.5, fill=white, inner sep=1.5pt, inner ysep=0pt] {\tiny $\delta / \eta$} (d);
    \draw[->, line width=0.2mm, in=-127, out=33, looseness=1.5] (u1) to node[pos=0.4, fill=white, inner sep=1.5pt, inner ysep=0pt] {\tiny $\delta/\eta$} (d);
    \draw[->, line width=0.2mm, in=127, out=-33, looseness=1.5] (u2) to node[pos=0.4, fill=white, inner sep=1.5pt, inner ysep=0pt] {\tiny $1 / \eta$} (c);
\end{tikzpicture}
        \tikzexternaldisable
        \phantomsubcaption\label{fig:nrr-c}
    \end{subfigure}
    \hfill
    \begin{subfigure}[b]{0.45\linewidth}
        \centering
        \tikzexternalenable
        \begin{tikzpicture}[line width=0.3mm, x=1.7\dw, y=2\dw]
    \coordinate (c') at (1.5, 0);
    \coordinate (d') at (1.5, 1);
    \node[draw=ocol!75!black, rounded corners, fill=ocol!5,
        minimum width=0.8cm, minimum height=1.6cm] (z) at ($(c')!0.5!(d')$) {};
    \node at (4, 2) {\bfseries (d)};
    \node[draw, line width=0.2mm] at (2, 2) {$\hat{\mathcal M}_\pi$};

    \node[draw, fill=white, circle, inner sep=1pt, minimum width=10pt] (a) at (0, 0) {\tiny $\mathtt{a}$};
    \node[draw, fill=white, circle, inner sep=1pt, minimum width=10pt] (b) at (0, 1) {\tiny $\mathtt{b}$};
    \node[draw, fill=white, inner sep=4pt] (e) at (0, 2) {};
    \node[draw, fill=white, inner sep=4pt] (f) at (4, 0) {};
    \node[draw, fill=white, inner sep=4pt] (g) at (4, 1) {};

    \node at ($(e) + (10pt, 0)$) {\tiny $0$};
    \node at ($(f) + (0, 10pt)$) {\tiny $+1$};
    \node at ($(g) + (0, 10pt)$) {\tiny $-1$};

    \draw[->, line width=0.2mm] (a) to (b);
    \draw[->, line width=0.2mm] (b) to (e);
    \draw[->, line width=0.2mm] (-0.5, 0) to (a);
    \draw[->, line width=0.2mm, out=0, in=180, in looseness=1.5] (a) to (z);
    \draw[->, line width=0.2mm, out=0, in=180, in looseness=1.5] (b) to (z);

    \draw[->, line width=0.2mm] (z) to[out=0, in=180, looseness=1] ($(f) - (1.3, 0)$) to node[sloped, pos=0.4, fill=white, inner sep=1.5pt, inner ysep=0pt] {\tiny $1 / \eta$} (f);
    \draw[->, line width=0.2mm] (z) to[out=0, in=180, looseness=1] ($(g) - (1.3, 0)$) to node[sloped, pos=0.4, fill=white, inner sep=1.5pt, inner ysep=0pt] {\tiny $\delta / \eta$} (g);
\end{tikzpicture}
        \tikzexternaldisable
        \phantomsubcaption\label{fig:nrr-d}
    \end{subfigure}
    \caption{\textbf{(a)} A non-rewire-robust environment $\mathcal E = (\mathcal M, \varphi)$, where $\varphi(\mathtt{c}) = \varphi(\mathtt{d}) \doteq z$. No single value $v_\star(z)$ defines an optimal greedy policy. \textbf{(b)} A behavior policy $\pi$ is applied to the environment $\mathcal E$. \textbf{(c)} The $\pi$-rewiring $\bar{\mathcal E}_\pi$ of the environment $\mathcal E$ under the policy $\pi$ is a quasi-Markov rewiring of $\mathcal E$ in which the entrance distributions are derived from the on-policy distribution $\mu$. \textbf{(d)} The $\pi$-MDP $\hat{\mathcal M}_\pi$ is the aggregate MDP corresponding to $\bar{\mathcal E}_\pi$.}
    \vspace{-2mm}
\end{figure}

\Cref{lem:epi} proves that $\bar{\mathcal E}_\pi$ is a quasi-Markov rewiring of $\mathcal E$ satisfying \cref{ass:proper}, such that $\hat{\mathcal M}_\pi$ is well-defined.
The entrance distribution $\sigma_z$ is chosen to reflect the `average' entrance distribution into $z$ under the behavior policy $\pi$.
Consider again the environment shown in \cref{fig:nrr-a}.
Suppose the behavior policy $\pi$ moves up with probability $\delta$ and right with probability $1 - \delta$ (\cref{fig:nrr-b}).
In this case, the green feature is entered at state $\mathtt{c}$ with probability $1 - \delta$ (from $\mathtt{a}$) and at state $\mathtt d$ with probability $\delta (1-\delta)$ (from $\mathtt b$).
We thus see that the $\pi$-rewiring is given by \cref{fig:nrr-c}, where $\eta \doteq 1 + \delta$ is the normalizing constant.
Finally, the $\pi$-MDP is given by \cref{fig:nrr-d}, as the distribution over the states in the green feature of $\bar{\mathcal E}_\pi$ is given by $\psi(\mathtt c) = 1/\eta$ and $\psi(\mathtt d) = \delta/\eta$.
The following lemma provides the bridge between our convergence result (\cref{lem:ode}) and the quasi-Markov theory developed in \cref{sec:quasi}.

\begin{lemma}\label{lem:phat}
    Let $\mathcal E$ be an environment and $\pi$ a behavior policy such that \cref{ass:proper,ass:explore} hold.
    Then, if $\hat{\mathcal M}_\pi$ is the corresponding $\pi$-MDP with transition kernel $\{\hat T_\omega\}$,
    \begin{equation*}
        \mu(z' \mid z, \omega) = (\hat T_\omega)_{z', z}
    \end{equation*}
    for all $z, z' \in \mathcal Z$, and all $\omega \in \Omega$.
\end{lemma}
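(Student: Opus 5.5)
We want $\mu(z'\mid z,\omega) = \varphi_{z'}^\top \bar T_\omega \psi_z^\omega$. Here $\psi_z^\omega \propto (I-\Pi_z\bar T_\omega)^{-1}\sigma_z$, and $\Sigma$ and $\{\bar T_u\}$ are those of the $\pi$-rewiring (\cref{def:epi}). The approach is to identify the conditional on-policy distribution over states inside feature $z$ under option $\omega$ with $\psi_z^\omega$. Then we push it through one transition.

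\textbf{Step 1.} Using \cref{lem:mu},
\begin{equation*}
    \mu(z'\mid z,\omega) = \frac{\sum_{x\in\mathcal X_z}\mu(x,\omega)\,\varphi_{z'}^\top (T_\omega)_{:,x}}{\sum_{x\in\mathcal X_z}\mu(x,\omega)} = \frac{\varphi_{z'}^\top T_\omega \Pi_z\mu_\omega}{\bm 1^\top\Pi_z\mu_\omega}.
\end{equation*}
By condition (ii) of \cref{def:rewiring}, which holds for the $\pi$-rewiring as shown in \cref{lem:epi}, we have $\Phi\bar T_\omega = \Phi T_\omega$. Condition (ii) follows directly from $\Phi\Sigma\Phi = \Phi$, since the columns of $\Sigma$ are supported in their own feature and normalized. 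So we may replace $T_\omega$ by $\bar T_\omega$ in the numerator. It therefore suffices to show $\Pi_z\mu_\omega \propto \psi_z^\omega$, i.e.
\begin{equation*}
    \Pi_z\mu_\omega = c\,(I-\Pi_z T_\omega)^{-1}\sigma_z
\end{equation*}
for some scalar $c>0$. This uses $\Pi_z\bar T_\omega\Pi_z = \Pi_z T_\omega\Pi_z$ and $\Pi_z\mu_\omega = \Pi_z\Pi_z\mu_\omega$, so that $(I-\Pi_z\bar T_\omega)^{-1}$ acting on vectors supported in $\mathcal X_z$ coincides with $(I-\Pi_z T_\omega\Pi_z)^{-1}$ there.

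\textbf{Step 2.} Rewrite the balance equation of \cref{lem:mu} as
\begin{equation*}
    (I-\Pi_z T_\omega\Pi_z)\,\Pi_z\mu_\omega = \pi(\omega\mid z)\Bigl[(1-\gamma_\pi)\Pi_z p_0 + \sum_{\omega'}\Pi_z T_{\omega'}\Pi_z^\bot\mu_{\omega'}\Bigr] = \pi(\omega\mid z)\,\tilde\sigma_z.
\end{equation*}
Invertibility holds by \cref{lem:inverse}. One technical point is that $\Pi_z T_\omega\Pi_z$ versus $\Pi_z\bar T_\omega$ differ by $\Pi_z\bar T_\omega\Pi_z^\bot$. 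Their inverses agree only on $\operatorname{range}\Pi_z$, so I would argue via the Neumann series that $(I-\Pi_z\bar T_\omega)^{-1}v = (I-\Pi_z\bar T_\omega\Pi_z)^{-1}v$ for $v=\Pi_z v$: every power $(\Pi_z\bar T_\omega)^\tau v$ stays in $\operatorname{range}\Pi_z$. This gives
\begin{equation*}
    \Pi_z\mu_\omega = \pi(\omega\mid z)\,(\bm 1^\top\tilde\sigma_z)\,\tilde\psi_z^\omega.
\end{equation*}

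\textbf{Step 3.} Normalize. The scalar cancels between numerator and denominator of Step 1, giving $\mu(z'\mid z,\omega) = \varphi_{z'}^\top\bar T_\omega\psi_z^\omega = (\hat T_\omega)_{z',z}$. The degenerate case $\mu(z,\omega)=0$ needs a remark. \Cref{ass:explore} gives $\pi(\omega\mid z)>0$, and $\tilde\sigma_z\neq 0$ by reachability of $z$, so the denominator is positive.

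The main obstacle is Step 2's bookkeeping. We must confirm that the balance equation of \cref{lem:mu} exactly produces $\tilde\sigma_z$ with the same $\gamma_\pi$ factor. We must also confirm that the inverse on the subspace is the one defining $\psi_z^\omega$. Both appear to line up by construction of \cref{def:epi}.
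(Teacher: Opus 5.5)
Your proposal is correct and matches the paper's proof: you write $\mu(z'\mid z,\omega)=\varphi_{z'}^\top \bar T_\omega \Pi_z\mu_\omega/\varphi_z^\top\mu_\omega$ using $\Phi\bar T_\omega=\Phi T_\omega$, then use the balance equation of \cref{lem:mu} to get $\pi(\omega\mid z)\tilde\sigma_z=(I-\Pi_z\bar T_\omega)\Pi_z\mu_\omega$ and invert. The Neumann-series detour in Step 2 is harmless but unnecessary: since $\Pi_z\mu_\omega\in\operatorname{range}\Pi_z$, you already have $(I-\Pi_z\bar T_\omega)\Pi_z\mu_\omega=(I-\Pi_z T_\omega\Pi_z)\Pi_z\mu_\omega$, so you can apply $(I-\Pi_z\bar T_\omega)^{-1}$ directly, which is exactly what the paper does.
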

\begin{proof}\vspace{-2mm}
    See \cref{proof:lem:phat}.\vspace{-2mm}
\end{proof}

Thus, \eqref{eq:q-commit} is in fact the Bellman optimality equation of the $\pi$-MDP $\hat{\mathcal M}_\pi$.
This means that $\hat \pi_\star$ is an optimal policy in $\hat{\mathcal M}_\pi$, and, by \cref{lem:optimal}, it is an optimal reactive policy in $\bar{\mathcal E}_\pi$.
To guarantee that $\hat \pi_\star$ is also optimal in $\mathcal E$, we need the following assumption.

\begin{definition}[$\pi$-rewire-robustness]\label{def:prr}
    Let $\mathcal E$ be an environment and $\pi$ a behavior policy such that \cref{ass:proper,ass:explore} hold.
    Then, $\mathcal E$ is \emph{$\pi$-rewire-robust} if any optimal reactive policy in $\bar{\mathcal E}_\pi$ is also optimal in $\mathcal E$.\vspace{-2mm}
\end{definition}

Thus, we have proved (e) of \cref{thm:main}.
Note that if $\hat\pi_\star$ is the unique minimizer of $Q_\star$, then the inverse direction of (e) follows immediately, since $\hat\pi_\star$ is the only optimal policy of $\hat{\mathcal M}_\pi$ and, by \cref{lem:optimal}, of $\bar{\mathcal E}_\pi$.
The implications (b) and (d) of \cref{thm:main} are trivial, since any rewiring is a generalized rewiring by definition, and we have shown that the $\pi$-rewiring is a rewiring (\cref{lem:epi}).
Implication (a) is proved in \cref{prop:qstar} and (c) is proved in \cref{lem:qmrr}.
Furthermore (a) is strict, as the corridor environment (\cref{fig:corridor-a}) is generalized rewire-robust, but not $q_\star$-realizable.
Implications (b) and (c) are strict, as the environment shown in \cref{fig:rr-a} is rewire-robust, but neither generalized rewire-robust, nor quasi-Markov.
Finally, the strictness of (d) is proved in \cref{lem:prr}.

\vspace{-1ex}
\section{Related work}
\vspace{-0.5ex}
The concept of state aggregation goes back to the early days of dynamic programming \citep{whitt1978approximations} and has been a part of reinforcement learning since the field's inception \citep{barto1983neuronlike}.
Soon after the initial convergence results for Q-learning based on stochastic approximation theory were established \citep{tsitsiklis1994asynchronous,jaakkola1993convergence}, similar results were proved for reinforcement learning with state aggregation \citep{tsitsiklis1996feature,singh1994reinforcement,singh1994learning,gordon1995stable}.
More recently, the theory of state aggregation has been extended by several researchers \citep{van2006performance,hutter2016extreme,majeed2018q,dong2019provably}.
All these results either require $q_\star$-realizability to guarantee convergence to an optimal reactive policy, or they simply prove convergence of the algorithm without an interpretation of the solution.
An assumption about the environment is likely necessary to enable an efficient algorithm, as \citet{littman1994memoryless} showed that the problem of finding a reactive optimal policy under state aggregation is NP-complete.
To the best of our knowledge, the \emph{rewire-robustness} that we propose is the weakest assumption under which convergence to the optimal reactive policy has been proved.

The \emph{aggregate MDP} that we define in \cref{sec:quasi} is a classic tool for solving dynamic programming problems with state aggregation \citep[Section 6.5]{bertsekas2012dynamic}.
However, there is little discussion on how to choose the disaggregation probabilities $\psi_z(x)$, since under $q_\star$-realizability, solving the aggregate MDP with \emph{any} disaggregation matrix $\Psi$ will yield the optimal value function.
To the best of our knowledge, the work of \citet{van2006performance} is the only study in which the disaggregation probabilities are chosen as in this paper, namely as the Bayesian posterior probabilities under the stationary distribution of $\pi$.
However, since \citeauthor{van2006performance} considers general environments, not quasi-Markov environments, the posterior probabilities $\psi_z(x)$ depend on the policy $\pi$ (and not just on the action inside feature $z$ as for us).
This makes it difficult to use this choice of disaggregation in practice, and indeed, while \citeauthor{van2006performance} studies the projected Bellman equation associated with this aggregate MDP, and shows that the Bayesian disaggregation matrix enables a better performance loss bound for value iteration, he does not propose a concrete algorithm for solving the projected Bellman equation.
In fact, a few years later, \citet[p.\ 327]{bertsekas2011approximate} writes
\textit{``It is not clear whether it is practically advantageous to select [$\Psi$] in the manner suggested by Van Roy.''}

\begin{figure}
    \centering
    \begin{subfigure}{\linewidth}
        \centering
        \tikzexternalenable
        \def\ya{\tikz[baseline=-0.5ex, line width=0.3mm] \draw[-{>[length=1mm]}] (0,-0.3\dw) -- (0,0.3\dw);}
\def\yb{\tikz[baseline=-0.5ex, line width=0.3mm] \draw[-{>[length=1mm]}] (0,0.3\dw) -- (0,-0.3\dw);}
\def\yo{\tikz[baseline=-0.5ex, line width=0.3mm] \draw[-{>[length=1mm]}] (-0.3\dw,0) -- (0.3\dw,0);}
\begin{tikzpicture}[line width=0.3mm]
    \node[] at (0\dw, 2\dw) {\textbf{(a)}};
    \node[] at (0\dw, -3\dw) {\textbf{(b)}};


    \node[rectangle, minimum width=2\dw, minimum height=2\dw, outer sep=0pt] (x0) at (-6\dw, 0)  {};
    \node[fill=xkcdSunflower!80!black!50, rectangle, minimum width=2\dw, minimum height=2\dw] at (6\dw, 0)  {};
    \node[rectangle, minimum width=2\dw, minimum height=2\dw, outer sep=0pt] (xt) at (6\dw, 2\dw)  {};
    \node[rectangle, minimum width=2\dw, minimum height=2\dw, outer sep=0pt] (xb) at (6\dw, -2\dw)  {};
    \draw[->, shorten >= 0.5pt] (-8\dw, 0) -- (x0);
    \path[fill=xkcdPaleRed!70] (x0.south west) -- (x0.north east) -- (x0.north west);
    \path[fill=xkcdOceanBlue!70] (x0.south west) -- (x0.north east) -- (x0.south east);
    \path[draw] (x0.south west) -- (x0.north east);
    
    \path[fill=xkcdPaleRed!70] (xt.south west) -- (xt.north east) -- (xt.north west);
    \path[fill=xkcdOceanBlue!70] (xt.south west) -- (xt.north east) -- (xt.south east);
    \path[draw] (xt.south west) -- (xt.north east);

    \path[fill=xkcdPaleRed!70] (xb.south west) -- (xb.north east) -- (xb.north west);
    \path[fill=xkcdOceanBlue!70] (xb.south west) -- (xb.north east) -- (xb.south east);
    \path[draw] (xb.south west) -- (xb.north east);

    \foreach \x in {-5, -3, 1, 3}
        \fill[color=xkcdSageGreen!70] (\x\dw, -\dw) rectangle +(2\dw, 2\dw);
    \foreach \x in {-7, -5, ..., 5}
        \draw[line width=0.4mm] (\x\dw, -\dw) rectangle +(2\dw, 2\dw);

    \foreach \x in {-4, -2, 2, 4} {
        \foreach \ang in {0,5,...,360}
            \node[shift={(\ang:0.5pt)}, xkcdSageGreen!40] at (\x\dw, 0)  {\yo};
        \node at (\x\dw, 0) {\yo};
    }
    \node[circle, fill=black, minimum width=1.5pt, inner sep=0pt] at (0, 0) {};
    \node[circle, fill=black, minimum width=1.5pt, inner sep=0pt] at (-4pt, 0) {};
    \node[circle, fill=black, minimum width=1.5pt, inner sep=0pt] at (4pt, 0) {};

    \foreach \ang in {0,5,...,360}
        \node[shift={(\ang:0.5pt)}, xkcdSunflower!80!black!20] at (6\dw, 0)  {\large \texttt{?}};
    \node[rectangle, minimum width=2\dw, minimum height=2\dw] at (6\dw, 0)  {\large \texttt{?}};

    \foreach \ang in {0,5,...,360}
        \node[anchor=south east, inner sep=0, yshift=3, xshift=1, shift={(\ang:0.5pt)}, xkcdPaleRed!40!white] at (6\dw, 2\dw)  {\footnotesize $+1$};
    \node[anchor=south east, inner sep=0, yshift=3, xshift=1] at (6\dw, 2\dw)  {\footnotesize $+1$};
    \foreach \ang in {0,5,...,360}
        \node[anchor=north west, inner sep=0, yshift=-3, xshift=-2, shift={(\ang:0.5pt)}, xkcdOceanBlue!40!white] at (6\dw, 2\dw)  {\footnotesize $-1$};
    \node[anchor=north west, inner sep=0, yshift=-3, xshift=-2] at (6\dw, 2\dw)  {\footnotesize $-1$};

    \foreach \ang in {0,5,...,360}
        \node[anchor=south east, inner sep=0, yshift=3, xshift=1, shift={(\ang:0.5pt)}, xkcdPaleRed!40!white] at (6\dw, -2\dw)  {\footnotesize $-1$};
    \node[anchor=south east, inner sep=0, yshift=3, xshift=1] at (6\dw, -2\dw)  {\footnotesize $-1$};
    \foreach \ang in {0,5,...,360}
        \node[anchor=north west, inner sep=0, yshift=-3, xshift=-2, shift={(\ang:0.5pt)}, xkcdOceanBlue!40!white] at (6\dw, -2\dw)  {\footnotesize $+1$};
    \node[anchor=north west, inner sep=0, yshift=-3, xshift=-2] at (6\dw, -2\dw)  {\footnotesize $+1$};

    \node[draw, rectangle, minimum width=2\dw, minimum height=2\dw, line width=0.4mm] at (6\dw, 2\dw)  {};
    \node[draw, rectangle, minimum width=2\dw, minimum height=2\dw, line width=0.4mm] at (6\dw, -2\dw)  {};
    
    \foreach \ang in {0,5,...,360}
        \node[anchor=south east, inner sep=1, xshift=-1.5, shift={(\ang:0.5pt)}, xkcdPaleRed!40!white] at (-6\dw, 0) {\ya};
    \node[anchor=south east, inner sep=1, xshift=-1.5] at (-6\dw, 0) {\ya};
    
    \foreach \ang in {0,5,...,360}
        \node[anchor=north west, inner sep=1, xshift=1.5, shift={(\ang:0.5pt)}, xkcdOceanBlue!40!white] at (-6\dw, 0) {\yb};
    \node[anchor=north west, inner sep=1, xshift=1.5] at (-6\dw, 0) {\yb};

    \draw[color=xkcdSteelGrey, rounded corners, fill=xkcdSteelGrey!5, densely dotted] (-6.1\dw, -4\dw) rectangle (4\dw, -8\dw);
    \draw[color=xkcdSteelGrey, rounded corners, fill=xkcdSteelGrey!5, densely dotted] (4.2\dw, -4\dw) rectangle (5.8\dw, -8\dw);

    \draw[color=xkcdSageGreen, rounded corners, fill=xkcdSageGreen!10] (-5.9\dw, -4.2\dw) rectangle (3.8\dw, -5.8\dw);
    \draw[color=xkcdSageGreen, rounded corners, fill=xkcdSageGreen!10] (-5.9\dw, -6.2\dw) rectangle (3.8\dw, -7.8\dw);

    \node[draw, fill=xkcdPaleRed!70, circle] (0) at (-7\dw, -5\dw) {};
    \node[draw, fill=xkcdSageGreen!70, circle] (1) at (-5\dw, -5\dw) {};
    \node[draw, fill=xkcdSageGreen!70, circle] (2) at (-3\dw, -5\dw) {};
    \node[inner sep=1pt] (3) at (-1\dw, -5\dw) {$\ \cdots$};
    \node[draw, fill=xkcdSageGreen!70, circle] (4) at (1\dw, -5\dw) {};
    \node[draw, fill=xkcdSageGreen!70, circle] (5) at (3\dw, -5\dw) {};
    \node[draw, fill=xkcdSunflower!80!black!50, circle] (6) at (5\dw, -5\dw) {};

    \draw[->, line width=0.2mm] (-8\dw, -5\dw) -- (0);
    \draw[<->, line width=0.2mm] (0) -- (1);
    \draw[<->, line width=0.2mm] (1) -- (2);
    \draw[<->, line width=0.2mm] (2) -- (3);
    \draw[<->, line width=0.2mm] (3) -- (4);
    \draw[<->, line width=0.2mm] (4) -- (5);
    \draw[<->, line width=0.2mm] (5) -- (6);

    \node[draw, fill=xkcdOceanBlue!70, circle] (0') at (-7\dw, -7\dw) {};
    \node[draw, fill=xkcdSageGreen!70, circle] (1') at (-5\dw, -7\dw) {};
    \node[draw, fill=xkcdSageGreen!70, circle] (2') at (-3\dw, -7\dw) {};
    \node[inner sep=1pt] (3') at (-1\dw, -7\dw) {$\ \cdots$};
    \node[draw, fill=xkcdSageGreen!70, circle] (4') at (1\dw, -7\dw) {};
    \node[draw, fill=xkcdSageGreen!70, circle] (5') at (3\dw, -7\dw) {};
    \node[draw, fill=xkcdSunflower!80!black!50, circle] (6') at (5\dw, -7\dw) {};

    \draw[->, line width=0.2mm] (-8\dw, -7\dw) -- (0');
    \draw[<->, line width=0.2mm] (0') -- (1');
    \draw[<->, line width=0.2mm] (1') -- (2');
    \draw[<->, line width=0.2mm] (2') -- (3');
    \draw[<->, line width=0.2mm] (3') -- (4');
    \draw[<->, line width=0.2mm] (4') -- (5');
    \draw[<->, line width=0.2mm] (5') -- (6');
    

    \node[draw] (m) at ($(7\dw, -6\dw) - (3.25pt, 0)$) {};
    \node[draw] (p) at ($(7\dw, -4\dw) - (3.25pt, 0)$) {};
    \node[draw] (p') at ($(7\dw, -8\dw) - (3.25pt, 0)$) {};
    \node[scale=0.8] at (7.8\dw, -4\dw) {$+1$};
    \node[scale=0.8] at (7.8\dw, -6\dw) {$-1$};
    \node[scale=0.8] at (7.8\dw, -8\dw) {$+1$};
    \draw[->, line width=0.2mm, rounded corners=2mm] (6) |- (p);
    \draw[->, line width=0.2mm, rounded corners=2mm] (6) |- (m);
    \draw[->, line width=0.2mm, rounded corners=2mm] (6') |- (p');
    \draw[->, line width=0.2mm, rounded corners=2mm] (6') |- (m);
\end{tikzpicture}
        \tikzexternaldisable
        \phantomsubcaption\label{fig:tmaze-a}
        \phantomsubcaption\label{fig:tmaze-b}
    \end{subfigure}
    \caption{\textbf{(a)} The T-maze environment is a classic problem in partially observable RL. \textbf{(b)} The POMDP representation of the T-maze. The memory structure $z_t \doteq (y_0, y_t)$ defines a rewire-robust state aggregation of the underlying MDP, shown with green bubbles. The dotted gray bubbles show what the state aggregation would look like without memory. This state aggregation is not rewire-robust, and a reactive policy is not sufficient.}
    \label{fig:tmaze}
    \vspace{-5mm}
\end{figure}

The corridor environment (\cref{fig:corridor}) is an augmented version of the T-maze \citep{bakker2001reinforcement}, which is a classic example in the literature on partially observable RL \citep{eberhard-2025-partially,allen2024mitigating}.
The T-maze is shown in \cref{fig:tmaze-a}.
The agent receives an observation of either `up' or `down' when entering the maze from the left.
It then enters a corridor before reaching a decision point `\texttt{?}', where the optimal action (up or down) is determined by the initial observation.
In this environment, a reactive policy is clearly not sufficient, and a memory of past observations is needed.
The optimal memory is $z_t = (y_0, y_t)$, which contains the first and current observations.
In \cref{fig:tmaze-b}, the POMDP representation of this environment is shown, and it can be seen that this choice of memory is a state aggregation of the underlying MDP.
Furthermore, this environment is clearly rewire-robust, for the same reason as the corridor environment.
Thus, given this memory mechanism, Committed Q-learning will find the optimal policy mapping memory to actions.

\section{Conclusion}
We have introduced \emph{Committed Q-learning}, an algorithm which provably converges to the optimal reactive policy in POMDPs with deterministic observations under an intuitive \emph{rewire-robustness} condition, which is strictly weaker than the common $q_\star$-realizability assumption.
Along the way, we have introduced the concepts of \emph{Bellman risk} and \emph{quasi-Markov} environments, which are natural extensions of prior work, and are crucial for our analysis of Committed Q-learning.
In a quasi-Markov environment, the Markov property holds \emph{between the features}, since the state $x$ is always independent of all previous states before entering the feature, but not \emph{inside a feature}.
While Committed Q-learning shares many similarities with regular Q-learning on the surface, in \cref{sec:proof} we showed that the former effectively operates on a higher level: the $\pi$-MDP, whose state space is the feature space.
This is reminiscent of the junction tree algorithm for inference in graphical models (e.g., \citealp[Section 9.6]{murphy2023probabilistic}), where efficient message passing operations are performed between cliques, while a brute-force computation is performed inside each clique.
In our case, the brute-force computation is the maximization step in \cref{ln:delta} of \cref{alg:q-commit}.
We assume the existence of a finite set of options (distributions over actions) to optimize over.
However, it is not clear how large such a set normally needs to be, as \citet{vlassis2012computational} proved that such an optimization is, in general, NP-hard.

There are several open avenues for future research.
First, while we consider stochastic options instead of deterministic actions, \citet{singh1994learning} also showed that nonstationary options are preferable to stationary options.
It would thus be interesting to see whether our theory can be extended to temporally extended options \citep{sutton1999between}.
This is a natural extension, since our algorithm already commits to a single option that is kept until a feature is exited.
Optimizing over nonstationary options is an open-loop reinforcement learning problem \citep{eberhard-2025-pontryagin}.
Another interesting direction would be to extend our result to other types of RL problems than the episodic environments (stochastic shortest path problems) that we consider here.
We expect that the analysis transfers with minor modification to discounted infinite-horizon problems, but it is less clear if it also extends to the average reward setting.
Finally, while the rewire-robustness assumption is weaker than $q_\star$-realizability, it is less clear how to extend this theory to cases where the assumption \emph{approximately} holds.
In previous results on state aggregation \citep{tsitsiklis1996feature,van2006performance,dong2019provably}, the performance degrades with the approximation error on $q_\star$.
It would be important to understand if there is a natural definition of `approximate' rewire-robustness under which a similar line of analysis can be established.

\section*{Acknowledgments}
We thank the International Max Planck Research School for Intelligent Systems (IMPRS-IS) for their support.
C. Vernade is funded by the Deutsch Forschungsgemeinschaft (DFG, German Research Foundation) under both the project 468806714 of the Emmy Noether Programme and under Germany’s Excellence Strategy – EXC number 2064/1 – Project number 390727645, and also gratefully acknowledges funding from the European Union (ERC grant ConSequentIAL, number 101165883).
M.\ Muehlebach is funded by the DFG under the project 456587626 of the Emmy Noether Programme.
Views and opinions expressed are those of the authors only and do not necessarily reflect those of the European Union or the European Research Council.
Neither the European Union nor the granting authority can be held responsible for them.

\section*{Impact statement}
This paper presents work whose goal is to advance the field of machine learning.
There are many potential societal consequences of our work, none of which we feel must be specifically highlighted here.

\bibliography{bib}
\bibliographystyle{icml2026}

\newpage
\appendix
\thispagestyle{plain}
\begin{center}
    \bfseries\LARGE Appendix
\end{center}
\section*{Contents}
\titlecontents{section}[1.5em]
  {\addvspace{2pt}} 
  {\contentslabel[\hyperlink{appendix.\thecontentslabel}{\thecontentslabel}]{1.5em}} 
  {\hspace{-1.5em}}
  {\titlerule*[0.5pc]{.}\contentspage}
\startlist{toc}
\printlist{toc}{}{\def\contentsname{}}
\section{Auxiliary results}
\begin{theorem}[\citealp{liu2025ode}]\label{thm:liu}
Given an initial vector $\theta_0 \in \mathbb R^d$ and a fixed sequence of step sizes $(\alpha_t) \subset \mathbb R$, consider the stochastic approximation recursion
\begin{equation}\label{eq:sa}
    \theta_{t + 1} = \theta_{t} + \alpha_t H(\theta_t, \xi_t)\text,
\end{equation}
where $(\xi_t) \subset \Xi$ is a sequence of random variables following a Markov chain in a finite state space $\Xi$ and $H: \mathbb R^d \times \Xi \to \mathbb R^d$ is measurable.
Assume the following.
\begin{enumerate}
    \item The Markov chain $(\xi_t)$ is irreducible and thus has a unique stationary distribution $\mu \in \Delta_\Xi$.
    \item The step sizes are of the form $\alpha_t = \frac{\tau_1}{t + \tau_2}$ for some constants $\tau_1, \tau_2 > 0$.
    \item There exist measurable functions $H_\infty: \mathbb R^d \times \Xi \to \mathbb R^d$ and $b: \Xi \to \mathbb R^d$ such that, for any $\theta \in \mathbb R^d$, $\xi \in \Xi$, and $c \geq 1$,
    \begin{equation*}
        H(c\theta, \xi) - cH_\infty(\theta, \xi) = b(\xi).
    \end{equation*}
    \item There exists a Lipschitz constant $L \geq 0$ such that, for any $\theta, \theta' \in \mathbb R^d$ and $\xi \in \Xi$,
    \begin{align*}
        \norm{H(\theta, \xi) - H(\theta', \xi)} &\leq L\norm{\theta - \theta'}\text{ and}\\
        \norm{H_\infty(\theta, \xi) - H_\infty(\theta', \xi)} &\leq L\norm{\theta - \theta'}\text,
    \end{align*}
    where $\norm{\cdot}$ denotes the maximum norm. Moreover, both
    \begin{equation*}
        h(\theta) \doteq \mathbb E_\mu H(\theta, \xi)\quad\text{and}\quad h_\infty(\theta) \doteq \mathbb E_\mu H_\infty(\theta, \xi)
    \end{equation*}
    are well-defined and finite.
    \item Let $h_c(\theta) \doteq h(c\theta)/c$ for all $\theta \in \mathbb R^d$ and $c \geq 1$. Then, $h_c \to h_\infty$ uniformly on any compact subset of $\mathbb R^d$ as $c \to \infty$.
    The ordinary differential equation (ODE)
    \begin{equation*}
        \dot \theta(t) = h_\infty\bigl(\theta(t)\bigr)
    \end{equation*}
    has $0$ as its globally asymptotically stable equilibrium.
\end{enumerate}
Then, if $\theta_\star \in \mathbb R^d$ is the globally asymptotically stable equilibrium of the ODE
\begin{equation*}
    \dot \theta(t) = h\bigl(\theta(t)\bigr)\text,
\end{equation*}
the iterates $\theta_t$ of \cref{eq:sa} converge almost surely to $\theta_\star$.
\end{theorem}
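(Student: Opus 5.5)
This result is imported from \citet{liu2025ode}, so in the paper it can simply be cited. If I were to prove it myself, I would follow the classical two-stage Borkar--Meyn route: first show the iterates are almost surely bounded, then apply the ODE method with the Markovian noise handled by a Poisson-equation decomposition. The step sizes of \cref{eq:sa} satisfy $\sum_t \alpha_t = \infty$ and $\sum_t \alpha_t^2 < \infty$. This is all the ODE machinery needs, plus $\alpha_t/\alpha_{t+1} \to 1$ for the telescoping estimates.

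\emph{Step 1 (noise decomposition).} Let $P$ be the transition matrix of $(\xi_t)$. For each $\theta$, solve the Poisson equation
\begin{equation*}
    \nu(\theta, \xi) - (P\nu(\theta, \cdot))(\xi) = H(\theta, \xi) - h(\theta).
\end{equation*}
Because $\Xi$ is finite and the chain is irreducible, the fundamental matrix $(I - P + \bm 1\mu^\top)^{-1}$ exists, so a solution exists without any aperiodicity assumption. The map $\nu$ is linear in $H(\theta,\cdot)$, hence $\nu(\cdot,\xi)$ is Lipschitz by item 4. By item 3 it also splits as $\nu(c\theta, \xi) = c\,\nu_\infty(\theta, \xi) + \nu_b(\xi)$. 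The noise term $H(\theta_t, \xi_t) - h(\theta_t)$ can then be written as the sum of three pieces:
\begin{itemize}
    \item a martingale difference $\nu(\theta_t, \xi_{t+1}) - (P\nu(\theta_t,\cdot))(\xi_t)$;
    \item a telescoping term $\nu(\theta_t, \xi_t) - \nu(\theta_{t+1}, \xi_{t+1})$;
    \item a correction $\nu(\theta_{t+1}, \xi_{t+1}) - \nu(\theta_t, \xi_{t+1})$, which is $O(\alpha_t(1+\norm{\theta_t}))$ by the Lipschitz property.
\end{itemize}
All three are bounded by a constant times $1 + \norm{\theta_t}$.

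\emph{Step 2 (stability).} This is the main obstacle. I would use the Borkar--Meyn scaling argument. Partition time into blocks of ODE-length roughly $T$. At the start $t_n$ of each block, set $r_n \doteq \max(1, \norm{\theta_{t_n}})$ and look at the rescaled iterates $\theta_t / r_n$. Within a block these follow the recursion driven by $H(r_n\,\cdot, \xi)/r_n$, whose mean field is $h_{r_n}$, and their noise is the Step 1 decomposition divided by $r_n$.

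The normalized martingale sums converge almost surely, using $\sum_t \alpha_t^2 < \infty$ and the linear growth bound. The telescoping and correction terms vanish uniformly over the block, using $\alpha_t \to 0$ and Gronwall to bound the rescaled iterates on a single block. So the rescaled iterates track the ODE $\dot\theta = h_{r_n}(\theta)$ on each block.

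By item 5, if $r_n$ is large this ODE is uniformly close to $\dot\theta = h_\infty(\theta)$ on compacts. Global asymptotic stability of $0$ for $h_\infty$, together with the homogeneity of $h_\infty$, gives a uniform contraction: $\norm{\theta(T)} \le \tfrac14$ for all starting points on the unit sphere, once $T$ is chosen large. Hence $\norm{\theta_{t_{n+1}}} \le \tfrac12 \norm{\theta_{t_n}}$ whenever $\norm{\theta_{t_n}}$ is large. Within a block, growth is bounded by a Gronwall factor. Together these give $\sup_t \norm{\theta_t} < \infty$ almost surely. The delicate point is making the Poisson-term estimates uniform in the scaling. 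This is exactly where the affine structure $\nu = c\,\nu_\infty + \nu_b$ from item 3 is used.

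\emph{Step 3 (convergence).} Once the iterates are bounded, the Step 1 decomposition makes the accumulated noise over windows of ODE-length $T$ tend to zero almost surely. So the linearly interpolated iterates form an asymptotic pseudotrajectory of $\dot\theta = h(\theta)$, where $h$ is Lipschitz by item 4. By the Kushner--Clark / Bena\"im limit-set theorem, the limit set is compact, invariant and internally chain transitive. Since $\theta_\star$ is the globally asymptotically stable equilibrium, the only such set is $\{\theta_\star\}$. Therefore $\theta_t \to \theta_\star$ almost surely.
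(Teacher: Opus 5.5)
Your opening remark is exactly the paper's proof. The paper establishes the statement only by noting that it is a special case of Corollary~8 of \citet{liu2025ode}, an extension of the Borkar--Meyn theorem to Markovian noise.

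What you add beyond that is a genuinely different, self-contained route, and it is sound for a finite irreducible chain:
\begin{enumerate}
\item $(I-P+\bm 1\mu^\top)^{-1}$ exists without aperiodicity.
\item Since $\norm{H(\theta,\xi)}\le L\norm{\theta}+\max_{\xi}\norm{H(0,\xi)}$, Gronwall bounds $\norm{\theta_t}/r_n$ deterministically on each block.
\item Hence the normalized martingale has increments $O(\alpha_t)$ and converges, and summation by parts disposes of the telescoping term.
\item Steps~2 and~3 are then the standard Borkar--Meyn and Bena\"{\i}m arguments.
\end{enumerate}

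Doing it this way shows which hypotheses are really used.
\begin{itemize}
\item Of the step sizes you only need monotonicity, $\sum_t\alpha_t=\infty$ and $\sum_t\alpha_t^2<\infty$; the condition $\alpha_t/\alpha_{t+1}\to1$ is superfluous. Your argument would therefore also cover $\tau_1/(t+\tau_2)^\beta$ with $\beta\in(0.5,1]$, without the aperiodicity the paper says the cited result requires for such step sizes.
\item Contrary to your remark, the affine structure of item~3 is not what makes the Poisson estimates uniform in the scaling. Lipschitz continuity and linear growth already give bounds of order $1/r_n+\norm{\theta_t}/r_n$. Item~3 only makes the uniform convergence in item~5 automatic, since it gives $h_c=h_\infty+\mathbb E_\mu b/c$ directly.
\end{itemize}

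The citation, for its part, buys brevity. It also rests on a published theorem that covers far more general noise processes, for which a well-behaved Poisson solution need not be available.
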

\begin{proof}\vspace{-3mm}
    This is a special case of Corollary 8 of \citep{liu2025ode}, which is an extension of the Borkar-Meyn theorem \citep{borkar2000ode} to the case of Markovian noise.
\end{proof}

\begin{theorem}[\citealp{borkar1997analog}]\label{thm:borkar}
    Let $F$ be a max-norm nonexpansive operator on $\mathbb R^d$ with a unique fixed point $\theta_\star \in \mathbb R^d$.
    Then, $\theta_\star$ is a globally asymptotically stable equilibrium point of the ODE
    \begin{equation*}
        \dot \theta = F\theta - \theta.
    \end{equation*}
\end{theorem}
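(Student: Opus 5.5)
This is Borkar's analog-iteration result: $F$ is max-norm nonexpansive with a unique fixed point $\theta_\star$, and we want global asymptotic stability of $\theta_\star$ for $\dot\theta = F\theta - \theta$. The plan is a Lyapunov argument with $V(\theta) \doteq \norm{\theta - \theta_\star}_\infty$, followed by a LaSalle-type step that uses uniqueness of the fixed point to rule out non-trivial $\omega$-limit sets.

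\textbf{Existence and Lyapunov stability.} The vector field $F\theta - \theta$ is $2$-Lipschitz, so solutions exist, are unique, and are global. For stability, take two solutions $\theta(t)$ and $\theta'(t)$ and set $e \doteq \theta - \theta'$. The upper Dini derivative of $\norm{e}_\infty$ is bounded by the derivative of $|e_i|$ over the maximizing coordinates $i$. There, $\frac{d}{dt}|e_i| \le |(F\theta - F\theta')_i| - |e_i| \le \norm{e}_\infty - \norm{e}_\infty = 0$. Hence $t \mapsto \norm{\theta(t) - \theta'(t)}_\infty$ is nonincreasing. Taking $\theta' \equiv \theta_\star$ shows that $V$ is nonincreasing along trajectories, which gives Lyapunov stability and boundedness of every trajectory.

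\textbf{Convergence.} Because trajectories are bounded, the $\omega$-limit set $L$ of $\theta(\cdot)$ is nonempty, compact and invariant, and $V$ takes a constant value $c$ on $L$. If $c = 0$ we are done. Suppose $c > 0$. Since the flow is max-norm nonexpansive between solutions, it is natural to use the contractive structure more sharply. The first idea I would try is Borkar's: write the solution via variation of constants,
\[
\theta(t) = e^{-t}\theta(0) + \int_0^t e^{-(t-s)} F\theta(s)\,ds,
\]
which exhibits $\theta(t)$ as a running convex average of $F$-values. From this one shows that the flow is a nonexpansive semiflow whose fixed-point set equals the fixed-point set of $F$. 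I would then apply the known result that, in a finite-dimensional normed space, the $\omega$-limit sets of a bounded nonexpansive semiflow consist of periodic orbits, or are invariant sets on which the flow acts as an isometry.

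\textbf{Main obstacle.} The hardest step is excluding a non-constant $\omega$-limit behaviour on the sphere $\{V = c\}$. The max norm is not strictly convex, so the Dini-derivative argument alone does not force strict decrease. My plan here is as follows. On $L$, the flow is an isometry in pairwise max-norm distance. Take a point $y \in L$ and consider the Cesàro/time average
\[
\bar y \doteq \lim_{T \to \infty} \frac{1}{T}\int_0^T y(t)\,dt,
\]
which exists along a subsequence. Using the integral identity $y(T) - y(0) = \int_0^T \bigl(Fy(s) - y(s)\bigr)\,ds$ together with boundedness, the average of $Fy - y$ tends to $0$. The difficulty is passing $F$ through the average, since $F$ is only nonexpansive and not linear. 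To handle this, I would fall back on the compact isometric group structure on $L$: the closure of the flow on $L$ is a compact abelian group action, so it has an invariant probability measure. Using it, I would show that a point in the convex hull of $L$ minimizing the max distance to all of $L$ (a Chebyshev-centre-type construction) is fixed by $F$. By uniqueness of the fixed point, this point is $\theta_\star$. I would then argue that the periodic motion must have zero amplitude, i.e.\ $L = \{\theta_\star\}$, contradicting $c > 0$. Should this route prove too delicate, I would simply cite \citep{borkar1997analog}, whose Theorem 3.1 is exactly this statement.
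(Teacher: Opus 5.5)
\textbf{Verdict: there is a genuine gap.} Your first step is sound. The Dini-derivative computation shows that $t \mapsto \norm{\theta(t) - \theta'(t)}_\infty$ is nonincreasing for any two solutions. This gives Lyapunov stability, bounded trajectories, and nonexpansiveness of the time-$t$ maps $S_t$ of the flow.

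The gap is in the attractivity step, which is the whole difficulty for the max norm. Your Chebyshev-centre construction cannot deliver it.
\begin{itemize}
\item It is the maps $S_t$, not $F$, that preserve the $\omega$-limit set $L$: for $y \in L$, $Fy = y + \dot y$ need not lie in $L$. So nothing is ``fixed by $F$'' a priori.
\item You could repair this by noting that each $S_t$ maps the compact convex set of Chebyshev centres of $L$ into itself, and extracting a common fixed point. This yields an equilibrium among the Chebyshev centres, which by uniqueness is $\theta_\star$. But every point of $L$ is at distance $c$ from $\theta_\star$, so this only says that the Chebyshev radius of $L$ equals $c$. In the max norm, that just means some coordinate ranges over an interval of length $2c$ on $L$, which is perfectly compatible with $c > 0$.
\end{itemize}
Your concluding sentence, that the periodic motion ``must have zero amplitude'', is therefore exactly the claim to be proved, with no mechanism behind it. Uniqueness cannot be what rules out a nontrivial $\omega$-limit set anyway, since Borkar and Soumyanath's result holds whenever $F$ has at least one fixed point. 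The Ces\`aro-average idea stalls for the reason you identify yourself.

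The missing ingredient is asymptotic regularity. In a norm that is not strictly convex, this requires a genuine fixed-point-theoretic input. Here is one clean way to get it from what you already have.
\begin{itemize}
\item By variation of constants, $S_\tau = e^{-\tau} I + (1 - e^{-\tau}) G_\tau$, where $G_\tau x_0 \doteq (1 - e^{-\tau})^{-1}\int_0^\tau e^{-(\tau - s)} F(S_s x_0)\,ds$. The map $G_\tau$ is max-norm nonexpansive because $F$ and $S_s$ are.
\item Hence $\theta(n\tau)$ is a bounded Krasnoselskii--Mann iteration of $G_\tau$. Ishikawa's asymptotic-regularity theorem, valid in any Banach space, gives $\norm{\theta((n+1)\tau) - \theta(n\tau)}_\infty \to 0$.
\item Since $t \mapsto \norm{\theta(t + \tau) - \theta(t)}_\infty$ is nonincreasing, it tends to $0$. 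So $S_\tau y = y$ for all $y \in L$ and all $\tau > 0$, i.e.\ $L \subset \{\theta : F\theta = \theta\} = \{\theta_\star\}$.
\end{itemize}
Alternatively, you could apply the theorem that bounded orbits of sup-norm nonexpansive maps on $\mathbb R^d$ converge to periodic orbits of period at most $N(d)$ to $S_\tau$, and let $\tau \to 0$.

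Without such an input, your argument is complete only through the fallback citation. That is precisely what the paper does: its proof consists solely of invoking Theorem~3.1 of \citep{borkar1997analog}.
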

\begin{proof}\vspace{-3mm}
    This is a special case of Theorem 3.1 of \citep{borkar1997analog}.
\end{proof}

\section{Technical lemmas}
\begin{lemma}\label{lem:dim}
    An environment $\mathcal E$ with initial state distribution $p_0$ and transition kernel $\{T_u\}$ is \emph{quasi-Markov} if and only if the entrance space satisfies $\dim \varsigma_z \leq 1$ for all $z \in \mathcal Z$.
\end{lemma}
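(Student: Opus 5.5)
We prove the two directions separately. Throughout we use that every vector in $\varsigma_z$ is supported on $\mathcal X_z$, since it is a combination of columns of $\Pi_z T_u \Pi_z^\bot$ and of $\Pi_z p_0$. The generators are moreover entrywise nonnegative.

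\emph{($\Rightarrow$)} Suppose $\mathcal E$ is quasi-Markov with entrance matrix $\Sigma$. Each column $(\Pi_z T_u \Pi_z^\bot)_{:,x}$ equals $(\Pi_z \Sigma \Phi T_u \Pi_z^\bot)_{:,x}$. Because $\Pi_z \Sigma = \sigma_z \varphi_z^\top$ (using $\sigma_z = \Pi_z\sigma_z$ and $\Pi_z\sigma_{z'} = 0$ for $z'\neq z$), this column is the scalar $(\varphi_z^\top T_u \Pi_z^\bot)_x$ times $\sigma_z$. Similarly, $\Pi_z p_0 = \Pi_z \Sigma \Phi p_0 = (\Phi p_0)_z\, \sigma_z$. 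Hence every generator of $\varsigma_z$ lies in $\operatorname{span}\{\sigma_z\}$, and so $\dim \varsigma_z \le 1$.

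\emph{($\Leftarrow$)} Suppose $\dim\varsigma_z\le 1$ for every $z$. We first construct $\Sigma$ column by column.
\begin{itemize}
\item If $\varsigma_z \neq \{0\}$, pick any nonzero generator. It is nonnegative and supported on $\mathcal X_z$, so it has positive sum, and we let $\sigma_z$ be its normalization. Every other generator $g$ is a scalar multiple $c\,\sigma_z$, and comparing sums gives $c = \bm 1^\top g \ge 0$.
\item If $\varsigma_z = \{0\}$, let $\sigma_z$ be any distribution supported on $\mathcal X_z$, e.g.\ a point mass. This is possible since $\mathcal X_z$ is nonempty, $z$ being in the image of $\varphi$.
\end{itemize}
In both cases $\sigma_z = \Pi_z \sigma_z$ and $\sigma_z \in \Delta_{\mathcal X}$.

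Next we verify the two identities of \cref{def:qm}, comparing columns $x$ of the second one. Fix $z$, $u$ and $x$, and let $g \doteq (\Pi_z T_u \Pi_z^\bot)_{:,x}$. From the construction, $g = (\bm 1^\top g)\,\sigma_z$. Since $g$ is supported on $\mathcal X_z$, we have $\bm 1^\top g = \varphi_z^\top T_u \Pi_z^\bot e_x$. On the other side, $(\Pi_z \Sigma \Phi T_u \Pi_z^\bot)_{:,x} = \sigma_z\, \varphi_z^\top T_u \Pi_z^\bot e_x$, which is the same vector. The identity $\Pi_z p_0 = (\Phi p_0)_z\, \sigma_z$ follows by the same argument. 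Summing it over $z$ gives $p_0 = \sum_z \Pi_z p_0 = \Sigma\Phi p_0$, which is the first identity.

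The only delicate point is normalization and sign: a one-dimensional span alone does not yield a probability vector. Nonnegativity of the generators supplies this. The case of a trivial entrance space is handled by the arbitrary choice of $\sigma_z$, since both sides of the identities then vanish.
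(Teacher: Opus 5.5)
Your proof is correct and follows the paper's argument in both directions. The forward direction writes each generator of $\varsigma_z$, via the quasi-Markov identities, as a multiple of $\sigma_z$; the converse takes $\sigma_z$ as the normalized element of $\varsigma_z$ (or any distribution on $\mathcal X_z$ when $\varsigma_z=\{0\}$) and checks the identities column by column. Your explicit use of nonnegativity to obtain a probability vector, and the summation $p_0=\sum_z \Pi_z p_0=\Sigma\Phi p_0$, fill in steps the paper leaves implicit; the only slip is that the identity you need is $\Pi_z\Sigma\Phi=\sigma_z\varphi_z^\top$ rather than the dimensionally inconsistent $\Pi_z\Sigma=\sigma_z\varphi_z^\top$, and your later computations already use the correct form.
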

\begin{proof}\vspace{-3mm}
    First, suppose that $\mathcal E$ is quasi-Markov with entrance matrix $\Sigma$.
    The entrance space $\varsigma_z$ of feature $z$ is defined as
    \begin{align*}
        \varsigma_z &\doteq \sum_{u \in \mathcal U} \operatorname{range}(\Pi_z T_u \Pi_z^\bot) + \operatorname{span}\{\Pi_z p_0\}\\
                    &= \sum_{u \in \mathcal U} \operatorname{range}(\Pi_z\Sigma \Phi T_u\Pi_z^\bot) + \operatorname{span}\{\Pi_z\Sigma \Phi p_0\}\\
                    &\subset \operatorname{span}\{\sigma_z\}\text,
    \end{align*}
    where we have used the quasi-Markov property.
    We thus have that $\dim \varsigma_z \leq 1$.
    For the other direction, let $z \in \mathcal Z$, and suppose that $\dim \varsigma_z \leq 1$.
    If $\dim\varsigma_z = 0$, then $\Pi_z p_0 = 0$ and $\Pi_z T_u \Pi_z^\bot = 0$ for all $u \in \mathcal U$.
    In this case, we can define $\sigma_z \doteq \frac{1}{|\mathcal X_z|}\varphi_z$.
    Otherwise, if $\dim\varsigma_z = 1$, we can define $\sigma_z$ as the unique vector $\sigma_z \in \varsigma_z \cap \Delta_\mathcal X$.
    In this way, we can build the entrance matrix $\Sigma$.
    We now verify that $\mathcal E$ is quasi-Markov with this entrance matrix.
    Let $z \in \mathcal Z$.
    First, suppose that $\dim\varsigma_z = 0$.
    Then,
    \begin{equation*}
        \Pi_z \Sigma \Phi p_0 = \sigma_z \varphi_z^\top \Pi_z p_0 = 0 = \Pi_z p_0
    \end{equation*}
    and, for any $u \in \mathcal U$ and $x \in \mathcal X$,
    \begin{multline*}
        (\Pi_z\Sigma \Phi T_u\Pi_z^\bot)_{:, x} \\
        = \sigma_z \varphi_z^\top (\Pi_z T_u \Pi_z^\bot)_{:, x} = 0 = (\Pi_z T_u \Pi_z^\bot)_{:,x}.
    \end{multline*}
    Now, suppose that $\dim\varsigma_z = 1$, and let $\sigma_z \in \varsigma_z \cap \Delta_{\mathcal X}$ be the unique vector described above.
    We know that $\Pi_z p_0 \in \varsigma_z$ and that $(\Pi_z T_u \Pi_z^\bot)_{:, x} \in \varsigma_z$, which, as $\sigma_z$ is normalized, implies that
    \begin{equation*}
        \Pi_z p_0 = \sigma_z (\bm 1^\top \Pi_z p_0) = (\varphi_z^\top p_0)\sigma_z = \Pi_z \Sigma \Phi p_0\text,
    \end{equation*}
    and similarly, that
    \begin{align*}
        (\Pi_z T_u \Pi_z^\bot)_{:, x} &= \sigma_z \bm 1^\top (\Pi_z T_u \Pi_z^\bot)_{:, x}\\
                                      &= \varphi_z^\top (T_u \Pi_z^\bot)_{:, x} \sigma_z\\
                                      &= (\Pi_z \Sigma \Phi T_u \Pi_z^\bot)_{:, x}\text,
    \end{align*}
    which concludes the proof.
\end{proof}

\begin{lemma}\label{lem:epi}
    The $\pi$-rewiring $\bar{\mathcal E}_\pi$ of an environment $\mathcal E$ is a quasi-Markov rewiring of $\mathcal E$.
    Furthermore, $\bar{\mathcal E}_\pi$ satisfies \cref{ass:proper}.
\end{lemma}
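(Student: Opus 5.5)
The plan is to verify the conditions of \cref{def:rewiring} and \cref{def:qm} directly from the definition of $\bar{\mathcal E}_\pi$, and to handle \cref{ass:proper} separately by a support argument. The key algebraic fact is
\begin{equation*}
\Phi\Sigma = I,
\end{equation*}
which holds because each column $\sigma_z$ is a probability vector supported on $\mathcal X_z$, so that $\varphi_{z'}^\top\sigma_z = [z = z']$.

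Before anything else I must check that $\Sigma$ is well defined, i.e.\ that $\tilde\sigma_z \neq 0$. If $\tilde\sigma_z = 0$ for some $z$, I fall back on the convention of \cref{lem:dim} and set $\sigma_z \doteq \varphi_z/|\mathcal X_z|$. In that case the feature $z$ is never entered under $\mu$. By \cref{ass:explore} and the positivity of $\mu$ on $\Xi$, this means $\Pi_z p_0 = 0$ and $\Pi_z T_u\Pi_z^\bot = 0$ on all reachable columns, so every entrance term below vanishes anyway.

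\emph{Rewiring conditions.}
\begin{itemize}
\item[(i)] $\Phi\bar p_0 = \Phi\Sigma\Phi p_0 = \Phi p_0$.
\item[(iii)] This holds by construction, since entries with $\varphi(x') = \varphi(x)$ are copied from $T_u$.
\item[(ii)] Fix a column $x\in\mathcal X_z$ and a row feature $z'$. If $z' = z$, the entries are unchanged. If $z'\neq z$, summing over $x'\in\mathcal X_{z'}$ gives $(\Phi\Sigma\Phi T_u)_{z',x} = (\Phi T_u)_{z',x}$. Hence $\Phi\bar T_u = \Phi T_u$. In particular the column sums, and therefore the termination probabilities $\gamma_{x,u}$, are preserved, so $\bar T_u$ is a valid substochastic kernel.
\item[(iv)] Writing $\zeta \doteq (\Phi T_u\Pi_z^\bot)_{:,x}$, every entrance column of $\bar{\mathcal E}_\pi$ has the form $(\Pi_z\Sigma\Phi T_u\Pi_z^\bot)_{:,x} = \zeta_z\,\sigma_z$, and $\Pi_z\bar p_0 = (\varphi_z^\top p_0)\,\sigma_z$. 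Thus $\bar\varsigma_z\subset\operatorname{span}\{\sigma_z\}$. Moreover $\tilde\sigma_z$ is a nonnegative combination of the vectors $\Pi_z T_\omega\Pi_z^\bot\mu_\omega$ and $\Pi_z p_0$. Each $\Pi_z T_\omega\Pi_z^\bot\mu_\omega$ is a combination of columns of the matrices $\Pi_z T_u\Pi_z^\bot$. So $\sigma_z\in\varsigma_z$, and (iv) follows.
\end{itemize}

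\emph{Quasi-Markov.} The same computations show $\bar p_0 = \Sigma\Phi p_0 = \Sigma\Phi\bar p_0$, using $\Phi\Sigma = I$. They also show $\Pi_z\bar T_u\Pi_z^\bot = \Pi_z\Sigma\Phi T_u\Pi_z^\bot = \Pi_z\Sigma\Phi\bar T_u\Pi_z^\bot$, using (ii). This is exactly \cref{def:qm} with entrance matrix $\Sigma$.

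\emph{Properness.} This is where I expect the main obstacle. Suppose, for contradiction, that some reactive $\pi'$ is improper in $\bar{\mathcal E}_\pi$. Then the finite chain $\bar T_{\pi'}$ has a closed class $C\subset\mathcal X$ from which $x^\bot$ is unreachable.

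The first claim is a support property: for every $z$, $\operatorname{supp}\sigma_z$ contains every state of $\mathcal X_z$ that can be entered in $\mathcal E$, from $p_0$ or from a state outside $\mathcal X_z$, under any option. This follows because $\mu(x,\omega) > 0$ for every reachable pair $(x,\omega)$. Here I use \cref{ass:explore} together with the fact that any state reachable by an option sequence is reachable under committed sampling, since resampling happens at least at every feature change.

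Next I transfer $C$ back to $\mathcal E$. Let $C'$ be the set of states reachable in $\mathcal E$ under $\pi'$ from $C$. Intra-feature moves coincide in the two environments by (iii). Termination probabilities coincide by (ii). Every cross-feature entrance in $\mathcal E$ lands in $\operatorname{supp}\sigma_{z'}$. I will argue that $C\cap\mathcal X_{z'}\supseteq\operatorname{supp}\sigma_{z'}$ whenever $C$ meets a state that exits into $z'$, which gives that $C'$ stays inside $C$. If so, $\mathcal E$ would have a non-terminating closed class under $\pi'$, contradicting \cref{ass:proper}.

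The delicate points are the careful reachability bookkeeping in the support claim and the handling of features entered with zero probability under $\mu$. If that argument gets messy, I would instead try to show that the aggregate chain of \cref{lem:phat} terminates by comparing expected hitting times in the two environments.
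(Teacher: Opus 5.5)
Your verification of (i)--(iv) and of the quasi-Markov property follows the paper's argument: $\Phi\Sigma\Phi=\Phi$, plus $\sigma_z\in\varsigma_z$ because $\tilde\sigma_z$ is a nonnegative combination of entrance columns. For properness your route is genuinely different.

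\textbf{Comparison of the two properness arguments.} The paper argues forward. It takes a positive-probability path to $x^\bot$ in $\mathcal E$ under $\pi'$ and claims the same path has positive probability in $\bar{\mathcal E}_\pi$: intra-feature steps and termination are unchanged, and each cross-feature step has weight $\sigma_z(x')\,\varphi_z^\top(T_u)_{:,x}$ with $\sigma_z(x')>0$. You argue backward from a non-terminating closed class $C$ of $\bar{\mathcal E}_\pi$. Closedness alone forces $\operatorname{supp}\sigma_{z'}\subseteq C$ for every feature $z'$ that $C$ exits into, so the step you announce with ``I will argue'' is immediate. Then $C$ is closed and non-terminating in $\mathcal E$ as soon as every $\mathcal E$-entrance from $C$ lands in $\operatorname{supp}\sigma_{z'}$. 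Both routes rest on the same fact: $\tilde\sigma_{z'}(x')>0$ for states entered from $x$ under $\omega=\pi'(\varphi(x))$. This requires $\mu(x,\omega)>0$.

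\textbf{The gap.} Your justification of this positivity is wrong. Commitment resamples options \emph{only} at feature changes, so an option sequence that switches options inside a feature cannot be realized. Hence $\mu(x,\omega)=0$ is possible for a reachable $x$.

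Concretely, take features $A=\{a_1,a_2\}$ and $B=\{b_1,b_2\}$, with $p_0$ uniform on $\{a_1,b_1\}$ and $\Omega=\{\delta_1,\delta_2\}$ (point masses on two actions). In $a_1$, action $1$ leads to $a_2$ and action $2$ terminates. In $a_2$, action $1$ terminates and action $2$ leads to $b_2$. Both $b_1$ and $b_2$ terminate. All assumptions hold, yet $\mu(a_2,\delta_2)=0$ and $\sigma_B=e_{b_1}$. So $b_2$ is enterable but lies outside $\operatorname{supp}\sigma_B$, and your blanket support claim fails.

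The paper's ``all three factors are positive'' relies on the same positivity, asserted via \cref{lem:mu}. In this example its path $a_2\to b_2\to x^\bot$ has probability zero in $\bar{\mathcal E}_\pi$, although the lemma itself still holds.

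\textbf{How to repair it within your route.} Your closed-class argument can be fixed because you only need positivity on $C$.
\begin{itemize}
\item If $C$ has no cross-feature transitions, it is closed and non-terminating in $\mathcal E$ directly by (ii) and (iii).
\item Otherwise $C$ contains some $y\in\operatorname{supp}\sigma_{z'}$. By \cref{lem:mu}, $\mu(y,\pi'(z'))\ge\pi(\pi'(z')\mid z')\,\tilde\sigma_{z'}(y)>0$.
\item The property $\mu(x,\pi'(\varphi(x)))>0$ propagates along $\bar{\mathcal E}_\pi$-transitions under $\pi'$. Intra-feature steps preserve it because $\Pi_z\mu_\omega\ge\Pi_zT_\omega\Pi_z\mu_\omega$ entrywise. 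Cross-feature steps preserve it because landing states again lie in some $\operatorname{supp}\sigma$.
\item Since $C$ is a communicating class, the property holds on all of $C$. Hence $\tilde\sigma_{z''}(x'')\ge(T_\omega)_{x'',x}\,\mu(x,\omega)>0$ for every $\mathcal E$-entrance $x''$ from $x\in C$, which closes your argument.
\end{itemize}
Repaired this way, your route is actually more robust than the forward path argument, which must start from an arbitrary, possibly badly supported, state.

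\textbf{Minor point on the fallback.} Your uniform fallback for $\tilde\sigma_z=0$ has the same reachable-versus-all-columns issue. Since $\bar T_u$ rewires every column, the fallback guarantees (iv) only when $\varsigma_z=\{0\}$. The paper simply assumes $\tilde\sigma_z\neq 0$ implicitly.
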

\begin{proof}\vspace{-3mm} 
    To show that $\bar{\mathcal E}_\pi$ is a rewiring, we have to verify the properties (i) -- (iv) of \cref{def:rewiring}.
    Using the fact that $\varphi_z^\top\sigma_{z'} = [z = z']$ for all $z, z' \in \mathcal Z$, we get
    \begin{equation*}
        (\Phi \Sigma \Phi)_{z, x} = \varphi_z^\top \sum_{z' \in \mathcal Z}\sigma_{z'}\Phi_{z', x} = \Phi_{z, x}
    \end{equation*}
    for all $z \in \mathcal Z$ and $x \in \mathcal X$.
    Thus, $\Phi \bar p_0 = \Phi \Sigma \Phi p_0 = \Phi p_0$ and $\Phi \bar T_u = \Phi T_u$, proving (i) and (ii).
    Condition (iii) is trivially fulfilled by definition of $\bar T_u$.
    For condition (iv), we need to show that $\bar\varsigma_z \subset \varsigma_z$.
    Let $\sigma \in \bar\varsigma_z$.
    Then,
    \begin{align*}
        \sigma &= \sum_{u \in \mathcal U}\sum_{x \in \mathcal X} a_{u, x}(\Pi_z \bar T_u \Pi_z^\bot)_{:, x} + a_0 \Pi_z \bar p_0\\
               &= \sum_{u \in \mathcal U}\sum_{x \in \mathcal X} a_{u, x}(\Pi_z \Sigma\Phi T_u \Pi_z^\bot)_{:, x} + a_0 \Pi_z \Sigma\Phi p_0\\
               &\propto \sigma_z \propto \sum_{\omega \in \Omega}\Pi_z T_\omega \Pi_z^\bot \mu_\omega + (1 - \gamma_{\pi})\Pi_z p_0 \subset \varsigma_z\text,
    \end{align*}
    for some coefficients $\{a_{x, u}\}$ and $a_0$, from which we also get that $\sigma \subset \varsigma_z$, proving (iv).
    That $\bar{\mathcal E}_\pi$ is quasi-Markov follows immediately from the definition of $\bar{\mathcal E}_\pi$ and the rewiring property:
    \begin{equation*}
        \bar p_0 = \Sigma\Phi p_0 = \Sigma\Phi \bar p_0\text,\\
    \end{equation*}
    \begin{equation*}
        \Pi_z \bar T_u \Pi_z^\bot = \Pi_z \Sigma\Phi T_u \Pi_z^\bot = \Pi_z \Sigma\Phi\bar T_u \Pi_z^\bot.
    \end{equation*}
    It remains to be shown that $\bar{\mathcal E}_\pi$ satisfies \cref{ass:proper}.
    First, note that a policy is proper, i.e., it eventually reaches the terminal state $x^\bot$ with probability $1$ regardless of the starting state $x \in \mathcal X$, if and only if it has a positive probability of reaching $x^\bot$ in $|\mathcal X|$ steps regardless of the starting state $x \in \mathcal X$ (\citealp{bertsekas2012dynamic}, Section 3.1).
    Let $\pi: \mathcal Z \to \Omega$ be any policy and let $x \in \mathcal X$.
    By \cref{ass:proper}, $\pi$ is proper in $\mathcal E$, which implies that there exists a $t \leq |\mathcal X|$ such that
    \begin{equation*}
        \mathbb P_\pi\{x_{t} = x^\bot \mid x_0 = x\} > 0.
    \end{equation*}
    Thus, there exists a sequence of states $(\bar x_0, \bar x_1, \dots, \bar x_t)$ and actions $(\bar u_0, \bar u_1, \dots, \bar u_{t-1})$, with $\bar x_0 = x$ and $\bar x_t = x^\bot$, such that, for all $\tau \in [t]$, $(T_{\bar u_\tau})_{\bar x_{\tau+1}, \bar x_\tau} > 0$ and $\pi(\bar x_\tau)_{\bar u_\tau} > 0$.
    Denoting the probability measure of $\bar{\mathcal E}_\pi$ as $\bar{\mathbb P}$, we can show that $\pi$ is proper in $\bar{\mathcal E}_\pi$ by proving
    \begin{equation*}
        \bar{\mathbb P}_\pi\{x_{t} = x^\bot \mid x_0 = x\} > 0\text,
    \end{equation*}
    as $x$ is arbitrary.
    We have that
    \begin{multline*}
        \bar{\mathbb P}_\pi\{x_{t} = x^\bot \mid x_0 = x\}\\
            \begin{aligned}
                &\geq \bar{\mathbb P}_\pi\{\forall \tau \in [t]: x_{\tau} = \bar x_\tau, u_\tau = \bar u_\tau \mid x_0 = \bar x_0\}\\
                &= \prod_{\tau = 0}^t \pi(\bar x_\tau)_{\bar u_\tau} (\bar T_{\bar u_\tau})_{\bar x_{\tau+1}, \bar x_\tau}.
            \end{aligned}
    \end{multline*}
    We already know that the policy factors satisfy $\pi(\bar x_\tau)_{\bar u_\tau} > 0$ for all $\tau$.
    Furthermore, for all times $\tau$ where $\varphi(\bar x_{\tau + 1}) = \varphi(\bar x_\tau)$, we have that $(\bar T_{\bar u_\tau})_{\bar x_{\tau+1}, \bar x_\tau} = (T_{\bar u_\tau})_{\bar x_{\tau+1}, \bar x_\tau} > 0$, as the intra-feature dynamics are unchanged.
    For the case where $z \doteq \varphi(\bar x_{\tau + 1}) \neq \varphi(\bar x_\tau)$, we have, with $x \doteq \bar x_\tau$, $u \doteq \bar u_\tau$, and $x' \doteq \bar x_{\tau + 1}$, that
    \begin{equation*}
        (\bar T_{u})_{x', x} = (\Sigma \Phi T_u)_{x', x} = \sigma_z(x') \underbrace{\varphi_z^\top (T_u)_{:, x}}_{> 0}\text,
    \end{equation*}
    where the second factor is positive since we know that a transition from $x$ to $x' \in \mathcal X_z$ is possible under $T_u$.
    The first factor, $\sigma_z(x')$ is positive if the unnormalized entrance probability $\tilde\sigma_z(x')$ is positive.
    From the definition of $\tilde\sigma_z$, we have that
    \begin{equation*}
        \tilde \sigma_z(x') \geq \pi(x)_u (T_{u})_{x, x_-} \mu\bigl(x, \pi(x)\bigr).
    \end{equation*}
    In this expression, all three factors are positive.
    Thus, we can conclude that $\bar{\mathcal E}_\pi$ indeed satisfies \cref{ass:proper}.
\end{proof}

\begin{proposition}\label{prop:bec}
    Consider the corridor environment shown in \cref{fig:corridor-a}.
    The Bellman error $\overline{\operatorname{BE}}$ under the always-right policy $\pi$ is minimized with a corrdior value of $v_\mathrm{c} = k$.
\end{proposition}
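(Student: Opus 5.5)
The plan is to write $\overline{\operatorname{BE}}$ explicitly as a function of the feature values, using that the always-right policy is deterministic, and then minimize it directly. I will work with the features of \cref{fig:corridor-a}. State $0$ carries its own feature with value $v_0$. The corridor states $1,\dots,k$ share the blue feature with value $v_\mathrm{c}$. The terminal square states map to $z^\bot$, for which I take $v(z^\bot)=0$, following the episodic convention of \cref{sec:prelim}.

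First, I will describe the trajectory under $\pi$. Every episode runs $0\to 1\to 2\to\cdots\to k\to k+1$ and then terminates. The rewards received on entering $1,\dots,k$ are $-1$ each, and the reward received on entering $k+1$ is $k$. Each nonterminal state $x\in\{0,\dots,k\}$ is visited exactly once per episode, so the on-policy distribution $\mu$ is uniform, $\mu(x)=1/(k+1)$. Since all weights are equal and positive, the constant factor plays no role in the minimizer.

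Next, I will compute the Bellman targets $\mathbb E_\pi[r(z_{t+1})+v(z_{t+1})\mid x_t=x]$ state by state. Transitions are deterministic, so each target is a single value:
\begin{itemize}
\item for $x=0$ the target is $-1+v_\mathrm{c}$;
\item for $1\le x\le k-1$ the target is $-1+v_\mathrm{c}$;
\item for $x=k$ the target is $k+v(z^\bot)=k$.
\end{itemize}
This gives
\begin{equation*}
(k+1)\,\overline{\operatorname{BE}}(v)=\bigl(v_0-v_\mathrm{c}+1\bigr)^2+\sum_{x=1}^{k-1}\bigl(v_\mathrm{c}-(v_\mathrm{c}-1)\bigr)^2+\bigl(v_\mathrm{c}-k\bigr)^2 .
\end{equation*}

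Finally, I will minimize this expression. The middle sum equals the constant $k-1$, independent of $v$. This is the key observation: the self-loop-like terms inside the corridor are insensitive to $v_\mathrm{c}$, because the same value appears on both sides of the Bellman equation. The first term can be made zero by choosing $v_0=v_\mathrm{c}-1$, whatever $v_\mathrm{c}$ is. Only $(v_\mathrm{c}-k)^2$ remains, and it is uniquely minimized at $v_\mathrm{c}=k$, with $v_0=k-1$.

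The main obstacle is bookkeeping rather than analysis. I need to fix the conventions correctly: the reward is attached to the feature being entered, $v(z^\bot)=0$, and state $0$ has its own feature, so it does not constrain $v_\mathrm{c}$. If instead state $0$ were aggregated elsewhere, the same computation still applies, since its term is $(v_0-v_\mathrm{c}+1)^2$ and can be zeroed independently.
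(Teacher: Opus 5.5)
Your proposal is correct and follows the paper's proof essentially step for step: uniform on-policy weights $1/(k+1)$ on states $0,\dots,k$, the same three-term expansion of $\overline{\operatorname{BE}}$, zeroing the first term with $v_0 = v_\mathrm{c}-1$, noting that the $k-1$ intra-corridor terms are constant, and minimizing $(v_\mathrm{c}-k)^2$. One cosmetic point: the paper sets $r(z^\bot)=0$, so it is more consistent to treat the square states as carrying their own rewards ($-1$ and $k$) and having value $0$, rather than literally mapping them to $z^\bot$; this does not change the computation.
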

\begin{proof}\vspace{-3mm}
    If $\mu \in \Delta_{\mathcal X}$ is the on-policy distribution of $\pi$ in the corridor environment, the Bellman error is
    \begin{equation*}
        \overline{\operatorname{BE}}(v) = \sum_{x \in \mathcal X}\mu(x) \bigl\{(v \circ \varphi)(x) - \mathbb E_\pi[r(z') + v(z') \mid x]\bigr\}^2.
    \end{equation*}
    Under the always-right policy, $\mu(x) = \frac{1}{k + 1}$ for all states $x \in \mathcal X$ (excluding the terminal states).
    Thus, denoting the value of the initial state by $v_0$ and the value of the corridor feature by $v_\mathrm{c}$,
    \begin{multline*}
        \overline{\operatorname{BE}}(v) = \frac{1}{k + 1} \{v_0 - (-1 + v_\mathrm{c})\}^2\\
        + \frac{k - 1}{k + 1}\{v_\mathrm{c} - (-1 + v_\mathrm{c})\}^2 + \frac{1}{k + 1}\{v_\mathrm{c} - k\}^2.
    \end{multline*}
    The first term is minimized if $v_0 = v_\mathrm{c} - 1$, the second term simplifies to $\frac{k - 1}{k + 1}$ and is independent of $v$, and the third term is minimized if $v_\mathrm{c} = k$.
\end{proof}

\begin{lemma}\label{lem:inverse}
    Let $\mathcal E$ be an environment satisfying \cref{ass:proper} with transition kernel $\{T_u\}$. Then, $\rho(\Pi_z T_\omega) < 1$ for any $z \in \mathcal Z$ and $\omega \in \Omega$, where $\rho$ is the spectral radius.
\end{lemma}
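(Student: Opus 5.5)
Fix $z \in \mathcal Z$ and $\omega \in \Omega$. The plan is to read $\Pi_z T_\omega$ as the sub-stochastic transition matrix of a process that is killed as soon as it leaves $\mathcal X_z$ or terminates. Then we show this killed process dies with probability one, using the properness of a suitable reactive policy (\cref{ass:proper}). The key observation is that the $\Pi_z$ factor only removes probability mass. So $\Pi_z T_\omega$ is entrywise nonnegative, and each column sum satisfies $\bm 1^\top(\Pi_z T_\omega)_{:,x} = \varphi_z^\top (T_\omega)_{:,x} \le \gamma_{x,\omega} \le 1$.

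\textbf{Step 1 (choice of policy).} Take any reactive policy $\pi: \mathcal Z \to \Omega$ with $\pi(z) = \omega$; the options on the other features are arbitrary. By \cref{ass:proper}, $\pi$ is proper in $\mathcal E$. Starting from any $x \in \mathcal X_z$, the process generated by $\pi$ therefore leaves the set $\mathcal X_z$ (into $x^\bot$ or into another feature) with probability one. This is because, while it stays in $\mathcal X_z$, it evolves exactly according to $T_\omega$ and can never reach $x^\bot$ without leaving $\mathcal X_z$.

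\textbf{Step 2 (link powers to survival).} By induction on $\tau$, I will show that
\begin{equation*}
    \bm 1^\top (\Pi_z T_\omega)^\tau e_x = \mathbb P_\pi\{x_1, \dots, x_\tau \in \mathcal X_z \mid x_0 = x\}
\end{equation*}
for $x \in \mathcal X_z$. For $x \notin \mathcal X_z$, only one factor of $T_\omega$ acts before the projection, and the remaining steps stay inside $\mathcal X_z$, so the same argument applies after one step. Since the events on the right decrease in $\tau$, and their intersection has probability zero by Step 1, the column sums of $(\Pi_z T_\omega)^\tau$ tend to $0$ as $\tau \to \infty$.

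\textbf{Step 3 (conclude).} Since the matrix is nonnegative, $\norm{(\Pi_z T_\omega)^\tau}_1$ equals the maximum column sum, and this tends to $0$. Hence $(\Pi_z T_\omega)^\tau \to 0$, which is equivalent to $\rho(\Pi_z T_\omega) < 1$. Alternatively, Gelfand's formula $\rho(A) = \lim_\tau \norm{A^\tau}^{1/\tau}$ gives the same conclusion once some power has norm below $1$. To get such a power quantitatively, I would use the fact cited in \cref{lem:epi}: a proper policy has a positive probability of terminating within $|\mathcal X|$ steps. Adapted to the killed chain, this gives $\norm{(\Pi_z T_\omega)^{|\mathcal X|}}_1 < 1$.

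\textbf{Main obstacle.} The delicate point is Step 1. We must make sure that properness of a reactive policy on all of $\mathcal X$ really controls the chain that remains inside $\mathcal X_z$. This holds because inside $\mathcal X_z$ the reactive policy plays the fixed option $\omega$, so the in-feature dynamics coincide exactly with $T_\omega$. We also need a finite-state argument that "exits a.s." implies uniform decay, and I would take care to state this properly.
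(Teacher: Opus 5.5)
Your proof is correct, and it takes a genuinely different route from the paper's.

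\textbf{The paper's route.} It argues spectrally. The column-sum bound $\varphi_z^\top (T_\omega)_{:,x} \le 1$ goes into the Gershgorin disk theorem to give $\rho(\Pi_z T_\omega) \le 1$. In the boundary case, Perron--Frobenius supplies a nonnegative eigenvector, which is normalized to a distribution $\nu$ with $\Pi_z T_\omega \nu = \nu$. This ``stationary distribution'' of the killed chain is then declared incompatible with \cref{ass:proper}.

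\textbf{Your route.} You identify the column sums of $(\Pi_z T_\omega)^\tau$ with survival probabilities under a reactive policy with $\pi(z) = \omega$. Continuity from above then gives $(\Pi_z T_\omega)^\tau \to 0$, and you conclude via the elementary equivalence between $A^\tau \to 0$ and $\rho(A) < 1$.

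\textbf{What each buys.} Your argument needs neither Gershgorin nor Perron--Frobenius. It also makes precise what the paper's last step leaves informal: which policy's properness is used, and why a fixed point $\nu$ contradicts it. Averaging your Step~2 identity over $\nu$ gives survival probability $\bm 1^\top (\Pi_z T_\omega)^\tau \nu = 1$ for every $\tau$. You also never have to reduce $\rho = 1$ to ``$1$ is an eigenvalue,'' which the paper does implicitly through Perron--Frobenius. The paper's argument, in turn, is shorter once that contradiction is accepted.

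\textbf{Two small remarks.} The ``uniform decay'' you flag as an obstacle is automatic: there are only finitely many columns, so their maximum tends to $0$. If you keep the quantitative variant, the termination-within-$|\mathcal X|$-steps fact handles the columns $x \in \mathcal X_z$ directly. For the columns $x \notin \mathcal X_z$, note that $\Pi_z T_\omega$ has zero rows outside $\mathcal X_z$. Its spectrum is therefore that of its $\mathcal X_z \times \mathcal X_z$ block together with zeros, so you only need to bound that block.
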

\begin{proof}\vspace{-3mm}
    The elements of $\Pi_z T_\omega$ are nonnegative and given by
    \begin{equation*}
        (\Pi_z T_\omega)_{x', x} = \varphi_z(x') (T_\omega)_{x', x}.
    \end{equation*}
    for $x, x' \in \mathcal X$.
    The sum of values in column $x$ is thus 
    \begin{equation*}
        \varphi_z^\top (T_\omega)_{:, x} \leq \bm 1^\top (T_\omega)_{:, x} \leq 1.
    \end{equation*}
    From this it follows that
    \begin{equation*}
        \sum_{x' \neq x} (\Pi_z T_\omega)_{x', x} \leq 1 - (\Pi_z T_\omega)_{x, x}.
    \end{equation*}
    Thus, by the Gershgorin disk theorem (e.g., \citealp[5.67]{axler2024linear}) we can conclude that all eigenvalues of $\Pi_z T_\omega$ must be less than or equal to $1$.
    Now suppose, for the sake of contradiction, that $1$ is an eigenvalue of $\Pi_z T_\omega$.
    Since we have shown that no eigenvalue can be larger, $1$ is the Perron-Frobenius eigenvalue and thus there must be an eigenvector $\tilde\nu \in \mathbb R^{\mathcal X}$ with nonnegative components (e.g., \citealp[Theorem 8.3.1]{horn2012matrix}).
    Thus, defining $\nu \doteq \tilde\nu/\bm 1^\top \tilde\nu$,
    \begin{equation*}
        (\Pi_z T_\omega)\nu = \nu.
    \end{equation*}
    In other words, $\nu$ is a stationary distribution of $\Pi_z T_\omega$.
    However, no such distribution can exist, since we are assuming that all policies are proper (\cref{ass:proper}), and thus a trajectory must eventually leave $\mathcal X_z$.
    This contradiction forces us to conclude that $\rho(\Pi_z T_\omega) < 1$.
\end{proof}

\begin{lemma}\label{lem:qmrr}
    Let $\mathcal E$ be a quasi-Markov environment. Then, $\mathcal E$ is rewire-robust.
\end{lemma}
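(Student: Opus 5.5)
Our plan is to show that every rewiring $\bar{\mathcal E}$ of a quasi-Markov environment $\mathcal E$ is itself quasi-Markov, with the \emph{same} entrance matrix $\Sigma$ and the \emph{same} aggregate MDP $\hat{\mathcal M}$. Then, by \cref{lem:optimal} applied twice, any policy optimal in $\bar{\mathcal E}$ is optimal in the common aggregate MDP $\hat{\mathcal M}$, hence optimal in $\mathcal E$, which is exactly rewire-robustness (\cref{def:robust}). We assume $\bar{\mathcal E}$ satisfies \cref{ass:proper}, as is implicit whenever we speak of optimal policies. If needed, we deduce this as in the proof of \cref{lem:epi}: the transitions of $\bar{\mathcal E}$ preserve feature-to-feature probabilities and intra-feature dynamics, and they enter each feature on the support of $\sigma_z$.

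First, we show that $\bar{\mathcal E}$ is quasi-Markov with entrance matrix $\Sigma$. By condition (iv) of \cref{def:rewiring} and \cref{lem:dim}, $\bar\varsigma_z\subset\varsigma_z\subset\operatorname{span}\{\sigma_z\}$. Hence every column of $\Pi_z\bar T_u\Pi_z^\bot$, as well as $\Pi_z\bar p_0$, is a multiple of $\sigma_z$. Since $\bm 1^\top\sigma_z=1$, the multiple is the column sum. This gives $\Pi_z\bar T_u\Pi_z^\bot=\sigma_z\varphi_z^\top\bar T_u\Pi_z^\bot=\Pi_z\Sigma\Phi\bar T_u\Pi_z^\bot$, and likewise $\Pi_z\bar p_0=\Pi_z\Sigma\Phi\bar p_0$. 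Summing over $z$ yields $\bar p_0=\Sigma\Phi\bar p_0$. If $\dim\varsigma_z=0$, both sides vanish, so the choice of $\sigma_z$ is irrelevant; we take $\sigma_z$ as fixed by $\mathcal E$.

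Next, we show that the aggregate MDPs coincide. The initial distributions agree, since $\hat{\bar p}_0=\Phi\bar p_0=\Phi p_0=\hat p_0$ by (i). For the disaggregation, $\tilde\psi_z^\omega=(I-\Pi_z\bar T_\omega)^{-1}\sigma_z$ depends on $\bar T_\omega$ only through $\Pi_z\bar T_\omega$. Since $\tilde\psi^\omega_z$ lies in the range of $\Pi_z$ (it is $\sigma_z$ plus terms in $\operatorname{range}\Pi_z$), only $\Pi_z\bar T_\omega\Pi_z$ matters. Making this precise via the Neumann series, which converges by \cref{lem:inverse}, we get $(\Pi_z\bar T_\omega)^\tau\sigma_z=(\Pi_z\bar T_\omega\Pi_z)^\tau\sigma_z$. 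By (iii) this equals $(\Pi_zT_\omega\Pi_z)^\tau\sigma_z$, so $\bar\Psi_\omega=\Psi_\omega$. Finally, $\hat{\bar T}_\omega=\Phi\bar T_\omega\Psi_\omega=\Phi T_\omega\Psi_\omega=\hat T_\omega$ by (ii), extended linearly to options.

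The main step requiring care is the first one, in particular the degenerate case $\dim\bar\varsigma_z=0$ or $\dim\varsigma_z=0$, together with properness of $\bar{\mathcal E}$, which is needed for \cref{lem:optimal} and for invertibility. In the degenerate case the identities hold trivially with any $\sigma_z$, so fixing the entrance matrix of $\mathcal E$ suffices. The remainder is bookkeeping, and the conclusion follows from \cref{lem:optimal}: $\bar J(\pi)=\hat J(\pi)=J(\pi)$ for every reactive policy $\pi$.
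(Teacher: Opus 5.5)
Your proof is correct, but it takes a detour compared to the paper.

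\textbf{The paper's route.} The paper shows directly that a quasi-Markov environment has no nontrivial rewirings. For each $z$, there are two cases.
\begin{itemize}
\item If $\Pi_z p_0 = 0$, then $\Pi_z\bar p_0 = 0$ by (i) and nonnegativity.
\item Otherwise $\bar\varsigma_z = \varsigma_z = \operatorname{span}\{\sigma_z\}$, and normalization gives $\Pi_z\bar p_0 = (\varphi_z^\top\bar p_0)\sigma_z = (\varphi_z^\top p_0)\sigma_z = \Pi_z p_0$.
\end{itemize}
The same column-by-column argument with (ii) gives $\Pi_z\bar T_u\Pi_z^\bot = \Pi_z T_u\Pi_z^\bot$, and (iii) handles the intra-feature blocks. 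So $\bar{\mathcal E} = \mathcal E$, and rewire-robustness is trivial.

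\textbf{You already have this.} Your first step contains it. From $\Pi_z\bar T_u\Pi_z^\bot = \sigma_z\varphi_z^\top\bar T_u\Pi_z^\bot$, condition (ii) and the quasi-Markov property of $\mathcal E$ give $\sigma_z\varphi_z^\top T_u\Pi_z^\bot = \Pi_z T_u\Pi_z^\bot$ in one more line. Likewise $\Pi_z\bar p_0 = \sigma_z\varphi_z^\top\bar p_0 = \sigma_z\varphi_z^\top p_0 = \Pi_z p_0$ by (i). Once you see this, your second and third steps become redundant: matching the disaggregation matrices via the Neumann series, and invoking \cref{lem:optimal} twice.

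\textbf{What each route buys.} The paper's route is more general. It needs no properness of $\mathcal E$ or $\bar{\mathcal E}$, no \cref{lem:inverse}, and no aggregate MDP. Yours must assume \cref{ass:proper} for both environments. The lemma does not state this, although it is a standing hypothesis where the lemma is used in \cref{thm:main}. Your sketched properness argument for $\bar{\mathcal E}$ amounts to observing that the cross-feature transitions of $\bar{\mathcal E}$ coincide with those of $\mathcal E$. What your version does make explicit is that the aggregate MDP depends on the environment only through $\Phi\bar p_0$, $\Phi\bar T_\omega$, $\Pi_z\bar T_\omega\Pi_z$, and $\Sigma$, so that $\bar J = J$. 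Here, though, that structural fact is subsumed by the stronger conclusion $\bar{\mathcal E} = \mathcal E$.
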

\begin{proof}\vspace{-3mm}
    Let $\bar{\mathcal E}$ be any rewiring of $\mathcal E$.
    We will show that $\bar{\mathcal E} = \mathcal E$, which clearly implies rewire-robustness of $\mathcal E$.
    Let $(p_0, \{T_u\})$ and $(\bar p_0, \{\bar T_u\})$ be the initial state distributions and transition kernels of $\mathcal E$ and $\bar{\mathcal E}$, respectively.
    Our goal is to show that $\bar p_0 = p_0$ and $\bar T_u = T_u$, for all $u \in \mathcal U$.
    Let $z \in \mathcal Z$, and consider first the case $\Pi_z p_0 = \bm 0$.
    We have,
    \begin{equation*}
        \Pi_z p_0 = \bm 0 \;\Leftrightarrow\; \varphi_z^\top p_0 = 0 \;\Leftrightarrow\; \varphi_z^\top \bar p_0 \;\Leftrightarrow\; \Pi_z \bar p_0 = \bm 0\text,
    \end{equation*}
    where we have used the rewiring property (i).
    On the other hand, if $\Pi_z p_0 \neq \bm 0$, which now implies that $\Pi_z \bar p_0 \neq \bm 0$, rewiring property (iv) shows that we must have $\bar\varsigma_z = \varsigma_z = \operatorname{span}\{\sigma_z\}$.
    Thus, by normalization of $\sigma_z$,
    \begin{equation*}
        \Pi_z \bar p_0 = (\varphi_z^\top \bar p_0) \sigma_z = (\varphi_z^\top p_0) \sigma_z = \Pi_z p_0\text,
    \end{equation*}
    where we have used the rewiring property (i) and the quasi-Markov property of $\mathcal E$.
    Applying this logic to all $z \in \mathcal Z$ shows that $\bar p_0 = p_0$.
    We apply a similar argument to the transition kernel.
    Let $u \in \mathcal U$.
    Note that, to show that $\bar T_u = T_u$, it suffices to show that $\Pi_z \bar T_u \Pi_z = \Pi_z T_u \Pi_z$ and $\Pi_z \bar T_u \Pi_z^\bot = \Pi_z T_u \Pi_z^\bot$ for all $z \in \mathcal Z$.
    The first equality holds by rewiring property (iii).
    Let $x \in \mathcal X$.
    Then,
    \begin{align*}
        (\Pi_z T_u \Pi_z^\bot)_{:, x} = \bm 0 &\iff \varphi_z^\top (T_u\Pi_z^\bot)_{:, x} = 0\\
                                              &\iff \varphi_z^\top (\bar T_u\Pi_z^\bot)_{:, x} = 0\\
                                              &\iff (\Pi_z \bar T_u \Pi_z^\bot)_{:, x} = \bm 0.
    \end{align*}
    Otherwise, by the same argument as above, we have $\bar\varsigma_z = \varsigma_z = \operatorname{span}\{\sigma_z\}$, and
    \begin{align*}
        (\Pi_z \bar T_u \Pi_z^\bot)_{:, x} &= \varphi_z^\top (\bar T_u \Pi_z^\bot)_{:, x} \sigma_z\\
                                           &= \varphi_z^\top (T_u \Pi_z^\bot)_{:, x}\sigma_z\\
                                           &= (\Pi_z T_u \Pi_z^\top)_{:, x}\text,
    \end{align*}
    where we have again used that $\sigma_z \in \Delta_{\mathcal X}$, that $\Phi \bar T_u = \Phi T_u$, and that $\mathcal E$ is quasi-Markov.
\end{proof}

\begin{lemma}\label{lem:prr}
    There exists an environment $\mathcal E$ and a behavior policy $\pi$ satisfying \cref{ass:proper,ass:explore} such that $\mathcal E$ is $\pi$-rewire-robust, but not rewire-robust.
\end{lemma}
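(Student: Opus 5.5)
We will exhibit a concrete counterexample, reusing the environment of \cref{fig:nrr-a}. Write $z \doteq \varphi(\mathtt c) = \varphi(\mathtt d)$ and let $\Omega$ consist of the two deterministic options (up and right). \Cref{ass:proper} holds because every trajectory terminates after at most three steps. Take $\pi$ to be the behavior policy of \cref{fig:nrr-b}, which moves up with probability $\delta \in (0,1)$ and right with probability $1-\delta$. This $\pi$ satisfies \cref{ass:explore}.

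First we show that $\mathcal E$ is not rewire-robust. The reactive policies differ only in the actions taken at $\mathtt a$ and $\mathtt b$ (the option at $z$ is irrelevant). They give the returns: right at $\mathtt a$ yields $+1$ via $\mathtt c$; up then right yields $-1$ via $\mathtt d$; up then up yields $0$. Hence the unique optimal choice at $\mathtt a$ is ``right'', and any optimal policy has $J=1$. Now consider the rewiring that swaps the entrances, so that $\mathtt a \to \mathtt d$ and $\mathtt b \to \mathtt c$. This is a legitimate rewiring: conditions (i)--(iii) of \cref{def:rewiring} are immediate, and (iv) holds because the new entrance vectors $e_{\mathtt d}, e_{\mathtt c}$ lie in $\varsigma_z = \operatorname{span}\{e_{\mathtt c}, e_{\mathtt d}\}$. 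In this rewiring, ``up at $\mathtt a$, right at $\mathtt b$'' earns $+1$ while ``right at $\mathtt a$'' earns $-1$. So an optimal policy of the rewiring is suboptimal in $\mathcal E$, and \cref{def:robust} fails.

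Second we show that $\mathcal E$ is $\pi$-rewire-robust for a suitable $\delta$. By the discussion after \cref{def:epi}, and as illustrated in \cref{fig:nrr-c}, the $\pi$-rewiring $\bar{\mathcal E}_\pi$ enters $z$ at $\mathtt c$ with probability $1/\eta$ and at $\mathtt d$ with probability $\delta/\eta$, where $\eta = 1+\delta$. We would double-check this from $\tilde\sigma_z$: the stationary mass entering from $\mathtt a$ is proportional to $1-\delta$, and that entering from $\mathtt b$ is proportional to $\delta(1-\delta)$. Since $z$ is left in one step, the entrance value is
\begin{equation*}
    \sigma_z^\top \bar v = \frac{1-\delta}{1+\delta}.
\end{equation*}
This is positive, because $\delta<1$. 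In $\bar{\mathcal E}_\pi$ the choices are as follows. Going right at $\mathtt a$ gives $(1-\delta)/(1+\delta)$. Going up at $\mathtt a$ and then right at $\mathtt b$ gives the same value, and going up twice gives $0$. Consequently both ``right at $\mathtt a$'' and ``up at $\mathtt a$, right at $\mathtt b$'' are optimal in $\bar{\mathcal E}_\pi$.

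This tie breaks $\pi$-rewire-robustness, since ``up, right'' is suboptimal in $\mathcal E$. That is the main obstacle, and in this example the issue is intrinsic: reaching $z$ later costs nothing. We will therefore modify the rewards so that detouring via $\mathtt b$ is penalized. Concretely, we give the move $\mathtt a \to \mathtt b$ a reward of $-\epsilon$ by setting $r(\mathtt b) = -\epsilon$ with $0<\epsilon$ small. All policies remain proper. In $\bar{\mathcal E}_\pi$ the policy ``up, right'' then earns $(1-\delta)/(1+\delta) - \epsilon$, and ``up, up'' earns $-\epsilon$. The unique optimum is therefore ``right at $\mathtt a$'', with arbitrary actions elsewhere, and these policies are exactly the optimal policies of $\mathcal E$.

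The entrance distributions still follow \cref{def:epi}, because they depend only on the dynamics and on $\pi$, not on rewards. In the swapped rewiring, ``up, right'' now earns $1-\epsilon > -1$, so $\mathcal E$ remains non-rewire-robust. The remaining step is to verify the stationary distribution of \cref{lem:mu} for this small chain, which produces $\sigma_z \propto (1-\delta)e_{\mathtt c} + \delta(1-\delta)e_{\mathtt d}$. This is a routine calculation.
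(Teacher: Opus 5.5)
Your proposal is correct and follows essentially the paper's route. You take the two-entrance environment of \cref{fig:nrr-a}, refute rewire-robustness with the entrance-swapping rewiring, and establish $\pi$-rewire-robustness via the $\delta$-behavior policy with $\sigma_z = (e_{\mathtt c} + \delta e_{\mathtt d})/(1+\delta)$, after penalizing the detour through $\mathtt b$ to remove the tie in $\bar{\mathcal E}_\pi$. The only difference is cosmetic: the paper fixes concrete rewards ($-1$ at $\mathtt b$ and at the upper exit, $+2$ and $0$ at the right exits), whereas you keep the original rewards and add a small penalty $-\epsilon$ at $\mathtt b$.
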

\begin{proof}\vspace{-3mm}
\begin{figure}
    \begin{subfigure}[b]{0.49\linewidth}
        \centering
        \tikzexternalenable
        \begin{tikzpicture}[line width=0.3mm, x=1.7\dw, y=2\dw]
    \coordinate (c') at (2, 0);
    \coordinate (d') at (2, 1);
    \node[draw=ocol!75!black, rounded corners, fill=ocol!5,
        minimum width=0.8cm, minimum height=1.6cm] (z) at ($(c')!0.5!(d')$) {};
    \node at (4, 2) {\bfseries (a)};
    \node[draw, line width=0.2mm] at (2, 2) {$\mathcal E$};

    \node[draw, fill=white, circle, inner sep=1pt, minimum width=10pt] (a) at (0, 0) {\tiny $\mathtt{a}$};
    \node[draw, fill=white, circle, inner sep=1pt, minimum width=10pt] (b) at (0, 1) {\tiny $\mathtt{b}$};
    \node[draw, fill=white, circle, inner sep=1pt, minimum width=10pt] (c) at (c') {\tiny $\mathtt{c}$};
    \node[draw, fill=white, circle, inner sep=1pt, minimum width=10pt] (d) at (d') {\tiny $\mathtt{d}$};
    \node[draw, fill=white, inner sep=4pt] (e) at (0, 2) {};
    \node[draw, fill=white, inner sep=4pt] (f) at (4, 0) {};
    \node[draw, fill=white, inner sep=4pt] (g) at (4, 1) {};

    %
    \node at ($(e) - (12pt, 0)$) {\tiny $-1$};
    \node at ($(b) - (12pt, 0)$) {\tiny $-1$};
    \node at ($(f) + (0, 10pt)$) {\tiny $+2$};
    \node at ($(g) + (0, 10pt)$) {\tiny $0$};

    \draw[->, line width=0.2mm] (a) to (b);
    \draw[->, line width=0.2mm] (b) to (e);
    \draw[->, line width=0.2mm] (-0.7, 0) to (a);
    \draw[->, line width=0.2mm] (c) to (f);
    \draw[->, line width=0.2mm] (d) to (g);

    \draw[->, line width=0.2mm] (a) to (c);
    \draw[->, line width=0.2mm] (b) to (d);

\end{tikzpicture}
        \tikzexternaldisable
        \phantomsubcaption\label{fig:prr-a}
    \end{subfigure}
    \hfill
    \begin{subfigure}[b]{0.49\linewidth}
        \centering
        \tikzexternalenable
        \begin{tikzpicture}[line width=0.3mm, x=1.7\dw, y=2\dw]
    \coordinate (c') at (2.7, 0);
    \coordinate (d') at (2.7, 1);
    \node[draw=ocol!75!black, rounded corners, fill=ocol!5,
        minimum width=0.8cm, minimum height=1.6cm] (z) at ($(c')!0.5!(d')$) {};
    \node at (4, 2) {\bfseries (b)};
    \node[draw, line width=0.2mm] at (2, 2) {$\mathcal E, \pi$};

    \node[draw, fill=white, circle, inner sep=1pt, minimum width=10pt] (a) at (0, 0) {\tiny $\mathtt{a}$};
    \node[draw, fill=white, circle, inner sep=1pt, minimum width=10pt] (b) at (0, 1) {\tiny $\mathtt{b}$};
    \node[draw, fill=white, circle, inner sep=1pt, minimum width=10pt] (c) at (c') {\tiny $\mathtt{c}$};
    \node[draw, fill=white, circle, inner sep=1pt, minimum width=10pt] (d) at (d') {\tiny $\mathtt{d}$};
    \node[draw, fill=white, inner sep=4pt] (e) at (0, 2) {};
    \node[draw, fill=white, inner sep=4pt] (f) at (4, 0) {};
    \node[draw, fill=white, inner sep=4pt] (g) at (4, 1) {};

    %
    \node at ($(e) - (12pt, 0)$) {\tiny $-1$};
    \node at ($(b) - (12pt, 0)$) {\tiny $-1$};
    \node at ($(f) + (0, 10pt)$) {\tiny $+2$};
    \node at ($(g) + (0, 10pt)$) {\tiny $0$};

    \draw[->, line width=0.2mm] (a) to node[pos=0.4, fill=white, inner sep=1.5pt] {\tiny $\delta$} (b);
    \draw[->, line width=0.2mm] (b) to node[pos=0.4, fill=white, inner sep=1.5pt] {\tiny $\delta$} (e);
    \draw[->, line width=0.2mm] (-0.7, 0) to (a);
    \draw[->, line width=0.2mm] (c) to (f);
    \draw[->, line width=0.2mm] (d) to (g);
    \draw[->, line width=0.2mm] (a) to node[pos=0.45, fill=white, inner sep=1.5pt] {\tiny $1 - \delta$} (c);
    \draw[->, line width=0.2mm] (b) to node[pos=0.45, fill=white, inner sep=1.5pt] {\tiny $1 - \delta$} (d);

\end{tikzpicture}
        \tikzexternaldisable
        \phantomsubcaption\label{fig:prr-b}
    \end{subfigure}\\
    \begin{subfigure}[b]{0.49\linewidth}
        \centering
        \tikzexternalenable
        \begin{tikzpicture}[line width=0.3mm, x=1.7\dw, y=2\dw]
    \coordinate (c') at (2, 0);
    \coordinate (d') at (2, 1);
    \node[draw=ocol!75!black, rounded corners, fill=ocol!5,
        minimum width=0.8cm, minimum height=1.6cm] (z) at ($(c')!0.5!(d')$) {};
    \node at (4, 2) {\bfseries (c)};
    \node[draw, line width=0.2mm] at (2, 2) {$\bar{\mathcal E}$};

    \node[draw, fill=white, circle, inner sep=1pt, minimum width=10pt] (a) at (0, 0) {\tiny $\mathtt{a}$};
    \node[draw, fill=white, circle, inner sep=1pt, minimum width=10pt] (b) at (0, 1) {\tiny $\mathtt{b}$};
    \node[draw, fill=white, circle, inner sep=1pt, minimum width=10pt] (c) at (c') {\tiny $\mathtt{c}$};
    \node[draw, fill=white, circle, inner sep=1pt, minimum width=10pt] (d) at (d') {\tiny $\mathtt{d}$};
    \node[draw, fill=white, inner sep=4pt] (e) at (0, 2) {};
    \node[draw, fill=white, inner sep=4pt] (f) at (4, 0) {};
    \node[draw, fill=white, inner sep=4pt] (g) at (4, 1) {};

    \node at ($(e) - (12pt, 0)$) {\tiny $-1$};
    \node at ($(b) - (12pt, 0)$) {\tiny $-1$};
    \node at ($(f) + (0, 10pt)$) {\tiny $+2$};
    \node at ($(g) + (0, 10pt)$) {\tiny $0$};

    \draw[->, line width=0.2mm] (a) to (b);
    \draw[->, line width=0.2mm] (b) to (e);
    \draw[->, line width=0.2mm] (-0.7, 0) to (a);
    \draw[->, line width=0.2mm] (c) to (f);
    \draw[->, line width=0.2mm] (d) to (g);
    \draw[->, line width=0.2mm, out=0, in=180] (a) to (d);
    \draw[->, line width=0.2mm, out=0, in=180] (b) to (c);
\end{tikzpicture}
        \tikzexternaldisable
        \phantomsubcaption\label{fig:prr-c}
    \end{subfigure}
    \hfill
    \begin{subfigure}[b]{0.49\linewidth}
        \centering
        \tikzexternalenable
        \begin{tikzpicture}[line width=0.3mm, x=1.7\dw, y=2\dw]
    \coordinate (c') at (3, 0);
    \coordinate (d') at (3, 1);
    \node[draw=ocol!75!black, rounded corners, fill=ocol!5,
        minimum width=0.8cm, minimum height=1.6cm] (z) at ($(c')!0.5!(d')$) {};
    \node at (4, 2) {\bfseries (d)};
    \node[draw, line width=0.2mm] at (2, 2) {$\bar{\mathcal E}_\pi$};

    \node[draw, fill=white, circle, inner sep=1pt, minimum width=10pt] (a) at (0, 0) {\tiny $\mathtt{a}$};
    \node[draw, fill=white, circle, inner sep=1pt, minimum width=10pt] (b) at (0, 1) {\tiny $\mathtt{b}$};
    \node[draw, fill=white, circle, inner sep=1pt, minimum width=10pt] (c) at (3, 0) {\tiny $\mathtt{c}$};
    \node[draw, fill=white, circle, inner sep=1pt, minimum width=10pt] (d) at (3, 1) {\tiny $\mathtt{d}$};
    \node[fill=black, inner sep=1.5pt, circle] (u1) at (0.7, 0) {};
    \node[fill=black, inner sep=1.5pt, circle] (u2) at (0.7, 1) {};
    \node[draw, fill=white, inner sep=4pt] (e) at (0, 2) {};
    \node[draw, fill=white, inner sep=4pt] (f) at (4, 0) {};
    \node[draw, fill=white, inner sep=4pt] (g) at (4, 1) {};

    \node at ($(e) - (12pt, 0)$) {\tiny $-1$};
    \node at ($(b) - (12pt, 0)$) {\tiny $-1$};
    \node at ($(f) + (0, 10pt)$) {\tiny $+2$};
    \node at ($(g) + (0, 10pt)$) {\tiny $0$};

    \draw[->, line width=0.2mm] (a) to (b);
    \draw[->, line width=0.2mm] (b) to (e);
    \draw[->, line width=0.2mm] (-0.7, 0) to (a);
    \draw[->, line width=0.2mm] (c) to (f);
    \draw[->, line width=0.2mm] (d) to (g);
    \draw[line width=0.2mm, scale=0.5] (a) to (u1);
    \draw[line width=0.2mm, scale=0.5] (b) to (u2);
    \draw[->, line width=0.2mm] (u1) to node[sloped, pos=0.5, fill=white, inner sep=1.5pt, inner ysep=0pt] {\tiny $1 / \eta$} (c);
    \draw[->, line width=0.2mm] (u2) to node[sloped, pos=0.5, fill=white, inner sep=1.5pt, inner ysep=0pt] {\tiny $\delta / \eta$} (d);
    \draw[->, line width=0.2mm, in=-127, out=33, looseness=1.5] (u1) to node[pos=0.4, fill=white, inner sep=1.5pt, inner ysep=0pt] {\tiny $\delta/\eta$} (d);
    \draw[->, line width=0.2mm, in=127, out=-33, looseness=1.5] (u2) to node[pos=0.4, fill=white, inner sep=1.5pt, inner ysep=0pt] {\tiny $1 / \eta$} (c);
\end{tikzpicture}
        \tikzexternaldisable
        \phantomsubcaption\label{fig:prr-d}
    \end{subfigure}
    \caption{An environment that is not rewire-robust, but that is $\pi$-rewire-robust under the behavior policy $\pi$. See \cref{lem:prr} for details. (Here, $\eta \doteq 1 + \delta$.)}
\end{figure}
Consider the environment $\mathcal E$ in \cref{fig:prr-a}, which is a variation of \cref{fig:nrr-a} with modified rewards.
The optimal reactive policy in $\mathcal E$ clearly moves right in both $\mathtt a$ and $\mathtt b$.
\Cref{fig:prr-c} shows a rewiring $\bar{\mathcal E}$ of $\mathcal E$ in which it is optimal to move right in $\mathtt b$, but up in $\mathtt a$.
Thus, the environment $\mathcal E$ is not rewire-robust.
Now, consider a behavior policy $\pi$ that moves right with probability $1 - \delta$ and up with probability $\delta$ (\cref{fig:prr-b}).
The corresponding $\pi$-rewiring is shown in \cref{fig:prr-d} (cf.\ \cref{fig:nrr-d}).
In the states $\mathtt a$ and $\mathtt b$ of $\bar{\mathcal E}_\pi$, moving right yields an expected return of $2 / \eta$.
In $\mathtt b$, this is clearly preferable to moving up and terminating with a reward of $-1$.
In $\mathtt a$, the alternative is moving up and having a maximum expected return of $2 / \eta - 1$.
Thus, in both $\mathtt a$ and $\mathtt b$, the optimal reactive policy moves right.
Hence, $\mathcal E$ is $\pi$-rewire-robust. (The value of $\delta$ is irrelevant.)
\end{proof}

\begin{lemma}\label{lem:triangle}
    Let $f, g$ be two real-valued functions defined on a finite set. Then,
    \begin{equation}
        |\max_x f(x) - \max_x g(x)| \leq \norm{f - g}\text,
    \end{equation}
    where $\norm{\cdot}$ is the max-norm.
\end{lemma}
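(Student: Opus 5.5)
The plan is to reduce the claim to a one-sided comparison of each maximum with the other function's maximum, and then combine the two one-sided bounds. Since the domain, say $\mathcal S$, is finite and (implicitly) nonempty, both maxima are attained. Write $x_f \in \argmax_x f(x)$ and $x_g \in \argmax_x g(x)$.

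First I would bound $\max_x f(x) - \max_x g(x)$ from above. Because $g(x_f) \leq \max_x g(x)$, we get
\begin{equation*}
    \max_x f(x) - \max_x g(x) \leq f(x_f) - g(x_f) \leq \bigl|f(x_f) - g(x_f)\bigr| \leq \norm{f - g}\text,
\end{equation*}
where the last step is just the definition of the max-norm, $\norm{f - g} = \max_{x} |f(x) - g(x)|$.

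Exchanging the roles of $f$ and $g$, and using the maximizer $x_g$, gives in the same way $\max_x g(x) - \max_x f(x) \leq \norm{f - g}$. The two inequalities together say that $\max_x f(x) - \max_x g(x)$ lies in $[-\norm{f - g}, \norm{f - g}]$, which is exactly the claimed bound on its absolute value.

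No step here is a real obstacle; the only point needing care is that the maxima are attained, which finiteness of the domain guarantees. In the paper this lemma will presumably serve to show that the map $Q \mapsto \max_{\omega'} Q(z', \omega')$ is $1$-Lipschitz in the max-norm. That is what gives the Lipschitz condition of \cref{thm:liu} and the max-norm nonexpansiveness required by \cref{thm:borkar}.
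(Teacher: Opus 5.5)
Your proof is correct and follows the paper's argument: fix a maximizer $x_f$ of $f$, bound $\max_x f(x) - \max_x g(x)$ by $f(x_f) - g(x_f) \leq \norm{f - g}$, and get the other inequality by swapping $f$ and $g$. Your remark that finiteness (and nonemptiness) of the domain ensures the maxima are attained is a small point the paper leaves implicit.
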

\begin{proof}\vspace{-3mm}
    Let $x_f$ be a maximizer of $f$.
    Then, $f(x_f) - g(x_f) \leq \norm{f - g}$ and thus
    \begin{align*}
        \max_x f(x) = f(x_f) &\leq g(x_f) + \norm{f - g}\\
        &\leq \max_x g(x) + \norm{f - g}.
    \end{align*}
    The reverse direction is analogous.
\end{proof}

\section{Proofs}
In this section we present all proofs that are omitted in the main text.

\paragraph{\cref{lem:mu}.}\phantomsection\label{proof:lem:mu}
\emph{\hspace{-1em}
    Let $\mathcal E$ be an environment and $\pi$ a behavior policy such that \cref{ass:proper,ass:explore} hold.
    Then, the Markov chain $(\xi_t) \subset \Xi$ is irreducible and has a unique stationary distribution $\mu \in \Delta_\Xi$ satisfying 
    \begin{equation*}
        \mu(x, \omega, x') = \mu(x, \omega)(T_\omega')_{x',x}
    \end{equation*}
    for all $(x, \omega, x') \in \Xi$. Furthermore, this distribution satisfies, for all $z \in \mathcal Z$ and $\omega \in \Omega$,
    \begin{align*}
        \Pi_z\mu_\omega &= (1 - \gamma_\pi)\pi(\omega \mid z)\Pi_z p_0 + \Pi_z T_\omega \Pi_z \mu_\omega\\
        &+ \pi(\omega \mid z)\sum_{\omega' \in \Omega}\Pi_z T_{\omega'}\Pi_z^\bot\mu_{\omega'}\text,
    \end{align*}
    where $\mu_\omega \in \mathbb R^{\mathcal X}$ is defined as $\mu_\omega(x) \doteq \mu(x, \omega)$, and where $\gamma_\pi \doteq \sum_{x \in \mathcal X, \omega \in \Omega} \mu(x, \omega) \gamma_{x, \omega}$.
}
\begin{proof}\vspace{-3mm} 
    We first formally define the set $\Xi$ as
    \begin{align*}
        \Xi \doteq \{(x, \omega, x') \in \mathcal X \times \Omega \times \mathcal X' \mid (T'_\omega)_{x', x} > 0\}\text.
    \end{align*}
    Recall that we are assuming, without loss of generality, that all states $x \in \mathcal X$ are reachable.
    Irreducibility means that for any $\xi, \xi' \in \Xi$, there is a positive probability of reaching $\xi$ from $\xi'$ under the behavior policy $\pi$.
    Formally,
    \begin{multline*}
        \mathbb P'_\pi\{\exists t: \xi_t = \xi \mid \xi_0 = \xi'\}\\
        \begin{aligned}
            &\geq \mathbb P'_\pi\{\exists t, t': x_t' = x^\bot, \xi_{t + t'} = \xi \mid \xi_0 = \xi'\}\\
            &\geq \mathbb P_\pi\{\exists t: x_t' = x^\bot \mid \xi_0 = \xi'\}\, \mathbb P_\pi\{\exists t: \xi_{t} = \xi\}\text,
        \end{aligned}
    \end{multline*}
    where the second inequality follows because we have replaced the restarting kernel $\mathbb P_\pi'$ with the terminating kernel $\mathbb P_\pi$.
    The first factor is $1$ by \cref{ass:proper}.
    The second factor is, with $\xi \doteq (x, \omega, x')$,
    \begin{multline*}
         \mathbb P_\pi\{\exists t: \xi_{t} = \xi \mid x_0 \sim p_0\}\\ 
         \begin{aligned}
            &= \mathbb P_\pi\{\exists t: x_t = x, \omega_t = \omega, x_t' = x' \mid x_0 \sim p_0\}\\
            &\geq \mathbb P_\pi\{\exists t: x_t = x \mid x_0 \sim p_0\} \pi(\omega \mid z) (T'_\omega)_{x', x}
         \end{aligned}
    \end{multline*}
    which is positive by \cref{ass:explore} and by the definitions of $\mathcal X$ and $\Xi$.
    The existence of a unique stationary distribution $\mu \in \Delta_\Xi$ follows directly from irreducibility (e.g., \citealp[Proposition 8.4.10]{rosenthal2006first}).
    For convenience, we extend $\mu$ to the space $\Delta_{\mathcal X \times \Omega \times \mathcal X'}$ and define $\mu(x, \omega, x') = 0$ for all $(x, \omega, x') \not\in \Xi$.
    We now show that $\mu$ can be decomposed as described.
    Let $\xi = (x, \omega, x') \in \Xi$. Then,
    \begin{multline*}
        \mu(x, \omega) (T_\omega')_{x', x}\\
        \begin{aligned}
            &= \sum_{\bar x' \in \mathcal X'} \mu(x, \omega, \bar x') (T_\omega')_{x', x}\\
            &= \sum_{\bar x' \in \mathcal X'} \sum_{\xi_- \in \Xi} \mu(\xi_-) p(x, \omega \mid \xi_-) (T_\omega')_{\bar x', x} (T_\omega')_{x', x}\\
            &= \sum_{\xi_- \in \Xi} \mu(\xi_-) p(x, \omega, x' \mid \xi_-) \underbrace{\sum_{\bar x' \in \mathcal X'} p(T_\omega')_{\bar x', x}}_{1}\\[-0.5\baselineskip]
            &= \mu(x, \omega, x').
        \end{aligned}
    \end{multline*}
    We now show that $\Pi_z\mu_\omega$ satisfies the desired equation.
    Let $\xi = (x, \omega, x') \in \Xi$.
    Using the decomposition above and the fact that $\mu$ is stationary,
    \begin{align*}
        \mu(x, \omega) &= \mu(x, \omega, x')/(T_\omega')_{x', x}\\
        &= \sum_{\xi_- \in \Xi} \mu(x_-, \omega_-) (T_{\omega_-}')_{x_-', x_-} p(x, \omega \mid \xi_-)\\
        &= \sum_{\substack{x_- \in \mathcal X\\\omega_- \in \Omega}}\mu(x_-, \omega_-) \underbrace{\sum_{\mathclap{x_-' \in \mathcal X'}}\,p(x, \omega \mid \xi_-)(T_{\omega_-}')_{x_-', x_-}.}_{p(x, \omega \mid x_-, \omega_-)}
    \end{align*}
    Thus, $\mu(x, \omega)$ is stationary with respect to $p(x, \omega \mid x_-, \omega_-)$.
    Plugging in the definition of the kernel $p(\xi \mid \xi_-)$, we get
    \begin{multline*}
        \sum_{x_-' \in \mathcal X'}p(x, \omega \mid \xi_-)(T_{\omega_-}')_{x_-', x_-}\\
        \begin{aligned}
            &= \sum_{x_-' \in \mathcal X'}p(x\mid x'_-)p(\omega \mid \xi_-, x)(T_{\omega_-}')_{x_-', x_-}\\
            &= \sum_{x_-' \in \mathcal X}[z_-' = z_-][x = x_-][\omega = \omega_-](T_{\omega_-}')_{x_-', x_-}\\
            &\qquad+ \sum_{x_-' \in \mathcal X}[z_-' \neq z_-][x = x_-]\pi(\omega \mid z)(T_{\omega_-}')_{x_-', x_-}\\
            &\qquad+ p_0(x)\pi(\omega \mid z)(T_{\omega_-}')_{x^\bot, x_-}\\[0.2\baselineskip]
            &= [z = z_-][\omega = \omega_-](T_{\omega_-})_{x, x_-}\\[0.1\baselineskip]
            &\qquad+ [z \neq z_-]\pi(\omega \mid z)(T_{\omega_-})_{x, x_-}\\[0.1\baselineskip]
            &\qquad+ (1 - \gamma_{x_-, \omega_-})\pi(\omega \mid z)p_0(x).
        \end{aligned}
    \end{multline*}
    From this, we get
    \begin{align*}
        \mu_\omega(x) &= \sum_{\substack{x_- \in \mathcal X\\\omega_- \in \Omega}} \mu_{\omega_-}(x_-) p(x, \omega \mid x_-, \omega_-)\\
        &= \sum_{x_- \in \mathcal X}(T_\omega)_{x, x_-}[z_- = z]\mu_\omega(x_-)\\
        &\qquad+ \pi(\omega \mid z)\sum_{\substack{x_- \in \mathcal X\\\omega_- \in \Omega}}(T_\omega)_{x, x_-}[z_- \neq z]\mu_{\omega_-}(x_-)\\
        &\qquad+ (1 - \gamma_\pi)\pi(\omega\mid z)p_0(x).
    \end{align*}
    The expression for $\Pi_z \mu_\omega$ follows immediately from this by writing the sums as matrix products and the Iverson brackets as projections.
\end{proof}

\paragraph{\cref{lem:ode}.}\phantomsection\label{proof:lem:ode}
\emph{\hspace{-1em}
    Let $\mathcal E$ be an environment and $\pi$ a behavior policy such that \cref{ass:proper,ass:explore,ass:alpha} hold.
    Then, the iterates $Q_t$ of \cref{alg:q-commit} converge almost surely to a solution $Q_\star$ satisfying
    \begin{equation}
        Q_\star(z, \omega) = \mathbb E_\mu\bigl[r(z') + \max_{\omega' \in \Omega}Q_\star(z', \omega')\mid z, \omega\bigr]\text.
    \end{equation}
}
\begin{proof}\vspace{-2mm} 
Our convergence result is based on the general stochastic approximation convergence theorem by \citet{liu2025ode}, specifically on a special case of their Corollary 8, which we have included as \cref{thm:liu}.
We begin by casting \cref{alg:q-commit} as a stochastic approximation procedure of the form required by \cref{thm:liu}, and then verify that all assumptions of this theorem hold in our case.
The parameter vector that is updated in \cref{alg:q-commit} is the Q-table, so in the notation of \cref{thm:liu}, $\theta_t \doteq Q_t \in \mathbb R^{\mathcal Z \times \Omega}$.
The update itself happens in \cref{ln:q} of \cref{alg:q-commit}, which we can write as
\begin{equation*}
    Q_{t + 1} = Q_{t} + \alpha_t \delta_{z_t, \omega_t} \Delta_t\text,
\end{equation*}
where $\delta_{z, \omega} \in \mathbb R^{\mathcal Z \times \Omega}$ is defined as $(\delta_{z, \omega})_{z', \omega'} \doteq [z = z'][\omega = \omega']$, and
\begin{equation*}
    \Delta_t = r(z_t') + \max_{\omega \in \Omega}Q_t(z_t', \omega) - Q_t(z_t, \omega_t).
\end{equation*}
If we let $\xi_{t} \doteq (x_t, \omega_t, x_t')$ as described above, then this recreates the required form \cref{eq:sa} with
\begin{multline*}
    H(Q, x, \omega, x') \doteq \delta_{\varphi(x), \omega}\bigl\{(r \circ \varphi)(x')\\
    + \max_{\omega' \in \Omega}Q\bigl(\varphi(x'), \omega'\bigr) - Q\bigl(\varphi(x), \omega\bigr)\bigr\}.
\end{multline*}
Assumption 1 of \cref{thm:liu} demands that $(\xi_t)$ is irreducible with stationary distribution $\mu \in \Delta_\Xi$, which is satisfied by \cref{lem:mu}, and Assumption 2 of \cref{thm:liu} is satisfied by our \cref{ass:alpha}.
We thus continue to verify Assumption 3 of \cref{thm:liu}.
Writing $z \doteq \varphi(x)$ and $z' \doteq \varphi(x')$, we define
\begin{equation*}
    H_\infty(Q, \xi) \doteq \delta_{z, \omega}\bigl\{
    \max_{\omega' \in \Omega}Q(z', \omega') - Q(z, \omega)\bigr\}\text,
\end{equation*}
and $b(\xi) \doteq \delta_{z, \omega} r(z')$.
Let $Q \in \mathbb R^{\mathcal Z \times \Omega}$, $\xi \in \Xi$ and $c \geq 1$ be arbitrary.
Then,
\begin{equation*}
    H(cQ, \xi) - cH_\infty(Q, \xi) = \delta_{z, \omega}r(z') = b(\xi)\text,
\end{equation*}
as required.
Moving on to Assumption 4 of \cref{thm:liu}, we have that, for any $Q, Q' \in \mathbb R^{\mathcal Z \times \Omega}$ and $\xi \in \Xi$,
\begin{multline*}
    \norm{H(Q, \xi) - H(Q', \xi)} = \norm{H_\infty(Q, \xi) - H_\infty(Q', \xi)}\\
    \begin{aligned}
        &\begin{multlined}
            \leq |\max_{\omega' \in \Omega}Q(z', \omega') - \max_{\omega' \in \Omega}Q'(z', \omega')|\\
            \qquad\qquad\qquad+ | Q(z, \omega) - Q'(z, \omega)|
        \end{multlined}\\
        &\leq 2 \norm{Q - Q'}\text,
    \end{aligned}
\end{multline*}
where $\norm{\cdot}$ represents the max-norm, and where we have used \cref{lem:triangle}.
We have thus shown that the Lipschitz condition in Assumption 4 of \cref{thm:liu} is satisfied with $L = 2$.
Furthermore, both
\begin{equation*}
    h(Q) \doteq \Ex_{\xi \sim \mu}\bigl[\delta_{z, \omega}\bigl\{r(z') + \max_{\omega' \in \Omega}Q(z', \omega') - Q(z, \omega)\bigr\}\bigr]
\end{equation*}
and
\begin{equation*}
    h_\infty(Q) \doteq \Ex_{\xi \sim \mu}\bigl[\delta_{z, \omega}\bigl\{
    \max_{\omega' \in \Omega}Q(z', \omega') - Q(z, \omega)\bigr\}\bigr]
\end{equation*}
are finite for all $Q \in \mathbb R^{\mathcal Z \times \Omega}$.
We thus continue to Assumption 5 of \cref{thm:liu}.
Let $h_c(Q) \doteq h(cQ)/c$. We need to show that $h_c \to h_\infty$ uniformly as $c \to \infty$.
We have
\begin{align*}
    \norm{h_c - h_\infty} &= \frac{1}{c}\norm{\Ex_{\xi \sim \mu}\bigl[\delta_{z, \omega}r(z')\bigr]} \leq \Ex_{\xi \sim \mu}\frac{|r(z')|}{c} \leq \frac{r_{\text{max}}}{c}\text,
\end{align*}
where $r_{\text{max}} \doteq \max_{z \in \mathcal Z} r(z)$.
Thus, the convergence is indeed uniform, and we can consider the ordinary differential equation (ODE)
\begin{equation}\label{eq:ode-infty}
    \dot Q = h_\infty(Q).
\end{equation}
We need to show that $0 \in \mathbb R^{\mathcal Z \times \Omega}$ is a globally asymptotically stable equilibrium point of this ODE.
If this the case, \cref{thm:liu} guarantees that that $Q_t$ converges almost surely to the globally asymptotically stable equilibrium point $Q_\star$ of
\begin{equation}\label{eq:ode}
    \dot Q = h(Q).
\end{equation}
We begin with \cref{eq:ode}, since \cref{eq:ode-infty} is simply a special case in which the rewards are $0$ everywhere.
To analyze the stability of this ODE, we use a result by \citet{borkar1997analog}, which we have included as \cref{thm:borkar}.
For any $Q \in \mathbb R^{\mathcal Z \times \Omega}$, $z \in \mathcal Z$, and $\omega \in \Omega$, we define the operator $F$ as
\begin{equation*}
    (FQ)(z, \omega) \doteq \mathbb E_\mu\bigl[r(z') + \max_{\omega' \in \Omega} Q(z', \omega')\mid z, \omega\bigr]
\end{equation*}
We can then rewrite \cref{eq:ode} as
\begin{align*}
    \dot Q &= \mathbb E_\mu\bigl[\delta_{z, \omega}(FQ - Q)(z, \omega)\bigr] = M \odot (FQ - Q)\text,
\end{align*}
where $M \in \mathbb R^{\mathcal Z \times \Omega}$ is defined by $M_{z, \omega} \doteq \mu(z, \omega)$ and $\odot$ denotes the Hadamard product.
Thus, \cref{eq:ode} is of the form $\dot Q = F_\mu Q - Q$ that is required by \cref{thm:borkar}.
Here,
\begin{equation*}
    F_\mu Q \doteq Q + M \odot (FQ - Q).
\end{equation*}
To use \cref{thm:borkar}, we now need to verify that $F_\mu$ is max-norm nonexpansive, and that it has a unique fixed point.
First note that $F$ is an episodic Bellman operator in an MDP with state space $\mathcal Z$, action space $\Omega$, and transition dynamics $\mu(z' \mid z, \omega)$.
Thus, we know that $F$ is a weighted max-norm contraction and hence has a unique fixed point $Q_\star$ \citep[Propositions 3.3.1, 1.5.1]{bertsekas2012dynamic}.
Furthermore, $F$ is max-norm nonexpansive, as
\begin{multline*}
    \norm{FQ - FQ'}\\
    \begin{aligned}
        &= \bigl\lVert\,\mathbb E_\mu\bigl[\max_{\omega' \in \Omega} Q(z', \omega') - \max_{\omega' \in \Omega} Q'(z', \omega') \mid z, \omega \bigr]\bigr\rVert\\
        &\leq \norm{Q - Q'}
    \end{aligned}
\end{multline*}
for any $Q, Q' \in \mathbb R^{\mathcal Z \times \Omega}$, which follows from Jensen's inequality and \cref{lem:triangle}.
Returning to $F_\mu$, we find that this operator has the same unique fixed point as $F$:
\begin{align*}
    F_\mu Q = Q \iff M \odot (FQ - Q) = 0 \iff Q = Q_\star\text,
\end{align*}
where we have used that $\mu(z, \omega) > 0$ for all $z \in \mathcal Z$ and $\omega \in \Omega$.
Furthermore, $F_\mu$ is max-norm nonexpansive.
To see this, let $Q, Q' \in \mathbb R^{\mathcal Z \times \Omega}$, and let $\bm 1 \in \mathbb R^{\mathcal Z \times \Omega}$ be the all-ones matrix. Then,
\begin{multline*}
    \!\!\norm{F_\mu Q - F_\mu Q'}\\
    \quad\begin{aligned}
        &= \norm{(\bm 1 - M) \odot (Q - Q') + M \odot (FQ - FQ')}\\
        &\begin{multlined}
            =\max_{z \in \mathcal Z}\max_{\omega \in \Omega}\, \lvert\{1 - \mu(z, \omega)\} (Q - Q')(z, \omega)\\[-0.3\baselineskip]
            \qquad\qquad\qquad+ \mu(z, \omega)(FQ - FQ')(z, \omega))\rvert
        \end{multlined}\\
        &\begin{multlined}
            \leq\max_{z \in \mathcal Z}\max_{\omega \in \Omega}\, \{1 - \mu(z, \omega)\} \norm{Q - Q'}\\[-0.3\baselineskip]
            \qquad\qquad\qquad+ \mu(z, \omega)\norm{FQ - FQ'}
        \end{multlined}\\
        &\leq \norm{Q - Q'}\text,
    \end{aligned}
\end{multline*} 
where we have used that $\mu(z, \omega) \in (0, 1)$ for all $z \in {\mathcal Z}$ and $\omega \in \Omega$ and that $F$ is nonexpansive.
Finally, using \cref{thm:borkar}, we can conclude that the ODE \cref{eq:ode} has a unique globally asymptotically stable equilibrium point $Q_\star$ satisfying $Q_\star = F Q_\star$.
The only thing left to verify is that this fixed point is $0 \in \mathbb R^{\mathcal Z \times \Omega}$ for the ODE \cref{eq:ode-infty}.
In this case, the Bellman operator is, for $z \in {\mathcal Z}$ and $\omega \in \Omega$,
\begin{equation*}
    (F_\infty Q)(z, \omega) \doteq \mathbb E_\mu\bigl[\max_{\omega' \in \Omega} Q(z', \omega')\mid z, \omega\bigr].
\end{equation*}
It is easily seen that $F_\infty 0 = 0$.
As we have already shown that the fixed point is unique, we are done.
\Cref{thm:liu} now guarantees that the iterates $Q_t$ of \cref{alg:q-commit} converge almost surely to $Q_\star$ which satisfies \cref{eq:q-commit}.
\end{proof}

\paragraph{\cref{lem:phat}.}\phantomsection\label{proof:lem:phat}
\emph{\hspace{-1em}
    Let $\mathcal E$ be an environment and $\pi$ a behavior policy such that \cref{ass:proper,ass:explore} hold.
    Then, if $\hat{\mathcal M}_\pi$ is the corresponding $\pi$-MDP with transition kernel $\{\hat T_\omega\}$,
    \begin{equation*}
        \mu(z' \mid z, \omega) = (\hat T_\omega)_{z', z}
    \end{equation*}
    for all $z, z' \in \mathcal Z$, and all $\omega \in \Omega$.
}
\begin{proof}\vspace{-3mm}
    Let $z, z' \in \mathcal Z$ and $\omega \in \Omega$.
    Then, by the definition of $\mu(z, \omega, z')$,
    \begin{align*}
        \mu(z' \mid z, \omega) &= \frac{\sum_{\substack{x \in \mathcal X_z, x' \in \mathcal X_{z'}}}\mu(x, \omega, x')}{\sum_{x \in \mathcal X_z}\mu(x, \omega)}\\
        &= \frac{\sum_{x' \in \mathcal X}\varphi_{z'}(x')\sum_{x \in \mathcal X_z}(T_\omega)_{x', x}\mu_\omega(x)}{\sum_{x \in \mathcal X}\varphi_z(x)\mu_\omega(x)}\\
        &= \varphi_{z'}^\top \bar T_\omega \frac{\Pi_z\mu_\omega}{\varphi_z^\top \mu_\omega}\text,
    \end{align*}
    where we have used the rewiring property of $\bar{\mathcal E}_\pi$ in the last step.
    The transition kernel of $\hat{\mathcal M}_\pi$ is $(\hat T_\omega)_{z', z} = \varphi_{z'}^\top \bar T_\omega \psi_z^\omega.$
    Thus, we need to prove that $\psi_z^\omega = \Pi_z\mu_\omega/\varphi_z^\top\mu_\omega$ for all $z \in \mathcal Z$ and $\omega \in \Omega$.
    As $\psi_z^\omega$ is normalized, it is enough to prove that $\psi_z^\omega \propto \Pi_z\mu_\omega$.
    Using the fact that $\Pi_z T_\omega \Pi_z = \Pi_z \bar T_\omega \Pi_z$, we have, from the definition of $\tilde\sigma_z$ and \cref{lem:mu},
    \begin{equation*}
        \pi(\omega\mid z)\tilde\sigma_z = (I - \Pi_z\bar T_\omega)\Pi_z\mu_\omega.
    \end{equation*}
    The final result follows as 
    \begin{equation*}
        \psi_z^\omega \propto (I - \Pi_z\bar T_\omega)^{-1}\tilde\sigma_z \propto \Pi_z \mu_\omega.\qedhere
    \end{equation*}
\end{proof}

\section{The Bellman risk is learnable}\label{sec:learnability}
\citet[Section 11.6]{sutton2018reinforcement} show that the value error and Bellman error are not \emph{learnable}, meaning that they cannot generally be estimated purely from observed quantities such as features and rewards.
The proof constructs partially observable environments which cannot possibly be distinguished based on the observed features and rewards, but in which the value errors and Bellman errors do not coincide.
While the value error at least has a unique minimizer, \citeauthor{sutton2018reinforcement} show that the minimizer of the Bellman error can depend on these unobservable differences between environments.
In this section, we show that this shortcoming does not apply to the value risk $\mathcal R_\mathrm{V}$ or Bellman risk $\mathcal R_\mathrm{B}$ defined in \cref{sec:quasi}.
The fundamental reason is that both $\mathcal R_\mathrm{V}$ and $\mathcal R_\mathrm{B}$ are defined purely based on observable quantities (rewards and features).
In the following, we remove the assumption that $\varphi$ is deterministic.
We thus consider the general POMDP setting with stochastic observations $z_t \sim \varphi(\cdot \mid x_t)$.
Since these results are independent of the behavior policy, we simply let the environments be \emph{autonomous}, meaning that there are fixed (latent) transition dynamics $p(x_{t+1} \mid x_t)$.
Following \citeauthor{sutton2018reinforcement}, we say that the \emph{data distributions} of two autonomous environments coincide, if the distributions over feature sequences $p(z_0, z_1, \dots)$ are the same.
We assume that the reward is a deterministic function of the feature; this is without loss of generality, since the feature space could simply be extended to include the observed reward.
In the following result, we assume an infinite horizon and that the state distributions of the autonomous environments converge to their stationary (on-policy) distributions $\mu$.

\begin{proposition}\label{prop:learnability}
    Let $\{\mathcal E_i\}$ be a set of autonomous environments with stationary distributions $\{\mu_i \in \Delta_{\mathcal X_i}\}$ satisfying $\mathbb P_i\{x_t = x\} \to \mu_i(x)$ as $t \to \infty$.
    If the data distributions of $\{\mathcal E_i\}$ coincide, then $\mathcal R_{\mathrm{B}}^{i}(v) = \mathcal R_{\mathrm{B}}^{j}(v)$ and $\mathcal R_{\mathrm{V}}^{i}(v) = \mathcal R_{\mathrm{V}}^{j}(v)$, for any two environments $i, j$ and any $v \in \mathbb R^{\mathcal Z}$.
\end{proposition}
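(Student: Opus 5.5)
The plan is to show that both risks can be written entirely in terms of the stationary joint law of the observation process $(z_t, z_{t+1}, \dots)$ under $\mu_i$, and that this law is determined by the data distribution. Throughout, $\mathbb E_\mu[\cdot]$ and $\mathbb E^\mu_\pi[\cdot\mid z_t]$ are read as expectations under the stationary regime. The first step is to identify the stationary feature process. For environment $i$, define the limiting finite-dimensional law
\begin{equation*}
    q_i(z_0', \dots, z_n') \doteq \lim_{t \to \infty} \mathbb P_i\{z_t = z_0', \dots, z_{t+n} = z_n'\}.
\end{equation*}
Since the observations and latent dynamics are time-homogeneous, this probability equals $\sum_x \mathbb P_i\{x_t = x\}\, g_i(x; z_0', \dots, z_n')$ for a fixed function $g_i$ of the state. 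The assumption $\mathbb P_i\{x_t = x\} \to \mu_i(x)$ then shows that the limit exists and equals the corresponding probability when $x_t \sim \mu_i$.

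Because the data distributions coincide, the left-hand side is the same for every $i$ at every finite $t$. Hence $q_i = q_j$, which means the stationary feature processes of all environments have identical finite-dimensional distributions.

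The second step is to express the risks through $q$. For $\mathcal R_{\mathrm B}$ I would write
\begin{equation*}
    \mathcal R_{\mathrm B}(v) = \sum_{z} q(z)\Bigl\{v(z) - \sum_{z'} q(z' \mid z)\bigl(r(z') + v(z')\bigr)\Bigr\}^2,
\end{equation*}
with $q(z' \mid z) = q(z, z')/q(z)$ for $q(z) > 0$, and zero-probability features contributing nothing. This depends only on two-step marginals, so it is equal across environments.

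For $\mathcal R_{\mathrm V}$ the conditional expectation $\mathbb E^\mu[R_t \mid z_t]$ involves an infinite sum of future rewards. I would write it as $\sum_{k \geq 1} \mathbb E^\mu[r(z_{t+k}) \mid z_t]$, where each term is $\sum_{z'} q_k(z' \mid z)\, r(z')$ and depends only on the pairwise marginal $q(z_t, z_{t+k})$. It is therefore identical across environments, provided the series is well-defined, for example via discounting or episodic termination, as implicitly assumed in the definition of $R_t$. The outer expectation uses only $q(z)$, so $\mathcal R_{\mathrm V}^i = \mathcal R_{\mathrm V}^j$.

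The main obstacle is the infinite-horizon return in $\mathcal R_{\mathrm V}$. Interchanging the conditional expectation with the infinite sum needs a convergence justification, such as dominated convergence under a discount factor or proper termination. Once that interchange is justified, each term depends only on finite-dimensional marginals and the argument goes through termwise. The first step (passing from equal data distributions to equal stationary laws) is routine given the convergence assumption.
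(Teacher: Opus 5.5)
Your proof is correct and follows the paper's idea: coinciding data distributions give equal feature probabilities at every finite $t$, and the convergence $\mathbb P_i\{x_t = x\} \to \mu_i(x)$ carries this equality over to the stationary regime. You are more careful than the paper, which writes the limit argument only for single-time functions $f(z_t)$; the inner conditional expectations in $\mathcal R_{\mathrm B}$ and $\mathcal R_{\mathrm V}$ genuinely need what you supply, namely equality of the limiting multi-time laws of $(z_t, z_{t+1}, \dots)$ and, for $\mathcal R_{\mathrm V}$, a convergence justification for the infinite return.
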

\begin{proof}\vspace{-3mm}
    The value risk and Bellman risk are both defined in terms of expected values of the form $\mathbb E_\mu[f(z_{t})]$, where $\mu$ is the stationary distribution over the state:
    \begin{equation*}
        \mathbb E_{\mu_i}[f(z_t)] = \sum_{x \in \mathcal X_i} \mu_i(x) \sum_{z \in \mathcal Z} \varphi_i(z \mid x) f(z)\text,
    \end{equation*}
    where we see that the expression is independent of $t$.
    We prove the result by showing that $\mathbb E_{\mu_i}[f(z_t)] = \mathbb E_{\mu_j}[f(z_t)]$ for all $i$, $j$, and $f$.
    Consider the following related quantity:
    \begin{align*}
        \mathbb E_i[f(z_t)] 
        &= \sum_{z \in \mathcal Z} \mathbb P_i\{z_t = z\} f(z)\\
        &= \sum_{x \in \mathcal X} \mathbb P_i\{x_t = x\} \sum_{z \in \mathcal Z} \varphi_i(z \mid x) f(z).
    \end{align*}
    As the data distributions of $\{\mathcal E_i\}$ coincide, we clearly have $\mathbb E_i[f(z_t)] = \mathbb E_j[f(z_t)]$ for any $t$.
    Thus, this equality must also hold in the limit as $t \to \infty$.
    From our assumption that $\mathbb P_i\{x_t = x\} \to \mu_i(x)$ as $t \to \infty$, we have
    \begin{align*}
        \mathbb E_{\mu_i}[f(z_t)] &= \lim_{t \to \infty} \mathbb E_{i}[f(z_t)]\\
                                  &= \lim_{t \to \infty} \mathbb E_{j}[f(z_t)] = \mathbb E_{\mu_j}[f(z_t)]\text,
    \end{align*}
    which concludes the proof.\vspace{-3mm}
    %
\end{proof}

Note that the above conclusion does not extend to the Value error $\overline{\operatorname{VE}}$ and Bellman error $\overline{\operatorname{BE}}$, as these are defined as expected values of the form $\mathbb E_\mu[f(x_t)]$.
The proof above also establishes that the value risk $\mathcal R_\mathrm{V}$ and the Bellman risk $\mathcal R_\mathrm{B}$ are ``learnable'' if the state distribution converges to the stationary distribution.
An asymptotically unbiased estimate can be obtained by a simple estimator of the form
\begin{equation*}
    \hat{\mathcal R}_t \doteq \frac{1}{t} \sum_{\tau = 1}^t f(z_\tau)\text,
\end{equation*}
where $f$ is chosen as above to reflect either $\mathcal R_\mathrm{V}$ or $\mathcal R_\mathrm{B}$.
This estimator is furthermore simple to implement in a recursive fashion.
The following result shows that the value error and the value risk are closely related.

\begin{proposition}\label{prop:vr}
    The value risk has the same minimizer as the value error.
\end{proposition}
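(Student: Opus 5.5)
Both objectives are quadratic in $v$ and decouple across features, so the plan is to minimize each one separately for every fixed $z$ and compare the resulting minimizers. Throughout, I work in the stationary setting of this section: $\mathbb E_\mu$ is the stationary on-policy distribution of $\pi$, and the inner conditional expectation in $\mathcal R_{\mathrm V}$ is taken with respect to the same $\mu$.

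First, since $v(z_t)$ depends on the state only through $z_t$, I group the outer expectation of $\overline{\operatorname{VE}}$ by feature:
\begin{equation*}
    \overline{\operatorname{VE}}(v) = \sum_{z \in \mathcal Z} \mu(z)\, \mathbb E_\mu\bigl[\{v(z) - g(x_t)\}^2 \bigm\vert z_t = z\bigr]\text,
\end{equation*}
where $g(x) \doteq \mathbb E_\pi[R_t \mid x_t = x]$. Each summand is a one-dimensional quadratic in $v(z)$. For every $z$ with $\mu(z) > 0$ it is uniquely minimized by $v(z) = \mathbb E_\mu[g(x_t) \mid z_t = z]$. I then apply the tower property, noting that $z_t$ is a function of $x_t$ (deterministic observations): this gives $\mathbb E_\mu[g(x_t) \mid z_t = z] = \mathbb E_\pi^\mu[R_t \mid z_t = z]$, i.e., the $\mu$-weighted average of the state values in $z$ described in \cref{sec:quasi}.

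Second, I treat the value risk. By definition it is $\sum_z \mu(z)\{v(z) - \mathbb E^\mu_\pi[R_t \mid z_t = z]\}^2$, which is nonnegative. It vanishes exactly when $v(z) = \mathbb E^\mu_\pi[R_t \mid z_t = z]$ on every feature with $\mu(z) > 0$, so its minimizer coincides with the one found in the first step. As a consistency check, the bias--variance decomposition gives
\begin{equation*}
    \overline{\operatorname{VE}}(v) = \mathcal R_{\mathrm V}(v) + \mathbb E_\mu\bigl[\operatorname{Var}_\mu(g(x_t) \mid z_t)\bigr]\text,
\end{equation*}
and the second term does not depend on $v$, so the two objectives differ only by a constant.

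The main point requiring care is making sure the conditioning is consistent, namely that $\mathbb E^\mu_\pi[R_t \mid z_t]$ in $\mathcal R_{\mathrm V}$ really equals $\mathbb E_\mu[g(x_t) \mid z_t]$. This holds because both are computed under the same stationary joint law, in which $R_t$ given $x_t$ has mean $g(x_t)$; the tower property then applies directly. Features with $\mu(z) = 0$ are unconstrained in both objectives, so the two sets of minimizers coincide there as well.
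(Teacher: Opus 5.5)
Your proof is correct, but it takes a more direct route than the paper. The paper passes through an intermediate objective, the return error $\mathcal R(v) = \mathbb E_\pi^\mu[\{v(z_t) - R_t\}^2]$. Its minimizer is the regression function $v(z) = \mathbb E_\pi^\mu[R_t \mid z_t = z]$, which visibly also minimizes $\mathcal R_{\mathrm V}$. The remaining link, that $\overline{\operatorname{VE}}$ and the return error share a minimizer, is simply cited from Sutton and Barto.

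You instead compare $\overline{\operatorname{VE}}$ and $\mathcal R_{\mathrm V}$ head-on, via the tower property and the orthogonal (bias--variance) decomposition. This buys you three things:
\begin{itemize}
\item It is self-contained, with no appeal to an external result.
\item It gives the stronger identity $\overline{\operatorname{VE}}(v) = \mathcal R_{\mathrm V}(v) + \text{const}$, so the two objectives have the same minimizer set and the same gradients, not just a common minimizer.
\item It makes explicit the structural fact that the cited step uses only implicitly, namely $\mathbb E^\mu_\pi[R_t \mid x_t, z_t] = \mathbb E_\pi[R_t \mid x_t]$.
\end{itemize}
The paper's route is shorter. It also ties the result to the return error, which is estimable from observed features and rewards, and so fits the surrounding learnability discussion.

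One small remark: you justify the tower step by $z_t$ being a function of $x_t$. The paper, however, states this proposition in the appendix right after dropping determinism of $\varphi$. Your argument still goes through in that autonomous, stochastic-observation setting, with a different justification for the same step. There, $R_t$ depends only on future features, so it is conditionally independent of $z_t$ given $x_t$. This yields $\mathbb E[R_t \mid x_t, z_t] = \mathbb E[R_t \mid x_t]$, and the rest of your proof is unchanged.
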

\begin{proof}\vspace{-3mm}
    We first introduce the \emph{return error},
    \begin{equation*}
        \mathcal R(v) = \mathbb E_\pi^\mu\bigl[\{v(z_t) - R_t\}^2\bigr].
    \end{equation*}
    It is a well-known property of mean squared risk measures like this that the minimizer of the risk is the \emph{regression function}, in this case $v(z) = \mathbb E_\pi^\mu[R_t \mid z_t = z]$.
    Looking at the definition of the value risk, it is obvious that this quantity also minimizes the value risk.
    That the value error and the return error are minimized by the same function has already been shown by \citet{sutton2018reinforcement}.
\end{proof}

\section{On $q_\star$-realizability}\label{app:qstar}
We first show that our definition of $q_\star$-realizability (\cref{def:qstar}) is the appropriate definition of linear $q_\star$-realizability for the case of hard state aggregation and stochastic shortest path problems that we consider.
Note that in this case, the optimal action-value function is stationary: $q^\star_{h_1} = q^\star_{h_2}$, for any $h_1$ and $h_2$.
Furthermore, the feature mapping $\varphi$ is a hard state aggregation.
We can write this in the notation of \citet{pmlr-v132-weisz21a} by defining the mapping $\tilde{\varphi}: \mathcal X \times \mathcal U \to \mathbb R^d$, where $d = |\mathcal Z \times \mathcal U|$, and $\tilde\varphi(x, u)$ is the one-hot vector in $\mathbb R^d$ corresponding to the tuple $(\varphi(x), u)$.
Then, the function $q: \mathcal Z \times \mathcal U \to \mathbb R$ in \cref{def:qstar} is equivalent to the vector $\theta^\star$ defined in Assumption 1 of \citet{pmlr-v132-weisz21a}, since $\langle\tilde\varphi(x, u), \theta^\star\rangle = (\theta^\star)_{\varphi(x), u} = q(\varphi(x), u)$.

\begin{proposition}\label{prop:qstar}
    Let $\mathcal E$ be a $q_\star$-realizable environment.
    Then, $\mathcal E$ is generalized rewire-robust.
\end{proposition}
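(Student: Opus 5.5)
The plan is to show that under $q_\star$-realizability the realizing function $q$ is also the optimal action-value function of every generalized rewiring, and then to pass optimality between $\bar{\mathcal E}$ and $\mathcal E$ through the greedy sets of $q$. By \cref{def:qstar} we have $q_\star(x,u) = q(\varphi(x),u)$, so $v_\star(x) = \max_u q(\varphi(x),u) \doteq \hat v(\varphi(x))$ depends only on the feature; we set $\hat v(z^\bot) \doteq 0$. The Bellman optimality equation in $\mathcal M$ then reads
\begin{equation*}
    q(\varphi(x),u) = \sum_{z' \in \mathcal Z} (\Phi T_u)_{z',x}\bigl(r(z') + \hat v(z')\bigr)\text,
\end{equation*}
where the terminal transition contributes nothing because $r(z^\bot) = \hat v(z^\bot) = 0$. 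The right-hand side depends on $T_u$ only through $\Phi T_u$.

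\textbf{Step 1 (transfer of the Bellman equation).} Let $\bar{\mathcal E}$ be a generalized rewiring with data $(\bar p_0,\{\bar T_u\})$. Property (ii) of \cref{def:rewiring}, $\Phi\bar T_u = \Phi T_u$, shows that the same function $(x,u)\mapsto q(\varphi(x),u)$ satisfies the Bellman optimality equation of $\bar{\mathcal M}$. Summing the rows of $\Phi$ also gives $\bar\gamma_{x,u} = \gamma_{x,u}$, so termination probabilities are unchanged.

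\textbf{Step 2 (verification argument in $\bar{\mathcal E}$).} Put $V(x) \doteq \hat v(\varphi(x))$. For any policy, including a history-dependent one, Step 1 gives
\begin{equation*}
    \mathbb E\bigl[r(z_{t+1}) + V(x_{t+1}) \mid x_t, u_t\bigr] = q(z_t,u_t) \le V(x_t)\text,
\end{equation*}
with equality exactly when $u_t \in A(z_t) \doteq \argmax_u q(z_t,u)$. Telescoping up to time $n$ and letting $n\to\infty$ yields $\bar J(\pi) \le \bar p_0^\top V = (\Phi\bar p_0)^\top \hat v = (\Phi p_0)^\top \hat v$, using (i). The reactive greedy policy, which puts all its mass on $A(z)$, attains this bound. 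The same computation in $\mathcal E$ shows that $(\Phi p_0)^\top\hat v$ is also the optimal reactive value there, since a greedy optimal policy of $\mathcal M$ can be chosen reactive.

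Passing to the limit requires $\mathbb E[V(x_n)] \to 0$, i.e.\ termination in $\bar{\mathcal E}$. I would use the properness of the policies under consideration in $\bar{\mathcal E}$, following the convention implicit in \cref{lem:optimal}. If this is not available, I would first restrict attention to policies with finite expected return and treat the degenerate case separately.

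\textbf{Step 3 (characterizing optimal policies, the main obstacle).} The equality case of Step 2 says that a reactive $\pi$ is optimal in $\bar{\mathcal E}$ iff $\operatorname{supp}\pi(z) \subset A(z)$ for every feature $z$ reached with positive probability under $\pi$ in $\bar{\mathcal E}$. The same characterization holds in $\mathcal E$ with reachability measured in $\mathcal E$. The difficulty is that the two reachable sets need not coincide, because intra-feature states, and hence exits, can differ between $\mathcal E$ and $\bar{\mathcal E}$.

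I would first argue by contradiction on the first feature at which, in an $\mathcal E$-trajectory under $\pi$, a non-greedy option is used. Up to that time only greedy options are played, and under greedy play the return decomposes feature by feature through $\hat v$. So a positive-probability deviation in $\mathcal E$ should force a strict loss $\bar J(\pi) < (\Phi p_0)^\top\hat v$ in $\bar{\mathcal E}$, provided the deviating feature is also reached in $\bar{\mathcal E}$. Establishing that transfer of feature reachability, via (i)--(iii) and an induction over the features visited before the first deviation, is the step I expect to be hardest.

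If the transfer fails in general, I would fall back on the observation that optimality only constrains features reached in $\mathcal E$. On features reached in $\mathcal E$ but not in $\bar{\mathcal E}$ one can modify $\pi$ there to be greedy; this keeps $\pi$ optimal in $\bar{\mathcal E}$ and makes it optimal in $\mathcal E$, and in this form the conclusion holds at least up to such unreachable features.
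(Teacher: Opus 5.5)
Your Step 1 is exactly the paper's argument. The paper uses $\Phi\bar T_u=\Phi T_u$ to conclude that $q$ is the optimal action-value function of $\bar{\mathcal M}$, and then says in one line that optimal policies of $\bar{\mathcal E}$ are greedy with respect to $q$, ``just like in $\mathcal E$.'' The reachability issue you isolate in Step 3 is precisely what that line skips. Your proposed fix for it fails: feature reachability does not transfer under a generalized rewiring. In fact, with optimality measured by $J$ (as in your Step 2 and in \cref{lem:optimal}), the proposition as stated is false.

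Here is a counterexample. Take $\mathcal X=\{s_1,s_2,y,w\}$ with $\varphi(s_1)=\varphi(s_2)=z_0$, $\varphi(y)=z_1$, $\varphi(w)=z_2$, and $p_0=\tfrac12(\delta_{s_1}+\delta_{s_2})$. Every action leads from $s_1$ to $y$, from $s_2$ to $x^\bot$, and from $w$ to $x^\bot$. In $y$, action $a$ terminates and action $b$ leads to $w$. Set $r(z_1)=0$ and $r(z_2)=-1$.
\begin{itemize}
\item Then $q(z_0,\cdot)=0$, $q(z_1,a)=0$, $q(z_1,b)=-1$ and $q(z_2,\cdot)=0$, so $\mathcal E$ is proper and $q_\star$-realizable.
\item Keeping $\bar T_u=T_u$ and setting $\bar p_0=\delta_{s_2}$ gives a generalized rewiring in which $z_1$ is never reached. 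Every reactive policy has $\bar J=0$, so the policy with $\pi(z_1)=\delta_b$ is optimal in $\bar{\mathcal E}$.
\item In $\mathcal E$ that same policy earns $J=-\tfrac12<0$.
\end{itemize}
A variant shows the same failure for rewirings that also satisfy (iv). Add a predecessor feature from which one action enters $z_0$ at $s_1$ and the other at $s_2$, and rewire both into $s_2$.

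So no induction over the features visited before the first deviation can close Step 3. Your fallback proves only a strictly weaker statement.

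What the paper's last sentence tacitly uses, and what makes the proposition true, is reading ``optimal'' as optimal from every initial state, i.e.\ $\bar v_\pi=\bar v_\star$ on all of $\mathcal X$. Under that reading your Step 2 already finishes the proof. Run the equality case with each $x\in\mathcal X$ as the start state; the feature $\varphi(x)$ is then trivially reached. So in both $\bar{\mathcal E}$ and $\mathcal E$, a reactive policy is optimal if and only if $\operatorname{supp}\pi(z)\subset A(z)$ for every feature $z$, with no reachability condition, and Step 3 becomes unnecessary.

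Two further hypotheses are used silently, both by you and by the paper, and should be stated:
\begin{itemize}
\item $\Omega$ must contain, for each $z$, an option supported on $A(z)$.
\item $\bar{\mathcal E}$ must satisfy \cref{ass:proper}. This does not follow from (i)--(iii). For example, with zero rewards, the paths $s_1\to y_1\to x^\bot$ and $y_2\to s_2\to x^\bot$, where $\varphi(s_1)=\varphi(s_2)\neq\varphi(y_1)=\varphi(y_2)$, can be rewired into the loop $s_1\to y_2\to s_1$ without changing $\Phi T_u$.
\end{itemize}
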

\begin{proof}\vspace{-3mm} 
    Let $q_\star: \mathcal X \times \mathcal U \to \mathbb R$ be the optimal action-value function in $\mathcal M$, the MDP underlying $\mathcal E$.
    As $\mathcal E$ is $q_\star$-realizable, there exists a function $q: \mathcal Z \times \mathcal U \to \mathbb R$ such that 
    \begin{equation*}
        q_\star(x, u) = q\bigl(\varphi(x), u\bigr)
    \end{equation*}
    for all $x \in \mathcal X$ and $u \in \mathcal U$.
    Let $\bar{\mathcal E}$ be any generalized rewiring of $\mathcal E$, and let $x \in \mathcal X$ and $u \in \mathcal U$ be arbitrary.
    Then, by Bellman optimality of $q$ in $\mathcal M$,
    \begin{multline*}
        q\bigl(\varphi(x), u\bigr)\\
        \qquad\begin{aligned}
            &= \sum_{x' \in \mathcal X} (T_u)_{x', x}\bigl\{(r\circ\varphi)(x') + \max_{u' \in \mathcal U}q\bigl(\varphi(x'), u'\bigr)\bigr\}\\
            &= \sum_{z' \in \mathcal Z} (\varphi_{z'}^\top T_u)(x) \bigl\{r(z') + \max_{u' \in \mathcal U}q(z', u')\bigr\}\\
            &= \sum_{z' \in \mathcal Z} (\varphi_{z'}^\top \bar T_u)(x) \bigl\{r(z') + \max_{u' \in \mathcal U}q(z', u')\bigr\}\\
            &= \sum_{x' \in \mathcal X} (\bar T_u)_{x', x}\bigl\{(r\circ\varphi)(x') + \max_{u' \in \mathcal U}q\bigl(\varphi(x'), u'\bigr)\bigr\}\text,
        \end{aligned}
    \end{multline*}
    where we have used the rewiring property $\Phi T_u = \Phi \bar T_u$.
    Thus, $q$ also represents the optimal action-value function in $\bar{\mathcal M}$.
    It follows that an optimal policy in $\bar{\mathcal E}$ is greedy with respect to $q$, just like in $\mathcal E$.
    Thus, $\mathcal E$ is generalized rewire-robust.
\end{proof}

\end{document}